\def\eqref#1{equation~\ref{#1}}
\def\1{\bm{1}}
\DeclareMathAlphabet{\mathsfit}{\encodingdefault}{\sfdefault}{m}{sl}
\SetMathAlphabet{\mathsfit}{bold}{\encodingdefault}{\sfdefault}{bx}{n}
\newtheorem{theorem}{Theorem}[section]
\newtheorem{proposition}[theorem]{Proposition}
\newtheorem{lemma}[theorem]{Lemma}
\newtheorem{definition}[theorem]{Definition}
\crefname{theorem}{Theorem}{Theorems}
\crefname{proposition}{Proposition}{Propositions}
\crefname{lemma}{Lemma}{Lemmas}
\crefname{corollary}{Corollary}{Corollaries}
\crefname{definition}{Definition}{Definitions}
\newcommand{\revision}{black}
\newcommand*\samethanks[1][\value{footnote}]{\footnotemark[#1]}
\title{Discrete Diffusion Schr\"odinger Bridge Matching for Graph Transformation}
\author{Jun Hyeong Kim$^{1}$\thanks{Equal contribution},\quad
Seonghwan Kim$^{1}$\samethanks[1],\quad
Seokhyun Moon$^{1}$\samethanks[1],\quad
Hyeongwoo Kim$^{1}$\samethanks[1],\quad
\\
\textbf{Jeheon Woo}$^{1}$\samethanks[1],\quad
\textbf{Woo Youn Kim$^{1}$}\thanks{Corresponding author}
\\$^{1}$KAIST\quad
\\\texttt{\{junhkim1226,dmdtka00,mshmjp,novainco98\}@kaist.ac.kr}
\\\texttt{\{woojh,wooyoun\}@kaist.ac.kr}
}
\begin{document}
\maketitle

%

\begin{abstract}
Transporting between arbitrary distributions is a fundamental goal in generative modeling.
Recently proposed diffusion bridge models provide a potential solution, but they rely on a joint distribution that is difficult to obtain in practice.
Furthermore, formulations based on continuous domains limit their applicability to discrete domains such as graphs.
To overcome these limitations, we propose Discrete Diffusion Schr\"odinger Bridge Matching (DDSBM), a novel framework that utilizes continuous-time Markov chains to solve the SB problem in a high-dimensional discrete state space.
Our approach extends Iterative Markovian Fitting to discrete domains, and we have proved its convergence to the SB.
Furthermore, we adapt our framework for the graph transformation and show that our design choice of underlying dynamics characterized by independent modifications of nodes and edges can be interpreted as the entropy-regularized version of optimal transport with a cost function described by the graph edit distance.
To demonstrate the effectiveness of our framework, we have applied DDSBM to molecular optimization in the field of chemistry.
Experimental results demonstrate that DDSBM effectively optimizes molecules' property-of-interest with minimal graph transformation, successfully retaining other features. Source code is available at \href{https://github.com/junhkim1226/DDSBM}{here}.
\end{abstract}
\section{Introduction}
Transporting an initial distribution to a target distribution is a foundational concept in modern generative modeling.
Denoising diffusion models (DDMs) have been highly influential in this area, with a primary focus on generating data distributions from simple prior \citep{sohl2015diffusion, song2019score, ho2020DDPM, song2020score, kim2024diffusion}.
Despite their promising results, setting the initial distribution as a simple prior makes DDMs hard to work in tasks where the initial distribution becomes a data distribution, such as image-to-image translation.
To tackle this, diffusion bridge models (DBMs) extend DDMs to transport data between arbitrary distributions \citep{liu2023learning,Liu2023I2SB,zhou2023ddbm}.
However, training DBMs requires a coupling between the initial and target distributions, which is often difficult to obtain in practice.
The Schr\"odinger Bridge (SB) provides an attractive framework for constructing a joint distribution of two data distributions while aligning with the underlying stochastic dynamics \citep{pariset2023unbalanced,kim2023unpaired,dong2024cunsb}.

Formally, the SB problem seeks the stochastic process that connects two distributions and is closest to a reference process, measured by the Kullback-Leibler (KL) divergence \citep{schrodinger1932theorie}.
The SB problem can be described as an entropy-regularized optimal transport (EOT) problem, which introduces an entropy term to the optimal transport (OT) objective, resulting in randomness in the transport process \citep{leonard2013survey}.
Here, the transportation cost is determined by the system’s natural dynamics; for example, in the case of Brownian motion, the transportation cost becomes $L^2$ \citep{de2021diffusion}.
The SB/EOT can be computed efficiently using the Sinkhorn algorithm, though high-dimensional or large-scale data applications remain challenging \citep{sinkhorn1967diagonal, cuturi2013sinkhorn}.
In recent, many methods have been proposed to approximate SB via distribution learning, utilizing techniques developed in DBM and DDM \citep{de2021diffusion, liu2022generalSB, Somnath2023alignedSB, Liu2023I2SB, peluchetti2023diffusion, shi2024diffusion, Lee2024disco}.

Despite the progress, most of the methods focus on the continuous spaces, where diffusion processes are represented by Brownian motion, and SB problems in discrete domains are less explored.
It is particularly significant in fields that handle discrete state data, such as graphs or natural language \citep{austin2021D3PM, vignac2022digress}.
Directly applying frameworks for approximating SB formulated in continuous spaces to these domains limits its potential since it does not reflect the intrinsic properties of discrete data space.
To bridge the gap, we propose a novel framework called Discrete Diffusion Schr\"odinger Bridge Matching (DDSBM) utilizing the continuous-time Markov chains (CTMCs) to solve the SB problem in a high-dimensional discrete setting.
Our approach leverages Iterative Markovian Fitting (IMF), which was originally proposed for the SB problem in a continuous domain \citep{peluchetti2023diffusion, shi2024diffusion}.

We then extend our formulation to the graph domain, where the underlying process can be interpreted as independent modifications to both the nodes and edges \citep{vignac2022digress}.
In this case, the cost function of the corresponding EOT can then be regarded as the graph edit distance (GED), which is especially suited for systems where preserving graph similarity is crucial \citep{sanfeliu1983distance}.
The molecular optimization in drug/material discovery is such a case in that molecules are represented as graphs.
In addition, the goal is to obtain the molecules with desired molecular properties while retaining favorable properties in acclaimed molecules.
Since molecular structures are closely related to their properties, it is highly advantageous to minimize structural changes (or graph editing) during the optimization.

\begin{figure*}[t!]
 \centering
 \includegraphics[width=0.90\textwidth]{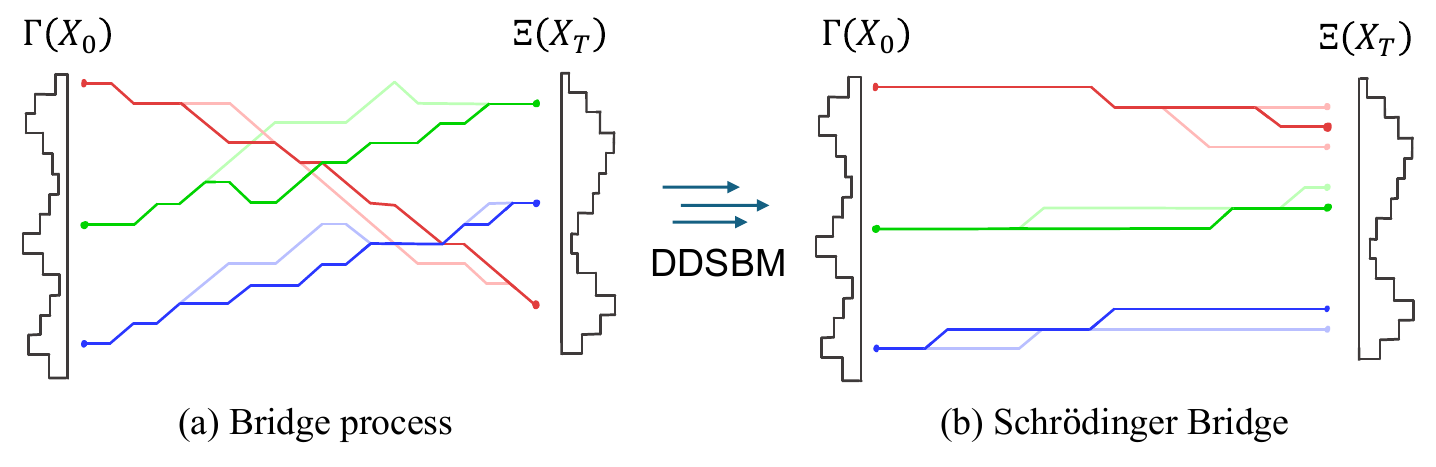}
 \caption{
 A schematic illustration of DDSBM transforming (a) bridge process to (b) Schr\"odinger Bridge in discrete spaces.
 }
 \label{figure:overall_figure}
\end{figure*}

To validate our framework, we evaluated the performance of DDSBM on molecular optimization tasks, with criteria of demonstrating optimal structural modifications to achieve desired property.
DDSBM shows promising results in molecular distribution shift with minimum structural change compared to the previous graph-to-graph translation models.
As a direct result of this, DDSBM retains multiple properties of the initial molecules, along with desired property.
Lastly, we applied DDSBM for a more challenging task, where proper joint pairing between two molecular spaces can not be defined properly.

Our contributions are as follows: 
\begin{itemize} 
    \item We propose a novel framework, DDSBM, for the SB problem in discrete state spaces by exploiting the IMF procedure and prove its convergence to the SB solution.
    \item We extend our framework to the graph domain, demonstrating a connection between the objective function and the GED. 
    \item By reformulating molecular optimization as the SB problem, we show that our approach successfully addresses molecular optimization tasks. 
\end{itemize}
\section{Related Works}

\textbf{Diffusion Bridge Models.}
Diffusion bridge models (DBMs) have recently shown state-of-the-art results in a variety of continuous domains, such as images, biology, and chemistry \citep{liu2023learning,Liu2023I2SB,Somnath2023alignedSB,zhou2023ddbm, Lee2024disco}.
While \citet{igashov2023retrobridge, yang2023diffsound} have extended these models to discrete domains, they focused on settings where well-defined data pairs exist.
To the best of our knowledge, we are the first to study DBM in discrete domains where proper joint distributions between data points are not provided or well-defined.

\textbf{Schr\"odinger Bridge Problem.}
The Schr\"odinger Bridge (SB) problem is an important concept in recent generative modeling \citep{liu2022generalSB, Somnath2023alignedSB, peluchetti2023diffusion, shi2024diffusion}.
In particular, incorporating the SB problem into DBM can address scenarios where no appropriate joint distribution is available, as demonstrated by recent works \citep{pariset2023unbalanced,kim2023unpaired,dong2024cunsb}.
For example, \citet{Somnath2023alignedSB} proposed learning an SB based on an assumed (partial) true coupling, while \citet{shi2024diffusion} showed that it is possible to generate high-quality samples from arbitrary couplings that are well-aligned with the initial data.
However, most existing approaches focus on continuous spaces. 
To the best of our knowledge, we are the first to propose a framework for solving the SB problem in high-dimensional discrete spaces.

\textbf{Molecular Optimization.}
Molecular optimization is a promising strategy in drug/material discovery that aims to improve acclaimed molecules to satisfy multiple domain-specific properties.
One major approach to the molecular optimization problem is to formulate it as a graph transformation problem, which can be categorized into latent-based and graph-editing approach.
The latent based approaches such as JT-VAE by \citet{jin2018JT-VAE} and HierG2G by \citet{jin2020HierG2G} encode an input graph into a single latent vector and then decode it into a whole graph that follows a certain data distribution.
The graph-editing approaches such as Modof by \citet{chen2021ModOf} and DeepBioisostere by \citet{kim2024deepbioisostere} learn a graph editing procedure to transport a given molecule to another.
Despite their promising results in optimizing molecular graphs, their training schemes rely on paired data created by predefined rules, which limits not only the general applicability but also the optimality of the transformations.
In this work, we propose a more flexible framework for molecular optimization by formulating it as a graph SB problem, leading to more optimal graph transformation accompanying less structural changes.

\section{Theoretical Background}

\subsection{Schr\"odinger Bridge Problem}
\color{\revision}
Consider the standard Brownian motion $(X_t)_{0\le t\le \tau}$ defined in Euclidean space, taken as a reference process with a given transition density.
Given an initial distribution $\Gamma$ for $X_0$, the distribution for $X_\tau$ is subsequently determined by $\Gamma$ and the transition density of the reference process.
Suppose we not only prescribe the initial distribution $\Gamma$ but also specify a terminal distribution $\Xi$ at $t=\tau$.
In this scenario, the reference process $X_t$ generally fails to hold the boundary conditions.
Schr\"odinger Bridge (SB) problem is finding the process that closely resembles the reference process while satisfying the boundary conditions on initial and terminal distributions.
For an intuitive introduction to the SB problem, please refer to \cref{appendix.sb_intro,appendix.sb_toy}.

\color{black}
Specifically, the reference process with the path measure $\mathbb{Q}$ is given, the SB problem is finding a process with path measure $\mathbb{P}$ by minimizing Kullback-Leibler (KL) divergence to the reference process, $D_{\text{KL}}(\mathbb{P}\Vert\mathbb{Q})$.
The SB solution is characterized as below:
\begin{equation}\label{eq:1 DSB definition}
    \mathbb{P}^{\text{SB}}= \arg\min_{\mathbb{P}} \{ D_{\text{KL}}(\mathbb{P}\Vert\mathbb{Q}): \mathbb{P}_{0}=\Gamma, \mathbb{P}_{\tau}=\Xi\}.
\end{equation}

If we additionally fix the initial and terminal coupling (joint distribution) $\mathbb{P}_{0,\tau}$, the optimality can be found easily as a mixture of bridges $\mathbb{P}_{0,\tau}\mathbb{Q}_{\cdot\vert 0,\tau}$ (see formal definition at \cref{def:reciprocal projection}), which implies that finding the SB solution is equivalent to finding optimal coupling $\mathbb{P}_{0,\tau}^{\text{SB}}$ \citep{leonard2013survey}.
In particular, such optimal coupling is called static SB solution, which could be defined as follows:
\begin{equation}\label{eq:2 SSB definition}
    \mathbb{P}_{0,\tau}^{\text{SB}}=\arg\min_{\mathbb{P}_{0,\tau}}\{D_{\text{KL}}(\mathbb{P}_{0,\tau}\Vert \mathbb{Q}_{0,\tau}): \mathbb{P}_{0}=\Gamma, \mathbb{P}_{\tau}=\Xi\}.
\end{equation}

Note that the KL-divergence is decomposed into the entropy term $H(\mathbb{P}_{0,\tau})$ and the cross-entropy term $\mathbb{E}_{\mathbb{P}_{0,\tau}}\left[-\log q(x_\tau \vert x_0)\right]$, where $q$ denotes the transition density of $\mathbb{Q}_{\tau\vert 0}$.
The transition density in cross-entropy term is $L^2$ distance when the reference process is the standard Brownian motion.
In general, the static SB problem is equivalent to the entropy-regularized optimal transport (EOT) problem with the cost function $c(x, y)=-\log q_{\tau\vert 0}(y \vert x)$ \citep{leonard2013survey}.

\subsection{Stochastic process over discrete space}
\color{\revision}
\color{\revision}
To extend the SB problem to discrete state spaces, a suitable stochastic process is necessary.
Unlike continuous spaces, where Brownian motion serves as the canonical reference process, discrete spaces require a distinct approach that aligns with their structure.

Let the state space $(\mathcal{X}, d_\mathcal{X})$ be a finite metric space, where $\mathcal{X}$ is the finite set of states, and $d_\mathcal{X}$ represents the distance between states.
The corresponding path space $\Omega=D([0,\tau], \mathcal{X})$ consists of all left-limited and right-continuous (c\'adl\'ag) paths over $\mathcal{X}$ within the time interval $[0,\tau]$. 
A path $\omega\in\Omega$ represents a sequence of states indexed by time, where $\omega_t\in\mathcal{X}$ denotes the state at time $t$.
The space of path measures is denoted $\mathcal{P}(\Omega)$ and the set of Markov path measures is denoted as $\mathcal{M}\subset\mathcal{P}(\Omega)$.

In this work, we model the reference path measure for the SB problem, $\mathbb{Q}\in\mathcal{M}$, as continuous-time Markov chains (CTMCs).
A CTMC describes the stochastic evolution over $\mathcal{X}$ in continuous time, which is characterized by transition rates.
The transition rate $A_t(x,y)$ defines the instantaneous rate of transition from $x$ to $y$ at time $t$. 
This rate satisfies $A_t(x,y) \ge 0$ for $x\ne y$, and $\sum_{y\in\mathcal{X}}A_t(x, y)=0$.
The time evolution of the stochastic process is governed by the Kolmogorov forward equation:
\begin{equation}\label{eq:Kolmogorov_forward}
    \frac{\partial P_{s:t}(x,y)}{\partial t}=\sum_{z\in\mathcal{X}}P_{s:t}(x,z)A_{t}(z,y),
\end{equation}
where $P_{s:t}(x,y)$ is the probability of transitioning from $x$ at time $s$ to $y$ at time $t$. 
To ensure that the reference process $\mathbb{Q}$ is suitable for constructing the SB problem, we assume that the reference process is an irreducible CTMC, meaning that every state in $\mathcal{X}$ can be reached from any other state.
More specifically, for any pair of states $x, y\in\mathcal{X}$, $\mathbb{Q}_{\tau\vert 0}(y\vert x):=P_{0:\tau}(x, y) >0$.
\color{black}

\subsection{\textcolor{\revision}{Extension of Iterative Markovian Fitting method toward Discrete space}}
\label{subsec:Extension of Iterative markov fitting method toward Discrete space}

\color{\revision}
Recently, there have been many studies to solve the SB problem with diffusion processes on smooth manifolds using denoising score matching \citep{peluchetti2023diffusion, shi2024diffusion}.
We refer to \cref{appendix.sb_solv} for a brief description of the method.
While these approaches primarily focus on continuous spaces, we here aim to address the SB problem in discrete spaces, specifically dealing with left limited and right continuous (c\`adl\`ag) paths over finite state spaces.
While it is known that the SB problem can be solved in finite state spaces \citep{sinkhorn1967diagonal, cuturi2013sinkhorn, chow2021discrete}, existing methods face limitations when handling high-dimensional or large-scale data and are not suitable for generative modeling applications.
For an intuitive explanation and our insights into the discrete SB problem, refer to \cref{appendix.sb_discrete}.

Overcoming these challenges, we extend the Iterative Markovian Fitting (IMF) method, originally formulated on continuous diffusion processes, to the discrete setting of Markov chains over finite state spaces.
The IMF method and projection operations were introduced in previous work \citep{peluchetti2023diffusion, shi2024diffusion}; we adopt these concepts without conceptual changes.
Our contribution lies in providing the theoretical extension toward the discrete setting, which to the best of our knowledge, has not been previously established. 

In this subsection, we introduce the previously developed IMF method and prove a convergence theorem specific to the discrete-state problem setting, with detailed proofs provided in \cref{appendix.a}.
Despite extending the IMF method to discrete spaces, our approach is still fundamentally rooted in the intrinsic properties of the SB solution.
Specifically, we rely on the representation of the unique Markov measure as a mixture of bridges $\mathbb{P}_{0,\tau}\mathbb{Q}_{\cdot \vert 0,\tau}$ (see \cref{theorem:SB solution}).
\color{black}

According to \cref{theorem:SB solution}, the SB solution is a mixture of pinned-down measures of $\mathbb{Q}(\cdot\vert X_0=x_0, X_\tau=x_\tau)$, where the pair $(x_0, x_\tau)$ is drawn from the coupling $\mathbb{P}_{0,\tau}^{\text{SB}}$.
Based on this, the projection method first constructs a reciprocal measure, which is a mixture of pinned-down processes from a given initial coupling.
Although each pinned-down process is Markov, the mixture generally loses the Markov property in general as a collection of Markov processes is non-convex \citep{leonard2014reciprocal}.
Thus, it then identifies the Markov measure that is closest to the mixture.
This yields an improved coupling, and the process is iterated to obtain a measure that is both the mixture of pinned-down measures and Markov---the desired SB solution.

In this context, we define the reciprocal projection to describe the construction of a reciprocal mixture from a given coupling (see \cref{def:reciprocal projection}). 
Similarly, the term Markov projection is used to describe the approximation of a reciprocal process with a Markov process (see \cref{def:Markov projection}).
\begin{definition}\label{def:reciprocal projection}
    (Reciprocal Projection)\\
    $\Lambda \in \mathcal{P}(\Omega)$ is in the reciprocal class $\mathcal{R}(\mathbb{Q})$ with respect to a Markov measure $\mathbb{Q}$ if $\Lambda=\Lambda_{0,\tau}\mathbb{Q}_{\vert 0,\tau}$.
    For a measure $\Lambda \in \mathcal{P}(\Omega)$, its reciprocal projection with respect to $\mathbb{Q}$ is 
    $$\Pi_{\mathcal{R}(\mathbb{Q})}(\Lambda)(\cdot):=\iint_{(\cdot)} \Lambda(dx_{0}, dx_{\tau}) \mathbb{Q}(dx_{t}\vert x_{0},x_{\tau}).$$
\end{definition}
Among the measures with the coupling $\Lambda_{0,\tau} \ne \mathbb{P}^{\text{SB}}_{0,\tau}$, the minimizer of the KL-divergence to the reference process is the (non-Markov) reciprocal projection $\Pi_{\mathcal{R}}(\Lambda)$. 
The reciprocal projection consists of a mixture of bridges, where each bridge is derived from Doob’s h-transform with the realization of an end-point pair $(x, y) \sim \Lambda_{0,\tau}$ \citep{levin2017markov}. 
Obviously, it preserves the initial-terminal coupling. 
Although each pinned-down bridge is Markov (see \cref{lemma:h-transform}), the mixture is generally not Markov.
\begin{definition}\label{def:Markov projection}
    (Markov Projection) \\
    Given a path measure $\Lambda\in \mathcal{R}(\mathbb{Q})$, a Markov path measure that minimizes the reverse KL-divergence to $\Lambda$ is called as Markov projection of $\Lambda$,  
    $$\Pi_{\mathcal{M}}(\Lambda)=\arg\min_{M}\left\{D_{\text{KL}}(\Lambda\Vert M): M\in \mathcal{M}\right\}.$$
\end{definition}
The Markov projection preserves the marginal distribution at all times $t$, but does not preserve the coupling.
Furthermore, the generator of the projected Markov measure and the reverse KL-divergence $D_{\text{KL}}(\Lambda\Vert \Pi_{\mathcal{M}}(\Lambda))$ are explicitly derived in the \cref{prop:Markov projection}.

For a given reciprocal process $\Lambda^{(0)}\in\mathcal{R}(\mathbb{Q})$, we consider a sequence $(\Lambda^{(n)})_{n\in\mathbb{N}}$ which is defined by the recurrence relation:
\begin{align} \label{eq:iterative projection of measures}
    \Lambda^{(2n+1)}&=\Pi_{\mathcal{M}}(\Lambda^{(2n)}),\\ \nonumber
    \Lambda^{(2n+2)}&=\Pi_{\mathcal{R}(\mathbb{Q})}(\Lambda^{(2n+1)}), \nonumber
\end{align}
where $\Lambda_{0}^{(0)}=\Gamma$ and $\Lambda_{\tau}^{(0)}=\Xi$.
Under mild assumptions, the resulting sequence of measures of iterative projection converges to $\mathbb{P}^{\text{SB}}$ in law (see \cref{theorem:convergence,appendix.a.proof of convergence}).

\begin{theorem}\label{theorem:convergence}
    (Convergence of Iteration)\\
    Assume that $D_{\text{KL}}(\Lambda^{(0)}_{0,\tau}\Vert \mathbb{P}^{\text{SB}}_{0, \tau})< \infty$, $\Lambda^{(n)}\ll \mathbb{P}^{\text{SB}}$ for all $n\in \mathbb{N}$. 
    Then the sequence of KL-divergence to $\mathbb{P}^{\text{SB}}$ is non-increasing,
    \begin{equation}
        D_{\text{KL}}(\Lambda^{(2n)}\Vert \mathbb{P}^{\text{SB}}) \geq D_{\text{KL}}(\Lambda^{(2n +1)}\Vert \mathbb{P}^{\text{SB}}) \geq D_{\text{KL}}(\Lambda^{(2n + 2)}\Vert \mathbb{P}^{\text{SB}}). \nonumber
    \end{equation}
    $D_{\text{KL}}(\Lambda^{(2n)}\Vert \mathbb{P}^{\text{SB}}) = D_{\text{KL}}(\Lambda^{(2n +1)}\Vert \mathbb{P}^{\text{SB}})$ if and only if $\Lambda^{(2n)}=\Lambda^{(2n+1)}=\mathbb{P}^{\text{SB}}$.
    Moreover, $\Lambda^{(n)}$ converges to $\mathbb{P}^{\text{SB}}$ in law as $n \to \infty$.
\end{theorem}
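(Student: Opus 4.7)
The plan is to exploit the fact that $\mathbb{P}^{\text{SB}}$ lies in the intersection $\mathcal{M}\cap\mathcal{R}(\mathbb{Q})$ (as established in \cref{theorem:SB solution}), so that each iterate $\Lambda^{(n+1)}$ is a KL projection onto a set that already contains $\mathbb{P}^{\text{SB}}$. Each projection then admits a Pythagorean-type identity with $\mathbb{P}^{\text{SB}}$ as reference, and the monotone non-increase of $D_{\text{KL}}(\Lambda^{(n)}\Vert\mathbb{P}^{\text{SB}})$ will follow by telescoping.

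First I would establish the two Pythagorean identities. For any $M\in\mathcal{M}$, the explicit generator-level characterization of $\Pi_{\mathcal{M}}(\Lambda)$ from \cref{prop:Markov projection}, combined with the pathwise chain rule for KL on c\`adl\`ag trajectories (an initial-marginal term plus an intensity-based rate integral), should yield
\begin{equation}
D_{\text{KL}}(\Lambda\Vert M) \,=\, D_{\text{KL}}(\Lambda\Vert\Pi_{\mathcal{M}}(\Lambda)) + D_{\text{KL}}(\Pi_{\mathcal{M}}(\Lambda)\Vert M). \nonumber
\end{equation}
For any $R\in\mathcal{R}(\mathbb{Q})$, disintegrating along $\sigma(X_0,X_\tau)$ and using that both $R$ and $\Pi_{\mathcal{R}(\mathbb{Q})}(\Lambda)$ share the same bridge kernel $\mathbb{Q}_{\cdot\vert 0,\tau}$, the bridge-level cross-term cancels and one obtains
\begin{equation}
D_{\text{KL}}(\Lambda\Vert R) \,=\, D_{\text{KL}}(\Lambda\Vert\Pi_{\mathcal{R}(\mathbb{Q})}(\Lambda)) + D_{\text{KL}}(\Pi_{\mathcal{R}(\mathbb{Q})}(\Lambda)\Vert R). \nonumber
\end{equation}
Setting $M=R=\mathbb{P}^{\text{SB}}$ in each identity and substituting into the recursion \eqref{eq:iterative projection of measures} gives
\begin{equation}
D_{\text{KL}}(\Lambda^{(n)}\Vert\mathbb{P}^{\text{SB}}) - D_{\text{KL}}(\Lambda^{(n+1)}\Vert\mathbb{P}^{\text{SB}}) \,=\, D_{\text{KL}}(\Lambda^{(n)}\Vert\Lambda^{(n+1)}) \,\geq\, 0, \nonumber
\end{equation}
which is the claimed monotonicity. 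The equality case then drops out: if $D_{\text{KL}}(\Lambda^{(2n)}\Vert\Lambda^{(2n+1)})=0$ then $\Lambda^{(2n)}=\Lambda^{(2n+1)}\in\mathcal{M}\cap\mathcal{R}(\mathbb{Q})$, and since the reciprocal projection preserves the endpoint coupling the marginals remain $\Gamma,\Xi$; under the hypothesis $\Lambda^{(n)}\ll\mathbb{P}^{\text{SB}}$ the uniqueness of an element of $\mathcal{M}\cap\mathcal{R}(\mathbb{Q})$ with prescribed endpoint marginals forces $\Lambda^{(2n)}=\mathbb{P}^{\text{SB}}$.

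For convergence in law I would argue as follows. The non-negative non-increasing sequence $D_{\text{KL}}(\Lambda^{(n)}\Vert\mathbb{P}^{\text{SB}})$ converges, so the telescoped gaps satisfy $D_{\text{KL}}(\Lambda^{(n)}\Vert\Lambda^{(n+1)})\to 0$, and Pinsker's inequality yields $\Vert\Lambda^{(n)}-\Lambda^{(n+1)}\Vert_{TV}\to 0$. The uniform bound $D_{\text{KL}}(\Lambda^{(n)}\Vert\mathbb{P}^{\text{SB}})\leq D_{\text{KL}}(\Lambda^{(0)}\Vert\mathbb{P}^{\text{SB}})<\infty$ gives tightness on $\mathcal{P}(\Omega)$, so any subsequence admits a weakly convergent further subsequence with some limit $\Lambda^{\infty}$. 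The even-indexed iterates lie in $\mathcal{R}(\mathbb{Q})$, a class closed under weak convergence because the bridge kernel $\mathbb{Q}_{\cdot\vert 0,\tau}$ is continuous in its endpoints on the finite state space $\mathcal{X}$; the vanishing of consecutive $TV$ gaps shows that the odd-indexed Markov projections share the same limit, forcing $\Lambda^{\infty}\in\mathcal{M}\cap\mathcal{R}(\mathbb{Q})$ with marginals $\Gamma,\Xi$, hence $\Lambda^{\infty}=\mathbb{P}^{\text{SB}}$. Uniqueness of the accumulation point upgrades subsequential weak convergence to full convergence in law.

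The main obstacle will be the first Pythagorean identity: whereas the reciprocal version is essentially tautological once the endpoint disintegration is written out, the Markov version requires the continuous-time analogue of the chain-rule/tensorization of KL along a CTMC, whose rigorous derivation hinges on the generator-level form of $\Pi_{\mathcal{M}}(\Lambda)$ supplied by \cref{prop:Markov projection} and on showing that the Girsanov-style intensity term of $\Lambda$ with respect to $\Pi_{\mathcal{M}}(\Lambda)$ is orthogonal (in the KL sense) to the Markov perturbation between $\Pi_{\mathcal{M}}(\Lambda)$ and $M$. Once that identity is secured, the monotonicity, equality analysis, and compactness argument above are routine.
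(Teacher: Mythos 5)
Your proposal is correct and takes essentially the same route as the paper's proof: the Markov-step Pythagorean identity obtained from the generator-level characterization of $\Pi_{\mathcal{M}}$ (\cref{prop:Markov projection}), the endpoint-disintegration identity for the reciprocal step, telescoping monotonicity with the equality case settled by uniqueness of the Markov element of $\mathcal{R}(\mathbb{Q})$ with marginals $\Gamma,\Xi$ (\cref{theorem:SB solution}), tightness from the uniformly bounded KL sequence, and Prokhorov plus identification of the weak limit as $\mathbb{P}^{\text{SB}}$. The only cosmetic deviations are that you identify the coincidence of the even- and odd-indexed subsequential limits via Pinsker's inequality where the paper invokes lower semicontinuity of the KL divergence, and you assert tightness directly from the KL bound where the paper makes this quantitative through a Jensen-inequality estimate against the tight measure $\mathbb{P}^{\text{SB}}$.
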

\section{Methods}
\color{\revision}
Here, we propose the Discrete Diffusion Schrödinger Bridge Matching (DDSBM) framework, which solves the SB problem in discrete state spaces with a diffusion generative modeling, supported by our theoretical background in \cref{subsec:Extension of Iterative markov fitting method toward Discrete space}.
Approaches to the SB problem in continuous spaces are based on stochastic differential equations, while our method uses continuous-time Markov chains (CTMCs) in discrete state spaces. 
We first propose the DDSBM framework that adjusts the Iterative Markovian Fitting (IMF) to c\`adl\`ag paths in discrete spaces, whose convergence is ensured by \cref{theorem:convergence} (\cref{subsection:Algorithm for Solving the Schrodinger Bridge Problem}).
We then discuss how the DDSBM framework can be implemented for a graph transformation problem (\cref{subsection:process on molecular graph}) and introduce a graph permutation matching algorithm to reflect the permutation-invariance nature of graphs (\cref{subsection:graph permutation matching}).
\color{black}

\subsection{Algorithm for Solving Schr\"odinger Bridge Problem on Discrete States}
\label{subsection:Algorithm for Solving the Schrodinger Bridge Problem}
\textcolor{\revision}{The IMF algorithm begins with a random initial coupling} $\pi$ such that $\pi_0 = \Gamma$ and $\pi_\tau = \Xi$.
Following the definition of the reciprocal class in \cref{def:reciprocal projection}, we construct the initial reciprocal bridge to obtain the measure $\Lambda^{(0)}$.
Given a reciprocal measure $\Lambda^{(2n)} \in \mathcal{R}(\mathbb{Q})$, the next step is to compute its Markov projection $M^{(2n+1)} := \Pi_{\mathcal{M}}(\Lambda^{(2n)})$.
The exact form of the transition rate for $M^{(2n+1)}$ is provided in \cref{prop:Markov projection}.
In practice, the transition rate is approximated by a neural network.

To achieve this, it first samples pairs $(x_0, x_\tau)$ from $\Lambda_{0,\tau}^{(2n)}$ and samples intermediate states $x_t$ by constructing the bridge $\mathbb{Q}(\cdot \vert X_0, X_\tau)$.
The sampled pairs $(x_t, x_\tau)$ are distributed according to $\Lambda_{t,\tau}^{(2n)}$.
Using these realizations, the rate matrix of $M^{(2n+1)}$ is approximated by parameterized Markov measure $M^\theta$, by minimizing the following loss function:
\begin{equation} \label{eq:Loss of forward Markov prjection}
    \mathcal{L}(\theta) =\int_0^\tau \mathbb{E}_{\Lambda_{t,\tau}^{(2n)}} \left[ (A_t^{\mathbb{Q}_{\cdot \vert \tau}} - A_t^{M^\theta})(X_t, X_t) + \sum_{y\ne X_t} A_t^{\mathbb{Q}_{\cdot \vert \tau}}\log{\frac{A_t^{\mathbb{Q}_{\cdot \vert \tau}}}{A_t^{M^\theta}}}(X_t, y)\right]dt,
\end{equation}
where $A_t^{\mathbb{Q}_{\cdot \vert \tau}}$ denotes the generator of pinned down process of $\mathbb{Q}(\cdot\vert X_\tau)$, \textcolor{\revision}{explicitly defined in \cref{lemma:h-transform}.}
From the approximated generator $A_t^{M^\theta}$, we can sample $x_\tau$ given $x_0$, where $(x_0, x_\tau) \sim M^{\theta}_{0,\tau}\approx M^{(2n+1)}_{0,\tau}$.
Note that, until the sequence of measures converges, the new joint coupling $M^\theta_{0,\tau}\approx M^{(2n+1)}_{0,\tau}$ will differ from the previous one $\Lambda_{0,\tau}^{(2n)}$.

Once the Markov measure $M^{(2n+1)}$ is obtained, we proceed to compute the corresponding reciprocal measure $\Lambda^{(2n+1)}$ through reciprocal projection, $\Lambda^{(2n+1)} := \Pi_{\mathcal{R}(\mathbb{Q})}(M^{(2n+1)})$.
In theory, as shown in \cref{prop:Markov projection}, the time marginal distributions are preserved under Markov projection, meaning that $\Lambda^{(2n)}_t = M^{(2n+1)}_t$ for all $t\in[0,\tau]$. 
However, in practice, since the Markov projection is approximated using a neural network, repeatedly applying this approximation can lead to an accumulation of errors in the time marginals.
Such accumulated errors may violate the marginal condition of the SB problem, particularly leading to a potential failure in satisfying the terminal condition $\mathbb{P}_\tau = \Xi$.

To compensate these errors, the next Markov measure $M^{(2n+2)}:=\Pi_{\mathcal{M}}(\Lambda^{(2n+1)})$ is approximated in a time-reversal way (see \cref{prop:Time-reversal Markov projection}).
Based on the time-symmetric nature of Markov measures, we can leverage the time-reversed generator $\tilde{A}_t^{M^{(2n+2)}}$, which enables the sampling of $x_0$ conditioned on $x_\tau$, where $(x_0, x_\tau)\sim M^{(2n+2)}_{0,\tau}$.
The approximation of $\tilde{A}_t^{M^{(2n+2)}}$ is achieved by minimizing the following loss function:
\begin{equation} \label{eq:Loss of backward Markov prjection}
    \mathcal{L}(\phi)=\int_0^\tau \mathbb{E}_{\Lambda_{0, t}}\left[(\tilde{A}_t^{\mathbb{Q}_{\cdot \vert 0}} - \tilde{A}_t^{M^\phi})(X_t, X_t) + \sum_{y\ne X_t}\tilde{A}_t^{\mathbb{Q}_{\cdot \vert 0}}\log{\frac{\tilde{A}_t^{\mathbb{Q}_{\cdot \vert 0}}}{\tilde{A}_t^{M^\phi}}}(X_t, y)\right]dt,
\end{equation}
where $\tilde{A}_t^{\mathbb{Q}_{\cdot \vert 0}}$ denotes the time-reversal generator of the pinned down process of $\mathbb{Q}(\cdot\vert X_0)$, and $\phi$ represents the parameters of the neural network approximating the time-reversed generator $\tilde{A}_t^{M^\phi}$.

In this manner, the iterative Markov projection following the reciprocal projection is performed alternately in a forward and backward (time-reversal) fashion \textcolor{\revision}{(see \cref{alg:iterative_dmf})}.
Finally, this process yields a sequence of measures $(\Lambda^{(n)})_{n\in\mathbb{N}}$ and $(M^{(n)})_{n\in\mathbb{N}^+}$, which converge to $\mathbb{P}^\text{SB}$ in theory.

\subsection{\textcolor{\revision}{Apply DDSBM on Graphs}}
\label{subsection:process on molecular graph}
We present a method for applying the previously described solution to graph transformation.
\color{\revision}
\color{black}
Here, we represent a graph as $\mathbf{G}=(\mathbf{V},\mathbf{E})$, where $\mathbf{V}=(v^{(i)})_{i}$ and $\mathbf{E}=(e^{(ij)})_{ij}$ denote node and edge features, respectively.
In a molecular graph, for example, the node and edge features correspond to atomic types and edge features, respectively.
Here, $\mathbf{V}$ and $\mathbf{E}$ are modeled as products of categorical random variables.

As the reference process, we define a jump process in which the nodes and edges of the graph vector change discretely, assuming that all nodes and edges are independent. 
Therefore, the transition probability $P$ and the rate $A$ of the reference process is described as follows:
\begin{align} \label{eq:CTMC graph}
    &P_{s:t}^{\mathbf{G}}(\mathbf{G}_1, \mathbf{G}_2)=\prod_{i} P_{s:t}^{V}(v_1^{(i)}, v_2^{(i)})\prod_{i,j}P_{s:t}^E (e_1^{(ij)}, e_2^{(ij)}), \nonumber \\
    &\partial_t P_{s:t}^{(\cdot)}(x, y) = \sum_{z\in\mathcal{X^{(\cdot)}}} P_{s:t}^{(\cdot)}(x, z)A_{t}^{(\cdot)}(z, y), \\
    &A^{(\cdot)}_s(x, y)=\partial_t P_{s:t}^{(\cdot)}(x, y)\big\vert_{t=s}, \nonumber
\end{align}
where $\cdot$ denotes $V$ or $E$, $\mathbf{G}_1$ denotes $(\mathbf{V}_1,\mathbf{E}_1)=\left((v_1^{(i)}), (e_1^{(ij)})\right)$, $\mathbf{G}_2$ denotes $(\mathbf{V}_2,\mathbf{E}_2)=\left((v_2^{(i)}), (e_2^{(ij)})\right)$, and $\mathcal{X}^V$ and $\mathcal{X}^E$ denote the state space of the nodes and edges, respectively.
More specifically, we use a monotonically decreasing function for the signal to noise ratio, $\bar{\alpha}: [0, \tau] \to (0, 1]$, in which the transition rate is defined as:
\begin{equation} \label{eq:practical transition rate design}
    A_t^{(\cdot)}(x, y)=\partial_{t}(\ln{\bar{\alpha}}(t))\left(\delta_{xy}-\mathbf{m}^{(\cdot)}(y)\right),
\end{equation}
where $\delta_{xy}$ denotes the Kronecker delta, and $\mathbf{m}^{(V)}$ and $\mathbf{m}^{(E)}$ denote the prior distribution of nodes and edges as proposed in the previous discrete diffusion work \citep{vignac2022digress}.
According to the Kolmogorov equation, we get the transition probability as,
\begin{equation} \label{eq:practical transition probability design}
    P_{s:t}^{(\cdot)}(x,y)= \frac{\bar{\alpha}(t)}{\bar{\alpha}(s)}\delta_{xy} + \left( 1 - \frac{\bar{\alpha}(t)}{\bar{\alpha}(s)}\right)\mathbf{m}^{(\cdot)}(y).
\end{equation}

Note that the choice of $\mathbf{m}(\cdot)$ as uniform is associated to the diffusion on the state space $\mathcal{X}$, where the $\mathcal{X}$ is considered fully-connected graph. 
Moreover, the stationary distribution of the associated generator always becomes $\mathbf{m}(\cdot)$.
Although non-uniform choice of $\mathbf{m}(\cdot)$ breaks the diffusion formulation on $\mathcal{X}$, it does not harm SB formulation.

\color{\revision}
The reference process defined above, $\mathbb{Q}$, is a permutation-equivariant process since each node and edge changes independently. Formally,
\begin{equation}
\mathbb{Q}_{s:t}(\mathbf{G}_1,\mathbf{G}_2)=\mathbb{Q}_{s:t}(\sigma{\mathbf{G}_1},\sigma{\mathbf{G}_2}),\quad \forall \sigma \in S_n,\ 0\leq s \leq t \leq \tau,
\end{equation}
where $S_n$ is the permutation group for $n$ elements.
Although we represented a graph as a vectorized form, $\mathbf{G}$, the graph itself must be permutation-invariant.
To reflect this, we define a graph with $n$ nodes as a set of graph vectors,
\begin{equation}
\label{eq:graph definition with graph vectors}
\mathcal{G}:=\left\{ \sigma(\mathbf{G})| \sigma \in S_n\right\},
\end{equation}
where $\mathbf{G}$ is an arbitrary vectorization of $\mathcal{G}$.
The nature choice of a joint probability between $\mathcal{G}$ and $\mathbf{G}$ is to define it with an indicator function, 
\begin{align}
\label{eq:joint probability graph and graph vector}
p(\mathcal{G}_i,\mathbf{G}_j)=p(\mathbf{G}_j\vert\mathcal{G}_i)p(\mathcal{G}_i),\ p(\mathbf{G}_j\vert\mathcal{G}_i)=p_{ij}\mathbf{1}_{\mathcal{G}_i}(\mathbf{G}_j)=
\begin{cases}
p_{ij} & \mathrm{if}\ \mathbf{G}_j \in \mathcal{G}_i, \\
0 & \mathrm{otherwise},
\end{cases}
p_{ij}=\frac{1}{|\mathcal{G}_i|},
\end{align}
where $|\mathcal{G}_i|$ is the number of distinct graph vectors in a graph $\mathcal{G}_i$.
Now, we can define the transition kernel between graphs with an associated stochastic process of graph vectors as follows:
\begin{align}
\label{eq:distribution on graphs and graph vectors1}
\tilde{Q}\left(\mathcal{G}'\vert\mathcal{G}\right) &:=\sum_{\sigma\in S_{\mathcal{G}'},\mu\in S_{\mathcal{G}}} {p\left(\mathcal{G}'\vert\sigma (\mathbf{G}')\right)} \mathbb{Q}_{0:\tau}\left(\sigma(\mathbf{G}'),\mu(\mathbf{G})\right) p(\mu \left(\mathbf{G})\vert\mathcal{G}\right)\\
\label{eq:distribution on graphs and graph vectors2}
&=\sum_{\sigma\in S_{\mathcal{G}'},\mu\in S_{\mathcal{G}}}
\mathbf{1}_{\mathcal{G}'}\left(\sigma(\mathbf{G}')\right)
\mathbb{Q}_{0:\tau}(\sigma(\mathbf{G}'),\mu(\mathbf{G}))
\frac{1}{\vert{\mathcal{G}}\vert}\\
\label{eq:distribution on graphs and graph vectors3}
&=\sum_{\omega\in S_{\mathcal{G}'}}\mathbb{Q}_{0:\tau}(\omega(\mathbf{G}'),\mathbf{G}),
\end{align}
where $S_{\mathcal{G}}$ and $S_{\mathcal{G}'}$ are permutation groups of $\mathcal{G}$ and $\mathcal{G}'$, and $\mathbf{G}$ and $\mathbf{G}'$ are arbitrary graph vectors of $\mathcal{G}$ and $\mathcal{G}'$, respectively.
\cref{eq:distribution on graphs and graph vectors3} connects the transition kernels on graphs and their vectorized forms.

\color{black}

\subsection{Graph Permutation Matching}
\label{subsection:graph permutation matching}
%

The transition probability of the reference process depends on graph permutations (see \cref{eq:CTMC graph}), so graph permutation matching must be considered beforehand. 
Although this issue does not affect the sampling phase, it becomes problematic when computing the likelihood of the reference process for two given graphs $\mathcal{G}$ and $\mathcal{G}^\prime$, or when constructing a reciprocal process, which is a mixture of Markov bridges between the two graphs (see \cref{appendix.b.Graph permutation matching}).

One way to handle this is selecting a graph permutation that maximizes the likelihood under the reference process. 
Finding the optimal permutation can be formulated as a quadratic assignment problem (QAP), where the objective is to minimize the negative log-likelihood (NLL), consisting of the sum of the NLLs for both the nodes and edges.
While exact solutions are possible through mixed integer programming, the problem is NP-hard, so alternative methods are preferred.
Specifically, we employ a max-pooling algorithm by \citet{cho2014finding}, which is an approximation method categorized by continuous relaxation.
\textcolor{\revision}{
After obtaining an approximate solution, we use the Hungarian algorithm \citep{kuhn1955hungarian}, implemented in the Pygmtools \citep{wang2024pygmtools}, to discretize the assignment vector to the final solution.
}
We observed that the graph matching is highly accurate in molecular graph matching (see \cref{appendix.b.performance of graph matching algorithm}).
The details of the algorithm are described in \cref{appendix.b.solution method}.
We utilized the graph matching algorithm for every reciprocal bridge construction and likelihood computation.

Recall that the SB problem could be interpreted to the EOT problem, where the cost function corresponds to the NLL.
Thus, defining the reference process can be interpreted as defining a distance (cost) on the set of graphs.
Interestingly, we found that the likelihood of optimal permutation is interpreted as the graph edit distance (see \cref{appendix.b.relation to GED}).
This leads to the conclusion that the SB problem, with the $\mathbb{Q}$ as \cref{eq:practical transition probability design}, is analogous to an OT problem over the metric space of graphs equipped with graph edit distance (GED) as metric, where the GED computation is well known to be NP-hard problem.
\section{Results and Discussions}
Here, we demonstrate the effectiveness of the Discrete Diffusion Schr\"odinger Bridge Matching (DDSBM) framework on graph transformation tasks.
Specifically, we apply DDSBM to a chemical domain, where the graph transformation task is nontrivial due to additional constraints imposed by molecular graphs and their associated properties.
We conducted experiments on two different molecule datasets: ZINC250K \citep{kusner2017ZINC250K} and Polymer \citep{StJohn2019}.
Throughout this section, we first elaborate on the common experimental setups and metrics for evaluation.
The second and third sections provide a detailed analysis of ZINC250K and Polymer experiments.
\textcolor{\revision}{
Also, we discuss ablation studies for graph matching algorithms and the initial data coupling method to analyze their effects on the overall performance of DDSBM and the convergence of the Iterative Markovian Fitting (IMF) iteration (see \cref{appendix.d.ablation studies}).
Furthermore, for readers who might be interested in the performance of DDSBM on unconditional graph generation tasks, i.e. synthetic graph data such as Community-20, Planar, or the small molecule graph dataset, QM9, we present the DDSBM results for them in \cref{appendix.d.unconditional graph generation}.
}


\subsection{Experimental setup and metrics}
\label{subsection: Experimental setup and metrics}

\textbf{Experimental Setup.} To train the models on graph transformation problems, an initial coupling between two distributions is necessary.
We randomly coupled the data of initial and terminal distributions to obtain paired data.
The molecular pairs are divided into training and test datasets in a ratio of 8:2.
\textcolor{\revision}{
All the following reported values are average of three independent training runs with different random seeds.
For standard deviation, please refer to \cref{appendix.d.standard deviations}.
}
Also, detailed explanations about model architectures and hyperparameters can be found in \cref{appendix.c.Implementation details}.

\textbf{Metrics.}
\color{\revision}
By definition of SB shown in \cref{eq:1 DSB definition}, both joint and marginal distributions at each side must be examined to assess the degree to which the SB has been successfully achieved.
We evaluate these distributions from two perspectives: graph structural properties and molecular properties.
Analysis on graph structural properties examines whether our model could capture the data-intrinsic features and the optimality of transporting between given data distributions.
Since the graph structural features might exhibit weak correlation with the molecular properties, we also analyze the molecular properties to validate the effectiveness of the DDSBM framework on the molecular optimization tasks.
Besides the basic metrics for molecular generative models---validity (\textbf{Val.}), uniqueness (\textbf{Uniq.}), and novelty (\textbf{Nov.})--- we evaluate molecular properties with Wasserstein-1 distance (\textbf{$W_1$}) and Fréchet ChemNet Distance (\textbf{FCD}).
On ZINC250K, mean absolute differences (MAD) of drug-likeness (\textbf{QED}), and synthetic accessibility (\textbf{SAscore}) are also evaluated.
For graph structural properties, we analyze negative log-likelihood (\textbf{NLL}) and neighborhood subgraph pairwise distance kernel (\textbf{NSPDK}).
More details about the metrics are explained in \cref{appendix.c.2.DetailsAboutMetrics}.
\color{black}




\subsection{Small molecule transformation}
\label{subsection:Small molecule transformation}
First, we validate our proposed methods on the SB problem between two molecule distributions constructed from the standard ZINC250K dataset.
We constructed two sets of molecules whose $\log{P}$ values are largely different.
Molecules from the ZINC250K dataset were randomly selected and divided into two sets whose $\log{P}$ values follow the Gaussian distributions centered at 2 and 4 with variance of 0.5, respectively (see \cref{appendix.ZINC250K-preprocessing} for more details).
We compared our methods with three baseline models that perform graph-to-graph translation.
AtomG2G and HierG2G are latent-based models that encode an input graph into a latent vector and decode it into another graph \citep{jin2020HierG2G}.
Diffusion Bridge Model (DBM) is a bridge model trained with the same reference process as DDSBM, which is equivalent to the first Markov projection in DDSBM.
We refer readers to \cref{appendix.c.2.NeuralNetwork} for more details about the baselines and implementation of our models.
We note that, for these three baseline models, the initial coupling is utilized during the whole training procedure without change, while DDSBM dynamically alters the training data pair by IMF.

\begin{table}[t!]
\caption{
\textbf{Distribution shift performance on ZINC.}
Reference refers to metrics from the initial coupling, used as a standard to evaluate each model’s graph translation.
For both AtomG2G and HierG2G, we've excluded the generated molecules that are too large with more atoms than the maximum number of atoms in our dataset for computing metrics other than validity.
$\uparrow$ and $\downarrow$ denote higher and lower values are better, respectively.
The best performance is highlighted in bold.
}
\centering
\resizebox{1.0\textwidth}{!}{
\begin{threeparttable}
\begin{tabular}{lc|ccc|cc|cccc}
\toprule


Model
& Type
& Val.$(\uparrow)$
& Uniq.$(\uparrow)$
& Nov.$(\uparrow)$
& NLL$(\downarrow)$
& NSPDK$(\downarrow)$
& LogP $W_1(\downarrow)$
& QED MAD$(\downarrow)$
& SAscore MAD$(\downarrow)$
& FCD$(\downarrow)$
\\

\midrule

Reference\tnote{1}
& -
& -
& -
& -
& 360.862
& 1.47e-4
& 2.007
& 0.153
& 0.595
& 4.807 / 0.279
\\

\midrule

AtomG2G
& Latent
& 99.9
& 64.4
& 99.3
& 355.025
& 9.70e-3
& 0.162
& 0.143
& 0.697
& 5.019
\\

HierG2G
& Latent
& \textbf{100.0}
& 73.7
& 99.5
& 344.458
& 2.10e-2
& \textbf{0.113}
& 0.146
& 0.687
& 5.742
\\

\midrule

DBM
& Bridge
& 87.6
& \textbf{100.0}
& \textbf{100.0}
& 288.572
& 8.04e-4
& 0.150
& 0.141
& 0.608
& 1.046
\\

DDSBM
& Schrödinger Bridge
& 94.8
& \textbf{100.0}
& 99.9
& \textbf{160.461}
& \textbf{7.30e-4}
& 0.139
& \textbf{0.120}
& \textbf{0.402}
& \textbf{0.833}
\\

\bottomrule
\end{tabular}

\begin{tablenotes}
\footnotesize
\item[1] NLL, $W_1$, and MADs were calculated using random pairs from the test set. Two FCD values are provided: the first compares the initial molecules in the test set with the terminal molecules in the training set, and the second compares the terminal molecules in both sets. \textcolor{\revision}{Also, the reference NSPDK is computed with the terminal molecules from training and test sets.}
\end{tablenotes}
\end{threeparttable}
}
\label{tab:main_table_1_1}
\end{table}

\Cref{tab:main_table_1_1} shows overall results of our method compared to the three baseline models.
\textcolor{\revision}{DDSBM consistently outperforms the other models in terms of both NSPDK, NLL, and FCD.
The lower NSPDK and FCD suggest that DDSBM-generated molecules are more similar to those in the target dataset, while the minimal NLL indicates that DDSBM applies minimal structural change on initial graphs.}
This result demonstrates that DDSBM achieves more \textit{optimal} graph transformation between fixed initial and terminal distributions.
\textcolor{\revision}{When it comes to the model type, bridge models shows superior in FCD, NLL, and NSPDK, compared to the latent-based models.}
The bridge models, DBM and DDSBM, dynamically transform a graph based on the reference dynamics, favoring the retention of the given structure, whereas whole-graph reconstruction using a latent vector does not.
This leads to lower NLL values for the bridge models, which is ensured by the \cref{def:Markov projection}.
\textcolor{\revision}{
Additionally, the lower NSPDK and FCD values of the bridge models highlight that constructing a dynamic bridge within the graph domain enhances the performance of distribution learning for the target distribution.
The latent models have a higher validity, but given that their uniqueness is significantly lower compared to the bridge models, which achieve 100\% uniqueness, the bridge models are better suited for tasks that require distinct and diverse molecule generation.}
Furthermore, we attribute the best performance of DDSBM to the gradual updating of training pairs, which become more similar than the random initial pairs, simplifying the graph transportation process.

Next, we analyze the molecular properties of the source and generated molecules.
DDSBM resulted in the lowest MAD in QED and SAscore, meaning the deviation of molecular properties other than $\log{P}$ is the smallest for DDSBM.
It is noteworthy that DDSBM achieves minimal changes in various molecular properties despite being trained solely to optimize graph transformations with minimal structural alterations, without explicit knowledge on molecular properties.
Meanwhile, the latent-based models exhibited larger MAD values in QED and SAscore, indicating that they are vulnerable to losing other properties of the initial molecules.
This can be inferred from the result of HierG2G; although HierG2G achieves the lowest $W_1$ value in $\log{P}$ so that it modulates $\log{P}$ the best to the desired degree,  its much larger NLL value suggests that it could generate a graph with less consideration of the initial graph constraint, as illustrated in \cref{figure:figure1}.
For a better understanding of the overall results, we provide the corresponding distributions of properties of the models in \cref{figure:e.figure_distributions}.

Despite the promising results, we observe that all models except DDSBM have a high reliance on a predefined initial coupling.
Thus, we conducted additional experiments using pseudo-optimal initial coupling based on Tanimoto similarity, which is detailed in \cref{appendix.d.effect of initial coupling}.
Interestingly, we found that introducing the pseudo-optimal initial coupling not only accelerated the training of DDSBM in practice but also allowed it to achieve performance on par with the previous results.

\ifthenelse{\boolean{isICLR}}{
}{
}

\subsection{Polymer graph transformation}
\label{subsection:Polymer graph transformation}
The Polymer dataset \citep{StJohn2019} consists of 91,000 monomer molecules with their optical excitation energies (GAPs) obtained by time-dependent density functional theory calculations.
We reconstructed the Polymer dataset for a transport problem between two sets of molecules with distinct GAPs, corresponding to green and blue optical colors, respectively.
The reconstructed dataset contains 7,603 molecular pairs.
This task is considered as a more challenging application because the relationship between the graph structure and the target GAP property is highly non-linear, making it hard to predict the effect of specific structural changes on the GAP.
In this context, we apply our DDSBM model to find the optimal transformation between the two sets of molecules.

The performance of DDSBM is compared to DBM, which serves as a baseline.
The GAPs of the molecules generated by the models were obtained using the pre-trained MolCLR model \citep{wang2022molecular}, which has a mean absolute error (MAE) of 0.027 eV for the GAP prediction.
\Cref{tab:main_table_2_1} shows the overall results of our method on the Polymer dataset.
\textcolor{\revision}{
DDSBM outperforms DBM on most of the metrics evaluated, which is consistent with the results of the experiments on ZINC.
}
In terms of validity, DBM shows significantly lower scores, which contrasts with the results observed on the ZINC250K dataset.
This can be attributed to the characteristics of the molecules in the Polymer dataset, which have relatively large sizes and multiple ring structures.
In this context, minimal transformations are advantageous for achieving high validity, and DBM may have struggled to learn these changes from the randomly paired data.
Examples of the generated molecular graphs are visualized in \cref{figure:figure_polymer}.

\begin{table}[h!]
\caption{
\textbf{Distribution shift performance on Polymer.}
Reference refers to metrics from the initial coupling, used as a standard to evaluate each model’s graph translation.
$\uparrow$ and $\downarrow$ denote higher and lower values are better, respectively.
The best performance is highlighted in bold.
}
\centering
\resizebox{0.9\textwidth}{!}{
\begin{threeparttable}
\begin{tabular}{lc|ccc|ccc|ccc}
\toprule

Model
& Type
& Val.$(\uparrow)$
& Uniq.$(\uparrow)$
& Nov.$(\uparrow)$
& FCD$(\downarrow)$
& NLL$(\downarrow)$
& NSPDK$(\downarrow)$
& GAP $W_1(\downarrow)$
\\

\midrule

Reference\tnote{1}
& -
& -
& -
& -
& 1.469 / 0.384
& 749.800
& 5.64e-4
& 0.312
\\

\midrule

DBM
& Bridge
& 43.4
& \textbf{99.8}
& \textbf{97.4}
& 2.230
& 580.415
& 5.82e-3
& 0.249
\\

DDSBM
& Schrödinger Bridge
& \textbf{97.4}
& 94.5
& 71.3
& \textbf{1.074}
& \textbf{212.047}
& \textbf{4.18e-3}
& \textbf{0.127}
\\

\bottomrule
\end{tabular}

\begin{tablenotes}
\scriptsize
\item[1] NLL and $W_1$ were calculated using random pairs from the test set. Two FCD values are provided: the first compares the initial molecules in the test set with the terminal molecules in the training set, and the second compares the terminal molecules in both sets.  \textcolor{\revision}{Also, the reference NSPDK is computed with the terminal molecules from training and test sets.}
\end{tablenotes}
\end{threeparttable}
}
\label{tab:main_table_2_1}
\end{table}

\section{Discussion}
\label{Discussion}
In this paper, we propose Discrete Diffusion Schr\"odinger Bridge Matching (DDSBM), a novel framework utilizing continuous-time Markov chains to solve the SB problem in a high-dimensional discrete state space.
To this end, we extend Iterative Markovian Fitting (IMF), proving its convergence to SB.
We successfully apply our framework to graph transformation, specifically for molecular optimization.
Experimental results demonstrate that DDSBM effectively transforms molecules to achieve the desired property with minimal graph transformation, while retaining other features.
However, the IMF requires iterative sampling from the learned Markov process, which can be more computationally intensive than simple bridge matching.


\subsubsection*{Acknowledgments}
This work was supported by the Korea Environmental Industry and Technology Institute (Grant No. RS202300219144), the Technology Innovation Program funded by the Ministry of Trade, Industry \& Energy, MOTIE, Korea (Grant No. 20016007), and Basic Science Research Programs through the National Research Foundation of Korea funded by the Ministry of Science and ICT (Grant No. RS-2023-00257479, Grant No. 2018R1A5A1025208 and Grant No. NRF-2022M3J6A1063021).


\bibliography{iclr2025_conference}
\bibliographystyle{iclr2025_conference}

\newpage
\appendix
\color{\revision}{
\section{Introudction to Schr\"odinger Bridge}\label{appendix.a0}

In this section, we will begin with a thought experiment to briefly understand the Schr\"odinger Bridge problem (SBP), followed by a simple toy example in continuous space. 
Specifically, we will focus on intuitively understanding how the Iterative Markovian Fitting (IMF) method works through a toy example. 
Building on this, we aim to gain insight into the Schr\"odinger bridge problem and explore its application in discrete spaces.

\subsection{The Lazy Gas Experiment and the Schr\"odinger Bridge Problem}\label{appendix.sb_intro}

\begin{figure*}[ht!]
 \centering
 \includegraphics[width=0.80\textwidth]{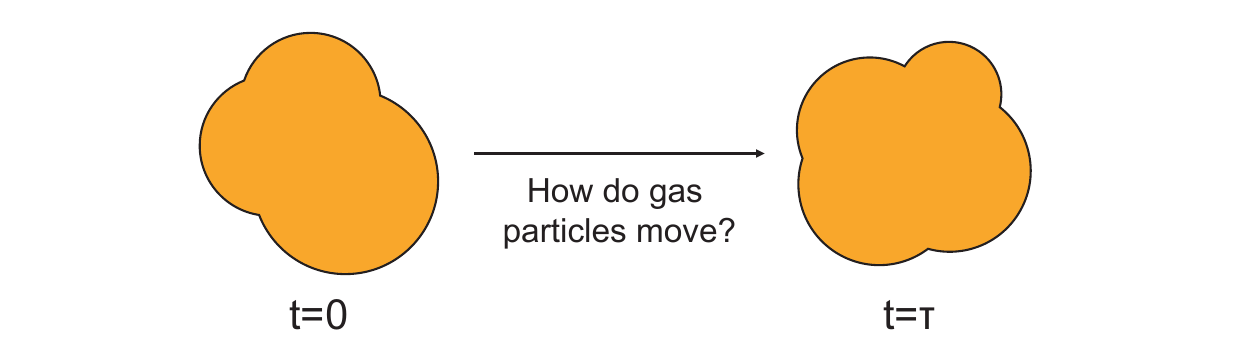}
 \caption{
 \textcolor{\revision}{
 A schematic illustration of the lazy gas experiment
 }
 }
 \label{figure:lazy_gas_1}
\end{figure*}

Imagine a \textit{gas}, a collection of non-interacting particles confined to a specific region of space. At time $t=0$, these particles are distributed according to a given distribution. By time $t=\tau$, the gas must reorganize itself to match a different distribution.

\begin{figure*}[ht!]
 \centering
 \includegraphics[width=0.80\textwidth]{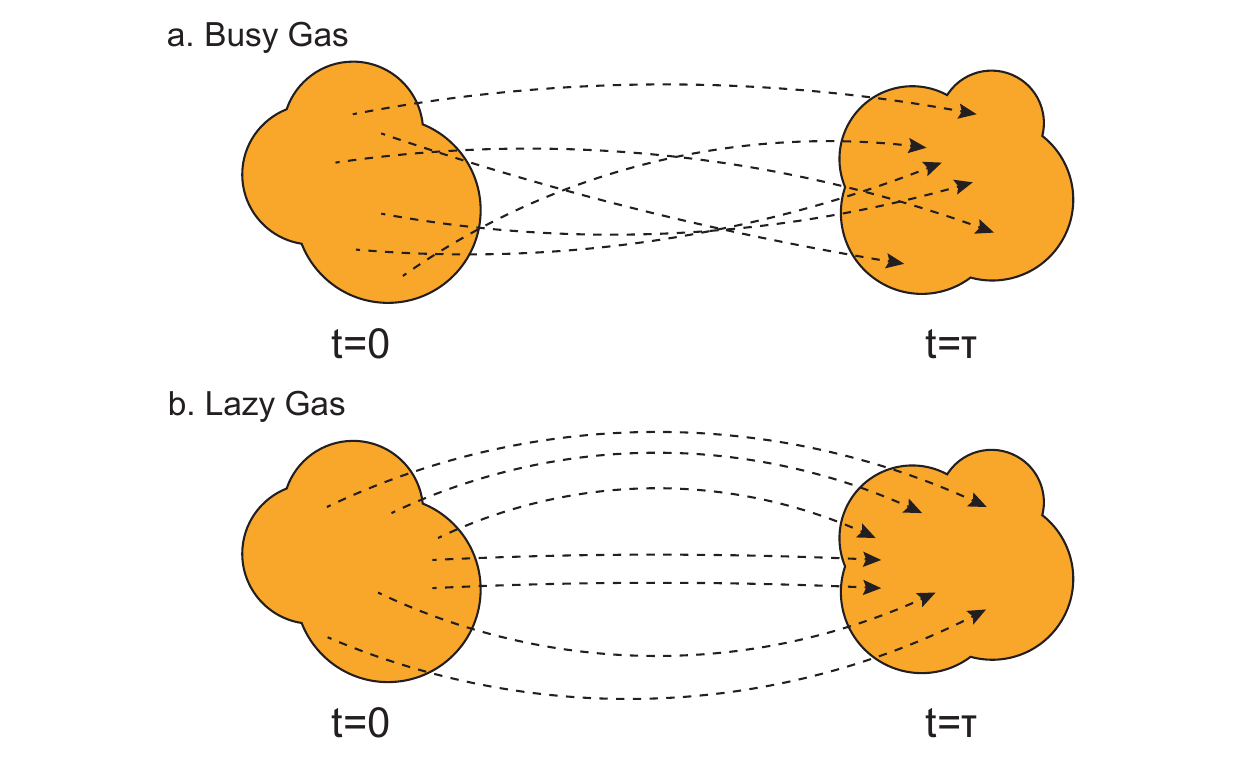}
 \caption{
 \textcolor{\revision}{
 A schematic comparison between lazy gas and busy gas
 }
 }
 \label{figure:lazy_gas_2}
\end{figure*}

\textit{Busy gas} will find a way to perform the given transition regardless of the amount of work required. 
In contrast, \textit{lazy gas} will seek to make the transition while minimizing the total work, following the principle of least action. 
This simple yet thought-provoking scenario, known as the \textit{lazy gas experiment}, provides an intuitive starting point for understanding the SBP. 
Proposed by Erwin Schr\"odinger, the SBP seeks to identify the most \textit{natural} probabilistic process that connects two probability distributions under certain constraints. 
At its core, the problem is to find a probabilistic interpolation that minimizes the cost.

In essence, the SBP addresses the following question: \textit{Given the initial and final distributions of a stochastic process, what is the optimal way to probabilistically interpolate between these two distributions?}
This process must satisfy two important principles. 
First, the process must exactly match the given initial and final distributions (boundary condition). 
Second, it must minimize some properly defined \textit{cost}. This cost is typically defined as the Kullback-Leibler (KL) divergence with respect to a reference process representing the underlying dynamics.

In the example of the \textit{lazy gas experiment}, if the gas particles are moving according to the reversible Brownian motion (a Wiener process with zero drift), the cost can be defined as the total distance traveled by the particles. 
This principle of cost minimization ensures that the particles follow the optimal paths that most efficiently (in terms of $L^2$ norm) connect the initial and final distributions. 
This analogy intuitively captures the essence of the SBP's goal: \textit{to achieve a natural and efficient transition between given marginal distributions}\footnote{The experiments presented in this section are based on \citet{Villani2009}, with the associated explanations informed by \citet{leonard2013survey}.}.

\subsection{Schr\"odinger Bridge with toy example in continuous space}\label{appendix.sb_toy}

To better understand the SBP in a real-world context, consider a simple toy example involving probabilistic transitions between two probability distributions in a one-dimensional continuous space. 
Here, we use reversible Brownian motion as the reference process.
This example allows us to contrast the optimal transition process (Schr\"odinger bridge) between the initial and final distributions with an inefficient, suboptimal one. 
By visualizing these differences, we gain an intuitive understanding of the SBP.
In particular, this example illustrates how the central goal of the problem - minimizing costs - is achieved in practice by reducing the Kullback-Leibler (KL) divergence relative to the reference process.

\begin{figure*}[ht!]
 \centering
 \includegraphics[width=0.90\textwidth]{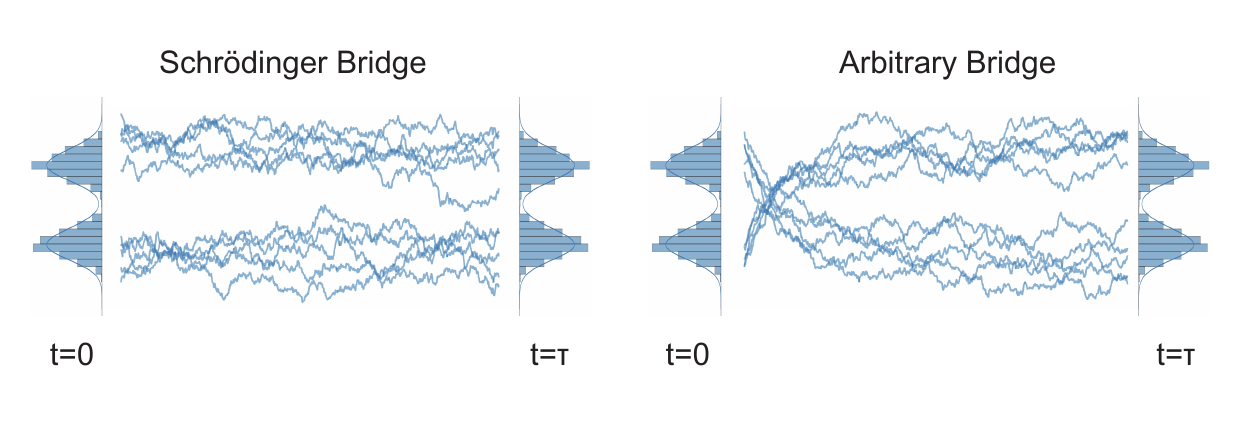}
 \caption{
 \textcolor{\revision}{
 A schematic comparison between Schr\"odinger bridge and aribtrary bridge
 }
 }
 \label{figure:SB_AB}
\end{figure*}

In \cref{figure:SB_AB}, the trajectory labeled \textit{Schr\"odinger Bridge} on the left represents the optimal probabilistic path connecting the two distributions. 
These paths are carefully structured not only to satisfy the given boundary condition, but also to adhere to optimal cost, achieving a balance between efficiency and regularity. 
On the right, the trajectories labeled \textit{Arbitrary Bridge} illustrate a suboptimal transport process. While the boundary condition is still satisfied, the intermediate paths are not optimal for the given cost. In this context, the cost, defined by the $L^2$ norm, highlights the differences in the distances traveled by each point. 
As shown in the figure, the movements in the optimal paths are significantly more efficient compared to those in the arbitrary paths.

Considering these differences, the next question is: \textit{how can the Schr\"odinger Bridge be found in practice?}

\subsection{Solving Schr\"odinger Bridge Problem in continuous space}\label{appendix.sb_solv}

Since the SBP does not admit a closed-form solution, iterative algorithms such as Iterative Proportional Fitting (IPF) and Iterative Markovian Fitting (IMF) are used to approximate it. 
Both methods adopt the Markovian projection for a given stochastic process to minimize the KL divergence.

\begin{figure*}[ht!]
 \centering
 \includegraphics[width=0.90\textwidth]{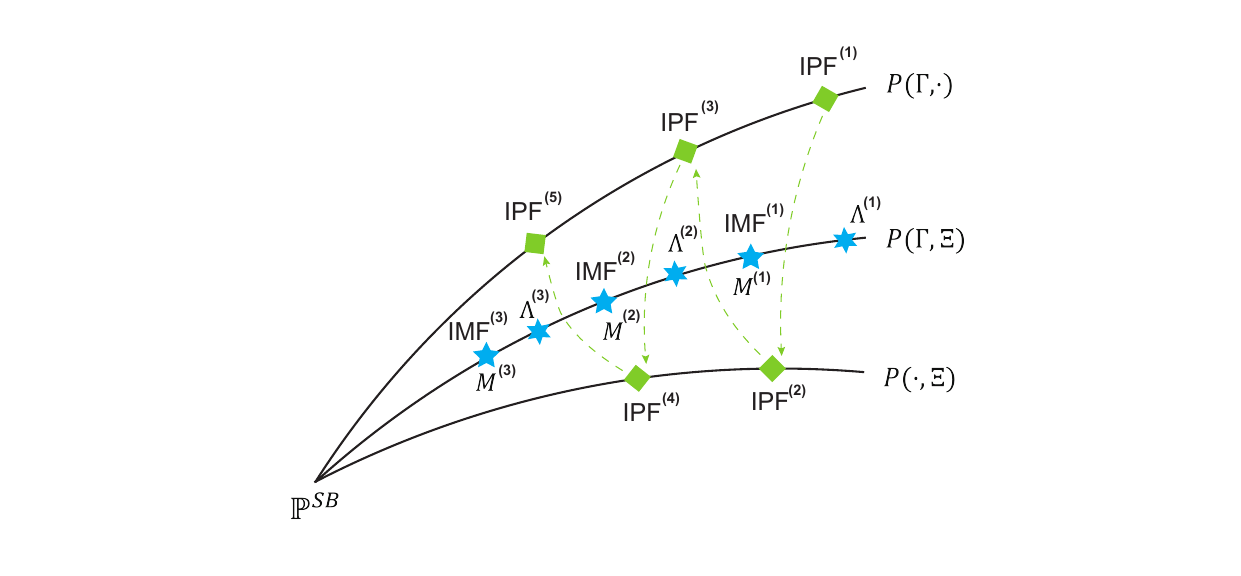}
 \caption{
 \textcolor{\revision}{
 A schematic comparison between Iterative Proportional Fitting (IPF) and Iterative Markovian Fitting (IMF) method. The image are reproduced based on \citet{peluchetti2023diffusion}. In the notation, $P(\Gamma, \cdot)$ represents the collection of path measures where only the initial distribution is fixed at $\Gamma$, $P(\cdot, \Xi)$ represents the collection of path measures where only the terminal distribution is fixed at $\Xi$, and $P(\Gamma, \Xi)$ represents the collection of path measures where both the initial and terminal distributions are fixed at $\Gamma$ and $\Xi$, respectively.
 }
 }
 \label{figure:ipf_imf}
\end{figure*}

IPF algorithm iteratively fits their marginal distribution to the given distribution, while preserving the reciprocal class and Markovian properties of the given process.
In contrast, IMF method alternates between reciprocal projection and Markovian projection to refine the process while keeping the marginal distributions fixed.

At this point, reciprocal projection is the process of constructing a mixture bridge using the reference process and Doob's h-transform based on the given joint distribution. 
Markovian projection, on the other hand, is performed by generative modeling with neural networks to produce a Markov process similar to the given one in terms of the KL divergence. 
In practice, training occurs during the Markovian projection phase, which plays the same role as bridge matching in refining the probabilistic process.

In this section, we will focus on IMF since it is the main reference method for our DDSBM framework.
We will explore the interchange between the two projections and the principles by which they work together to refine the given process.

\begin{figure*}[ht!]
 \centering
 \includegraphics[width=0.60\textwidth]{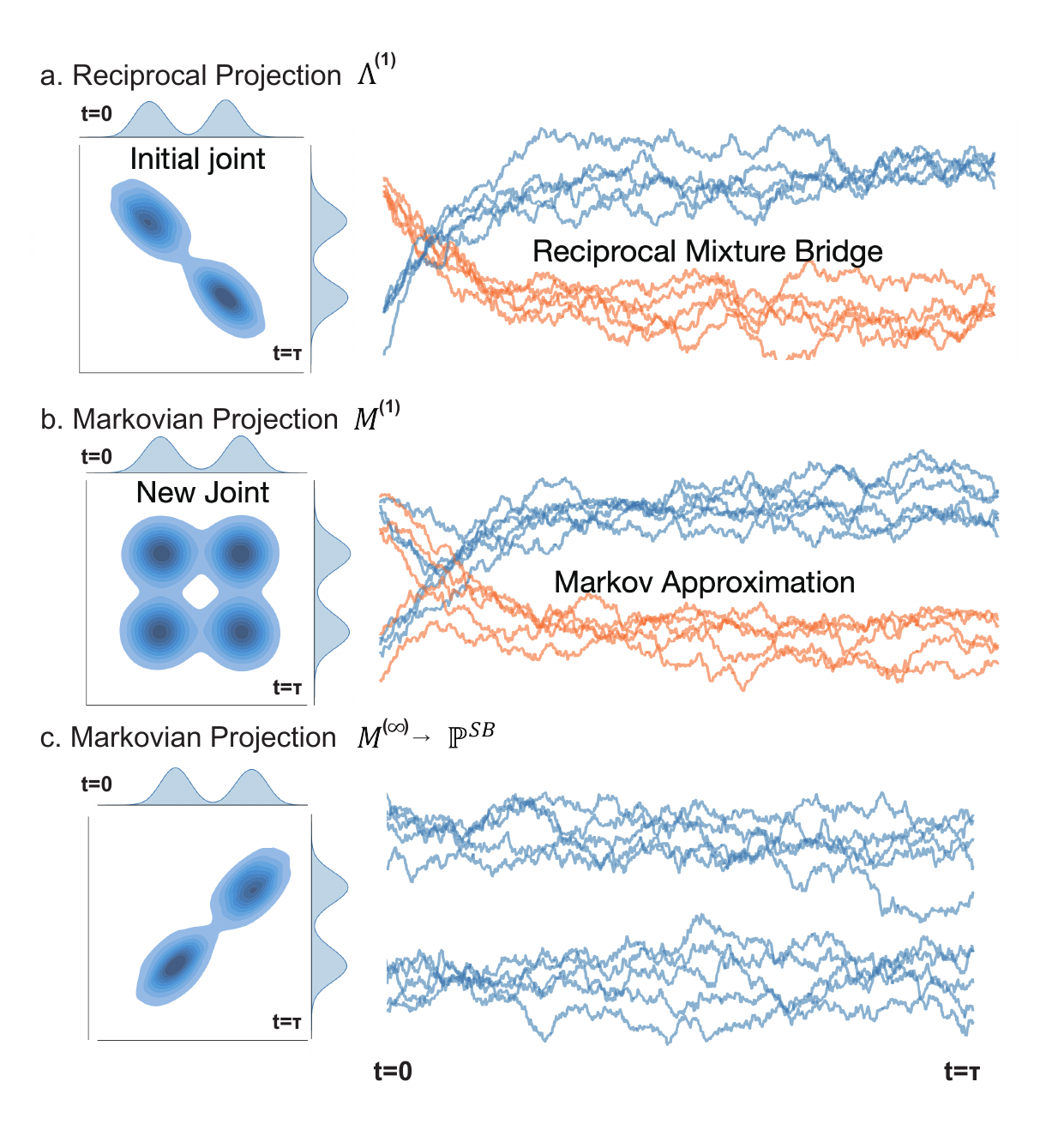}
 \caption{
 \textcolor{\revision}{
 A schematic illustration of Iterative Markovian Fitting method.
 }
 }
 \label{figure:imf_in_cont}
\end{figure*}

First, when reversible Brownian motion is given as the reference process in one dimensional continuous space, our goal is to find the Schr\"odinger bridge between the marginal distributions. 
We start by obtaining the initial coupling between the marginal distributions. 
For visualization purposes, we assume an initial joint distribution in which points opposite in space are coupled.

Next, we use reciprocal projection to construct a reciprocal mixture bridge connecting the marginal distributions (\cref{figure:imf_in_cont}a). 
While this bridge has the reciprocal property, the Markov property is generally lost.
Moreover, it shows significant differences from the joint distribution of the SB, which resembles the left side of \cref{figure:imf_in_cont}c.

We then apply Markov projection to this mixture bridge to derive the Markov process that is closest in terms of KL divergence (\cref{figure:imf_in_cont}b). 
The obtained process reduces the KL divergence compared to the previous process, bringing it closer to the Schr\"odinger bridge (see \cref{figure:ipf_imf}). 
In addition, the joint distribution becomes more similar to the SB joint distribution than the initial coupling, as reflected by the reduced cost.

By iteratively repeating this process, the obtained Markov process converges to the Schr\"odinger bridge, as guaranteed by the convergence theorem \citep{shi2024diffusion, peluchetti2023diffusion}.
Through IMF iterations, we progressively refine the process, eventually arriving at the Schr\"odinger bridge (\cref{figure:imf_in_cont}c).

\subsection{Schr\"odinger Bridge Problem in discrete space}\label{appendix.sb_discrete}

So far, we have described the SBP in the context of continuous space. 
Since the Wiener process used in continuous space cannot reflect the characteristics of discrete space well, we need to introduce a new diffusion process to solve SBP in discrete space.

\begin{figure*}[ht!]
 \centering
 \includegraphics[width=0.70\textwidth]{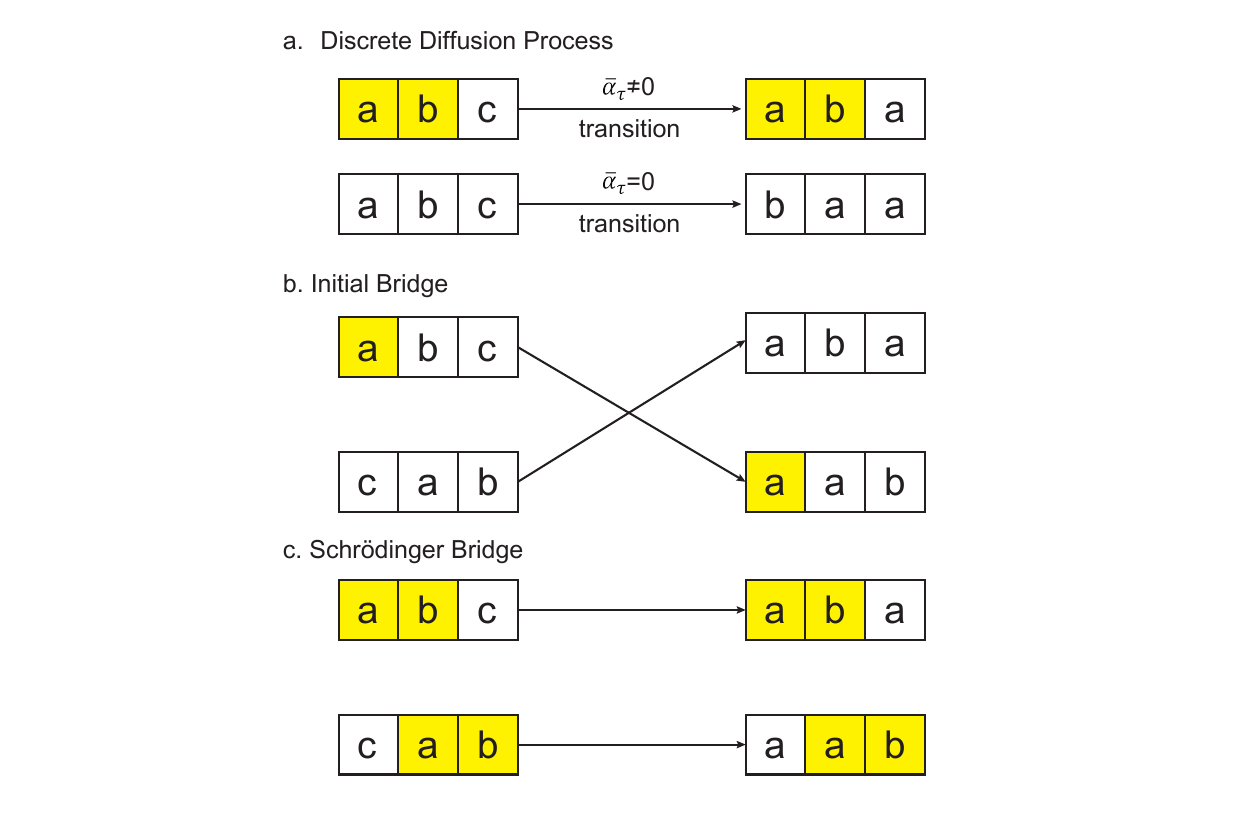}
 \caption{
 \textcolor{\revision}{
 A schematic illustration of the discrete diffusion process and a comparison between the initial bridge and the Schr\"odinger bridge
 }
 }
 \label{figure:discrete_SB}
\end{figure*}

Let us consider discrete random variables with three categories ($a$, $b$, $c$). 
In this context, we can adopt the discrete diffusion process proposed in \citet{austin2021D3PM}, such as the uniform transition process.
Unlike processes where all information about a state is lost ($\bar{\alpha}_\tau = 0$), we define a discrete diffusion process that retains partial information about its initial state ($\bar{\alpha}_\tau \neq 0$).
This approach is visualized in \cref{figure:discrete_SB}a, highlighting the retained structure.

In the case of continuous space, the cost is typically represented by the $L^2$ norm.
In discrete space, however, the cost can be thought of as the dissimilarity between discrete states.
Thus, minimizing the $L^2$ norm in the continuous space is equivalent to minimizing the negative log-likelihood (NLL) of the joint distribution in the discrete state space, given by the reference diffusion process we used.
The $\bar{\alpha}_\tau$ can be interpreted as a parameter that indicates how much of the original structure is retained during the diffusion process.
If the original discrete diffusion process ($\bar{\alpha}_\tau = 0$) is used, the NLL assigns the same value to all pairs, making it impossible to define the SBP.

By solving the SBP using our DDSBM framework, it is possible to refine from an initial non-optimal coupling to the optimal coupling, i.e. the Schr\"odinger bridge, from the perspective of NLL.
As shown in \cref{figure:discrete_SB}b and \cref{figure:discrete_SB}c, this procedure can be understood as finding the coupling that best preserves the current state. 
This insight is a major reason for introducing DDSBM into graph domains, especially for molecular optimization.

\cref{figure:graph_SB} illustrates the initial bridge and the Schr\"odinger bridge in the graph domain. 
The initial bridge shows a non-optimal coupling between two graphs, where the structural consistency is not well preserved. 
In contrast, the Schr\"odinger bridge shows an optimal coupling that preserves the core structure of the graph while transforming to the target graph, reflecting the principle of minimal change.

\begin{figure*}[ht!]
 \centering
 \includegraphics[width=0.90\textwidth]{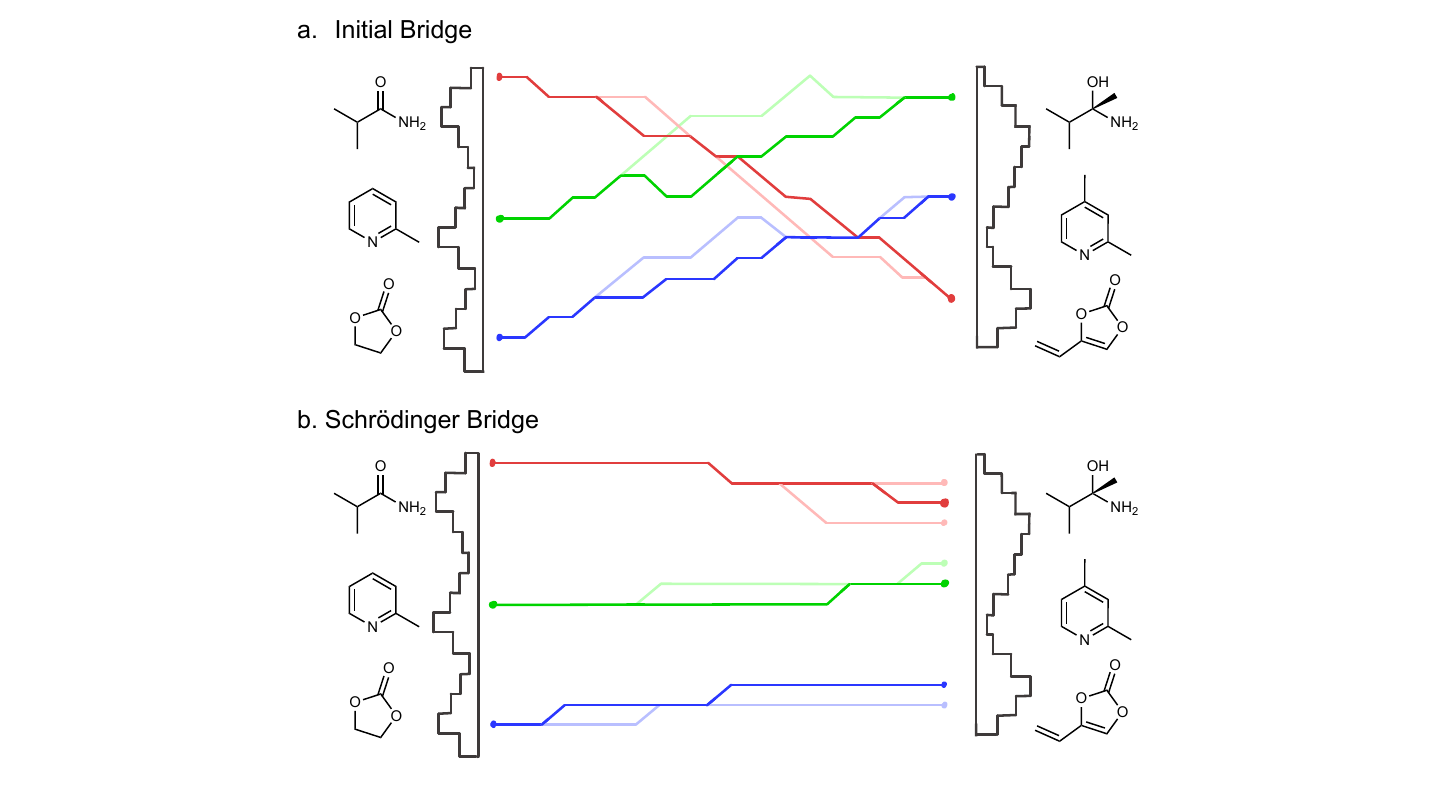}
 \caption{
 \textcolor{\revision}{
 A schematic comparison between the initial bridge and the Schr\"odinger Bridge in graph domain
 }
 }
 \label{figure:graph_SB}
\end{figure*}
}
\color{black}
\section{Propositions and proof}\label{appendix.a}
\subsection{Notations}
In this section, we introduce the notations that will be used throughout the proofs of the propositions. 
$\Omega = D([0,\tau], \mathcal{X})$ denotes the space of all left-limited and right-continuous (c\`adl\`ag) paths over a finite space $\mathcal{X}$.
We assume the state space $\mathcal{X}$ has connected finite graph structure (fully-connected graph), which imply it becomes metric space with graph distance metric $d_\mathcal{X}(\cdot,\cdot)$.
Accordingly, the sample space $\Omega$ is equipped with Skorokhod topology with Skorokhod metric $d_\Omega(\cdot, \cdot)$, and associated Borel $\sigma$-algebra.
$X = (X_{t})_{t\in[0, \tau]}$ denotes the canonical process given by:

\begin{equation} 
    X_t(\omega) = \omega_t, \quad t \in [0,\tau], \quad \omega = (\omega_s)_{s \in [0,\tau]}. \nonumber
\end{equation}

The reference measure $\mathbb{Q}$ is an irreducible Markov measure on $\Omega$ with its associated canonical filtration. 
Assume that the transition probability of $\mathbb{Q}$, denoted by $P_{s:t}(x,y)$, from $(s,x) \in [0,\tau] \times \mathcal{X}$ to $(t,y) \in [0,\tau] \times \mathcal{X}$, is continuous and differentiable over time.
The measure is generated by the transition rate function $A_{s}(x, y)$, which gives the rate of transition from $x \in \mathcal{X}$ to $y \in \mathcal{X}$ at time $s \in (0,\tau)$, and satisfies the Kolmogorov forward equation: 
\begin{align} \label{eq:Kolmogorov forward equation}
    \frac{\partial P_{s:t}(x,y)}{\partial t} 
    &= \sum_{z \in \mathcal{X}} P_{s:t}(x, z) A_{t}(z, y), \\ 
    A_{s}(x, y) \nonumber
    &= \left[\frac{\partial P_{s:t}(x,y)}{\partial t}\right]_{t=s}. 
\end{align}
We also assume that $\mathbb{Q}$ can construct a bridge $\mathbb{Q}(\cdot\vert X_0=x, X_\tau=y)$ for all $x, y \in \mathcal{X}$.
For any Markov measure $M$, the corresponding generator is denoted as $A^{(M)}$.

\subsection{\cref{theorem:SB solution}}
\begin{theorem}\label{theorem:SB solution}
    (Uniqueness of the Schr\"odinger Bridge Solution)\\
    If the reference process $\mathbb{Q}$ is Markov, then under mild conditions the Schr\"odinger Bridge solution $\mathbb{P}^{\text{SB}}$ exists and is unique. 
    Furthermore, the solution is mixture with static Schr\"odinger Bridge solution represented as $\mathbb{P}^{\text{SB}} = \mathbb{P}^{\text{SB}}_{0,\tau}\mathbb{Q}_{\cdot\vert 0,\tau}\in\mathcal{R}(\mathbb{Q})$, and also it is in $\mathcal{M}$.
    Conversely, a process $\mathbb{P} = \mathbb{P}_{0,\tau}\mathbb{Q}_{\cdot\vert 0,\tau}$ is Markov if and only if $\mathbb{P}=\mathbb{P}^{\text{SB}}$.
\end{theorem}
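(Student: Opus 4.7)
The plan is to reduce the dynamic SB problem to a static one by disintegration, establish existence and uniqueness on the static side via convex analysis, and then use the resulting Schr\"odinger factorization together with the Markov property of $\mathbb{Q}$ to obtain all claims. The starting point is the chain rule for KL divergence,
\[
D_{\text{KL}}(\mathbb{P}\Vert\mathbb{Q}) = D_{\text{KL}}(\mathbb{P}_{0,\tau}\Vert\mathbb{Q}_{0,\tau}) + \mathbb{E}_{\mathbb{P}_{0,\tau}}\bigl[D_{\text{KL}}(\mathbb{P}_{\cdot\vert 0,\tau}\Vert\mathbb{Q}_{\cdot\vert 0,\tau})\bigr],
\]
whose second term is non-negative and vanishes precisely when $\mathbb{P}\in\mathcal{R}(\mathbb{Q})$. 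Hence any minimizer of \eqref{eq:1 DSB definition} lies in $\mathcal{R}(\mathbb{Q})$, and its initial-terminal coupling minimizes the static problem \eqref{eq:2 SSB definition}. This already gives the representation $\mathbb{P}^{\text{SB}}=\mathbb{P}^{\text{SB}}_{0,\tau}\mathbb{Q}_{\cdot\vert 0,\tau}\in\mathcal{R}(\mathbb{Q})$ once existence is granted.

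For the static side, I would invoke standard EOT arguments: the feasible set $\{\pi:\pi_0=\Gamma,\ \pi_\tau=\Xi\}$ is convex and weakly compact on the finite product space $\mathcal{X}\times\mathcal{X}$, and $\pi\mapsto D_{\text{KL}}(\pi\Vert\mathbb{Q}_{0,\tau})$ is strictly convex and lower semicontinuous, so the minimizer $\mathbb{P}^{\text{SB}}_{0,\tau}$ exists and is unique. A Lagrangian first-order condition for the two marginal constraints yields the Schr\"odinger system: there exist positive $f,g:\mathcal{X}\to\mathbb{R}_{>0}$ with
\[
\mathbb{P}^{\text{SB}}_{0,\tau}(x,y)=f(x)\,g(y)\,\mathbb{Q}_{0,\tau}(x,y),\qquad \mathbb{P}^{\text{SB}}_{0}=\Gamma,\ \mathbb{P}^{\text{SB}}_\tau=\Xi.
\]

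The Markov property of $\mathbb{P}^{\text{SB}}$ is then derived by combining this multiplicative endpoint structure with the Markovianity of $\mathbb{Q}$. Writing $\mathbb{Q}(x_0,x_t,x_\tau)=\mathbb{Q}_0(x_0)P_{0:t}(x_0,x_t)P_{t:\tau}(x_t,x_\tau)$ and marginalizing against $f(x_0)g(x_\tau)$ produces forward/backward potentials $\hat f_t,\hat g_t$ that depend only on the current state $x_t$. The conditional law of the future given the past under $\mathbb{P}^{\text{SB}}$ then reduces to an $h$-transform of $\mathbb{Q}$ driven by $\hat g$, with transition rates of the form $A^{\mathbb{P}^{\text{SB}}}_t(x,y)=A_t(x,y)\hat g_t(y)/\hat g_t(x)$ that depend only on the present state, giving $\mathbb{P}^{\text{SB}}\in\mathcal{M}$.

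For the converse, suppose $\mathbb{P}=\mathbb{P}_{0,\tau}\mathbb{Q}_{\cdot\vert 0,\tau}\in\mathcal{R}(\mathbb{Q})$ is Markov with $\mathbb{P}_0=\Gamma$ and $\mathbb{P}_\tau=\Xi$. I would write the Markov condition $\mathbb{P}(X_\tau\vert X_0,X_t)=\mathbb{P}(X_\tau\vert X_t)$ explicitly using the bridge decomposition, and exploit irreducibility of $\mathbb{Q}$ (so the kernels $P_{s:t}$ are strictly positive) to derive a functional equation forcing $d\mathbb{P}_{0,\tau}/d\mathbb{Q}_{0,\tau}$ to separate as $f(x)g(y)$. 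Sinkhorn-type uniqueness of such a factorization under prescribed marginals $\Gamma,\Xi$ then identifies $\mathbb{P}$ with $\mathbb{P}^{\text{SB}}$. The main obstacle I anticipate is this separation step: rigorously extracting the product structure from the mixture-of-bridges representation in the c\`adl\`ag finite-state setting, and verifying that the assumed irreducibility supplies the strict positivity of the kernels needed for the Sinkhorn argument to apply cleanly.
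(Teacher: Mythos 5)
Your outline is essentially sound, but it takes a different route from the paper: the paper proves nothing itself and simply invokes Theorem~2.12 of \citet{leonard2013survey}, whereas you reconstruct the content of that theorem from scratch. Your chain-rule disintegration $D_{\text{KL}}(\mathbb{P}\Vert\mathbb{Q})=D_{\text{KL}}(\mathbb{P}_{0,\tau}\Vert\mathbb{Q}_{0,\tau})+\mathbb{E}_{\mathbb{P}_{0,\tau}}[D_{\text{KL}}(\mathbb{P}_{\cdot\vert 0,\tau}\Vert\mathbb{Q}_{\cdot\vert 0,\tau})]$, the strict-convexity argument on the finite product space, the $f\otimes g$ factorization, and the $h$-transform argument for Markovianity are exactly the standard Léonard-style proof, and in the finite-state c\`adl\`ag setting they go through; a side benefit of your version is that it makes the ``mild conditions'' concrete (a feasible coupling with finite entropy, e.g.\ $\Gamma\otimes\Xi\ll\mathbb{Q}_{0,\tau}$ with positive bridge kernels, which the paper's irreducibility and bridge-everywhere assumptions supply) and it produces the tilted rates $A_t(x,y)\hat g_t(y)/\hat g_t(x)$ that the paper's later lemmas implicitly rely on. Two points need attention. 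First, the converse as stated in the theorem is only true with the marginal constraints $\mathbb{P}_0=\Gamma$, $\mathbb{P}_\tau=\Xi$ read into it (otherwise $\mathbb{Q}$ itself is a Markov element of $\mathcal{R}(\mathbb{Q})$ distinct from $\mathbb{P}^{\text{SB}}$); you correctly add them, but be explicit that this is an interpretation of the statement. Second, the step you flag as the main obstacle --- extracting the product structure $d\mathbb{P}_{0,\tau}/d\mathbb{Q}_{0,\tau}(x,y)=f(x)g(y)$ from Markovianity of a reciprocal-class measure --- is precisely Jamison's characterization of Markov reciprocal processes (also contained in Léonard's Theorem~2.12 and Proposition~2.10 via \citet{jamison1974reciprocal}); in the finite-state irreducible case it can be closed directly, since $\mathbb{P}(X_\tau=y\vert X_0=x,X_t=z)=\rho(x,y)P_{t:\tau}(z,y)/\sum_{y'}\rho(x,y')P_{t:\tau}(z,y')$ with $\rho=d\mathbb{P}_{0,\tau}/d\mathbb{Q}_{0,\tau}$, and requiring independence of $x$ for all $z$ with strictly positive kernels forces $\rho(x,y)=f(x)g(y)$; combined with the prescribed marginals and strict convexity of the static problem this identifies $\mathbb{P}$ with $\mathbb{P}^{\text{SB}}$. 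So either cite Jamison/Léonard for that step or include this short positivity argument; as written the proposal is a correct plan with that one step left schematic.
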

\begin{proof}
    This is direct consequence of Theorem 2.12 of \citep{leonard2013survey}.
\end{proof}

\subsection{Markov projection}

\begin{proposition}\label{prop:Markov projection}
    (solution of Markov projection)\\
    Let $M^*=\Pi_{\mathcal{M}}(\Lambda)$, $\Lambda \in \mathcal{R}(\mathbb{Q})$ and the generator of $\mathbb{Q}$ be $A_{t}(x,y)$ with transition probability $P_{s:t}(x,y)$.
    Under mild assumptions, the generator of the \it{Markov} measure $M^*$ becomes
    \begin{align}
        A^{(M^*)}_{t}(X_{t}, y) &= \mathbb{E}_{\Lambda_{\tau\vert t}}\left[A_{t}(X_{t},y; X_\tau)\bigg\vert X_t\right], \nonumber\\
        A_{t}(x,y;z) &= A_{t}(x,y)\frac{P_{s:\tau}(y,z)}{P_{s:\tau}(x,z)}-\delta_{xy}\sum_u A_t (y, u) \frac{P_{t:\tau}(u, z)}{P_{t:\tau}(x, z)},\nonumber
    \end{align}
    , where $z\in \mathcal{X}$.
    
    The reverse KL-divergence is 
    \begin{align}
        D_{\text{KL}}(\Lambda\Vert M^{*})&=\int_{0}^{\tau}\mathbb{E}_{\Lambda_{0,t}}\left[
        (A^{(\Lambda_{\vert 0})}_{t} - A^{(M^*)}_{t})(X_{t}, X_{t})
        +\sum_{y\ne X_{t}} A^{(\Lambda_{\vert 0})}_{t}\log \frac{A^{(\Lambda_{\vert 0})}_{t}}{A^{(M^*)}_{t}}(X_{t}, y)
        \right] dt, \nonumber
    \end{align}
    where the $A^{\Lambda_{\vert 0}}$ is the generator for the conditioned measure $\Lambda_{\vert 0}$ which is defined as
    \begin{equation}
        A^{\Lambda_{\vert0}}_{t}(X_{t}, y)
        =\mathbb{E}_{\Lambda_{\tau \vert 0, t}}
        \left[
        A_{t}(X_{t},y;X_\tau)\bigg\vert X_{t}, X_{0}
        \right]. \nonumber
    \end{equation}    
    
    Moreover, for any $t\in [0,\tau], \Lambda_{t} = M^{*}_{t}$.
\end{proposition}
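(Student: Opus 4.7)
The plan is to derive the Markov-projection generator via the continuous-time-Markov-chain analogue of Girsanov's theorem, identify the predictable intensity of the non-Markov mixture $\Lambda$, perform a pointwise variational minimization, and then verify marginal preservation through the forward Kolmogorov equation. Concretely, for any Markov $M$ with $\Lambda \ll M$, I would first write
\[
\log\frac{d\Lambda}{dM} \;=\; \log\frac{d\Lambda_0}{dM_0} \;+\; \int_0^\tau \bigl(\lambda^\Lambda_t - A^{(M)}_t\bigr)(X_t, X_t)\, dt \;+\; \sum_{\text{jumps}} \log\frac{\lambda^\Lambda_t(X_{t-}, X_t)}{A^{(M)}_t(X_{t-}, X_t)},
\]
where $\lambda^\Lambda_t(x,y)$ denotes the $(\mathcal{F}^\Lambda_t)$-predictable jump intensity of $\Lambda$ coming from the Doob--Meyer decomposition of the jump-counting process. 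Taking expectations and compensating the jump sum (using the row-sum relation $A^{(M)}_t(x,x) = -\sum_{y\ne x}A^{(M)}_t(x,y)$) produces exactly the KL integral claimed in the proposition, with $A^{(\Lambda_{\vert 0})}$ in place of $\lambda^\Lambda$.

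Because $\Lambda = \Lambda_{0,\tau}\,\mathbb{Q}_{\cdot\vert 0, \tau}$ is a mixture of Markov bridges and each bridge $\mathbb{Q}(\cdot\vert X_0, X_\tau)$ is a Doob $h$-transform of $\mathbb{Q}$ with generator $A_t(\cdot, \cdot; X_\tau)$ displayed in the statement (by the $h$-transform lemma), the predictable intensity is $\lambda^\Lambda_t(X_t, y) = \mathbb{E}_\Lambda[A_t(X_t, y; X_\tau) \vert \mathcal{F}^\Lambda_t]$; the endpoint-conditioned Markovianity collapses this to $\mathbb{E}_\Lambda[A_t(X_t, y; X_\tau) \vert X_0, X_t]$, which is precisely $A^{(\Lambda_{\vert 0})}$. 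The KL objective then depends on $A^{(M)}_t(x, \cdot)$ only through a pointwise functional at each $(t, x)$, so the minimization decouples; the first-order condition in each off-diagonal rate together with the tower property gives
\[
A^{(M^\ast)}_t(x, y) \;=\; \mathbb{E}_\Lambda\bigl[A^{(\Lambda_{\vert 0})}_t(x, y) \bigm\vert X_t = x\bigr] \;=\; \mathbb{E}_{\Lambda_{\tau\vert t}}\bigl[A_t(x, y; X_\tau) \bigm\vert X_t = x\bigr],
\]
which is the claimed generator. Convexity of $u \mapsto u\log u$ ensures the critical point is the unique minimizer.

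For marginal preservation, I would match Kolmogorov forward equations. Testing the predictable martingale against $f = \mathbf{1}_{\{x\}}$ yields
\[
\partial_t \Lambda_t(x) \;=\; \mathbb{E}_\Lambda\!\left[\mathbf{1}_{X_t \ne x}\,\lambda^\Lambda_t(X_t, x)\right] \;-\; \mathbb{E}_\Lambda\!\left[\mathbf{1}_{X_t = x}\sum_{y \ne x}\lambda^\Lambda_t(x, y)\right],
\]
and applying the tower identity $\mathbb{E}_\Lambda[\lambda^\Lambda_t(z, \cdot)\vert X_t = z] = A^{(M^\ast)}_t(z, \cdot)$ derived above collapses this to $\partial_t \Lambda_t(x) = \sum_z \Lambda_t(z)\,A^{(M^\ast)}_t(z, x)$, which is exactly the Kolmogorov equation for $M^\ast_t$. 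Since the KL formula is additionally minimized by $M^\ast_0 = \Lambda_0$ at initialization, standard linear-ODE uniqueness forces $\Lambda_t = M^\ast_t$ for all $t \in [0, \tau]$.

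The main obstacle I anticipate is the rigorous Girsanov step for the non-Markov mixture $\Lambda$: one must construct the predictable intensity via the Doob--Meyer decomposition of the jump-counting process on the finite state space and justify the compensator formula under the ``mild assumptions'' (presumably $\Lambda \ll M$ for the Markov candidates together with $\int_0^\tau \mathbb{E}_\Lambda[|\lambda^\Lambda_t(X_t, X_t)|]\,dt < \infty$ and finiteness of the jump-rate entropy). Once that machinery is in place, the pointwise convex minimization and the Kolmogorov matching are routine, and the rest of the formulas follow by direct substitution of the $h$-transformed generator $A_t(x,y;z)$.
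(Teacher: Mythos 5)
Your proposal is correct in substance and reaches all three claims (generator, KL formula, marginal preservation), but it is organized differently from the paper's proof, so a comparison is worth making. The paper never applies a Girsanov-type identity to the non-Markov mixture directly: it first disintegrates $D_{\text{KL}}(\Lambda\Vert M)=\mathbb{E}_{\Lambda_0}[D_{\text{KL}}(\Lambda_{\cdot\vert 0}\Vert M_{\cdot\vert 0})]$ using the Jamison/L\'eonard fact that a reciprocal measure conditioned on $X_0$ is Markov (\cref{lemma:KL divergence factorization}), so only the Markov-vs-Markov likelihood ratio of \cref{appendix.a.markov_projection.KL divergence of markov measure} is needed; it then computes the generator of $\Lambda_{\cdot\vert 0}$ as a conditional expectation of the $h$-transform rates (\cref{lemma:h-transform,lemma:generator of pinned down reciprocal}), proves marginal preservation through a separate mixture-of-Markov-chains result (\cref{prop:marginal distribution of Markov projection}), and identifies the minimizer not by first-order conditions but by showing $D_{\text{KL}}(\Lambda\Vert M^*)-D_{\text{KL}}(\Lambda\Vert M)=D_{\text{KL}}(M\Vert M^*)\ge 0$ (\cref{lemma:minimizer of KL divergence}), a Pythagorean-type identity that is reused almost verbatim in the proof of \cref{theorem:convergence}. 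Your route instead invokes the Doob--Meyer/compensator machinery to get the predictable intensity of the mixture, collapses it to $A^{\Lambda_{\vert 0}}$ by the same endpoint-conditioning argument, and then minimizes pointwise by convexity of $u\mapsto u\log u$ and matches forward Kolmogorov equations via indicator test functions; this is more compact once the general jump-process Girsanov step is granted, and it delivers the generator and the marginal identity in one sweep, whereas the paper's decomposition keeps every step at the level of elementary Markov-chain computations and produces the comparison identity needed later for convergence. Two small points you should still pin down to make your argument airtight: (i) attainability and admissibility of the pointwise optimum, i.e.\ that the optimized rates define a genuine element of $\mathcal{M}$ and that $\Lambda\ll M^*$ (the paper checks this via $A_t(x,y;z)=0\iff A_t(x,y)=0$ under the bridge-everywhere assumption, which also guarantees the log terms are finite on the support); and (ii) the justification that the first-order condition is taken only on the support of $\Lambda_t$, off of which the rates of $M^*$ are unconstrained, so uniqueness of the minimizer holds $\Lambda$-a.e.\ rather than everywhere.
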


\subsubsection{KL-divergence of Markov measure}\label{appendix.a.markov_projection.KL divergence of markov measure}

Consider two Markov path measure  $\tilde{M}\ll M$. 
Based on the Girsanov's formula, we can express the Radon-Nikodym derivative as follow:
\begin{equation}
    \frac{dM}{d\tilde{M}}(\omega)=\frac{dM_{0}}{d\tilde{M}_{0}}(\omega_{0})\exp{\left(
    \int_{0}^{\tau} \log\frac{A_{t}^M}{A_{t}^{\tilde{M}}}(\omega_{t-},\omega_{t})dN_{t}(\omega)
    + \int_{0}^{\tau} (A_{t}^{M} - A_{t}^{\tilde{M}} )(\omega_{t}, \omega_{t})dt\right)}, \nonumber
\end{equation}
where $N_{t}(\omega)$ is the number of jumps of the path $\omega$ up to time $t$ and $\omega_{t-}$ is the left limit of the path at time $t$\footnote{See \citet{chazottes2006relative} or Appendix 1 of \citet{kipnis2013scaling}}. 
Note that, due to the compactness of time interval, the number of jumps of each path in $\Omega$ is at most finite, and thus $N_{t}(\omega)$ is bounded.
Also, the escape rate of the state $x$ associated to $M$ is $-A_{t}^M(x, x)$.
Thus, we can construct a natural martingale\footnote{See Lemma 5.1 of \citet{kipnis2013scaling}}
\begin{equation}
    N_{t} + \int_{0}^{t} A_{s}(\omega_{s}, \omega_{s})ds, \nonumber
\end{equation}
which is zero-mean process.

For any continuous and bounded function $\phi:\mathcal{X}\to\mathbb{R}$, we can change the integrator $dN_t$ as follow:
\begin{equation}
    \mathbb{E}_{M}\left[\int_{0}^{t}\phi(\omega_{s})dN_{s}\right]=
    \mathbb{E}_{M}\left[\int_{0}^{t}-\phi(\omega_{s})A_{s}^M(\omega_{s},\omega_{s})ds\right]=
    \int_{0}^{t} \mathbb{E}_{x\sim M_{s}}\left[-\phi(x)A_{s}^M(x,x)\right]ds. \nonumber
\end{equation}
The KL-divergence is expectation of logarithm of Radon-Nikodym derivative, which leads to:
\begin{align}
    D_{\text{KL}}(M\Vert \tilde{M}) \nonumber
    &= 
    D_{\text{KL}}(M_{0}\Vert \tilde{M}_{0}) + 
    \mathbb{E}_{M}\left[
    \int_{0}^{\tau} \log\frac{A_{t}^M}{A_{t}^{\tilde{M}}} (\omega_{t-},\omega_{t})dN_{t}(\omega) + 
    \int_{0}^{\tau} (A_{t}^M - A_{t}^{\tilde{M}} )(\omega_{t}, \omega_{t})dt
    \right],\\ \nonumber
    &= 
    D_{\text{KL}}(M_{0}\Vert \tilde{M}_{0}) + 
    \int_{0}^{\tau} \mathbb{E}_{x\sim M_{s}}\left[-A_{s}^M(x, x)
    \sum_{y\ne x}p_{s}(x,y)\log\frac{A_{s}^M}{A_{s}^{\tilde{M}}}(x,y)\right]ds \\ \nonumber 
    &+
    \int_{0}^{\tau}\mathbb{E}_{x\sim M_{s}} \left[(A_{s}^M - A_{s}^{\tilde{M}})(x, x)\right]ds, \nonumber
\end{align}
where $p_{s}(x, y)$ is the probability of jump from $x$ to $y$ given that a jump occurs, which is $\frac{A_{s}^M(x,y)}{-A_{s}^M(x,x)}$.
By applying this, we obtain KL-divergence represented solely in terms of transition rate $A^M$ and $A^{\tilde{M}}$:
\begin{equation}\label{eq:KL divergence between Markov measures}
    D_{\text{KL}}(M\Vert \tilde{M}) = 
    D_{\text{KL}}(M_{0}\Vert \tilde{M}_{0}) + 
    \int_{0}^{\tau} \mathbb{E}_{x\sim M_{s}}\left[\sum_{y\ne x}A_{s}^M(x,y)\log\frac{A_{s}^M}{A_{s}^{\tilde{M}}}(x,y) +
    (A_{s}^M - A_{s}^{\tilde{M}})(x, x)\right]ds. 
\end{equation}

\subsubsection{KL-divergence of reciprocal measure to Markov measure}\label{appendix.a.markov_projection.KL divergence disintegration}

\begin{lemma}\label{lemma:KL divergence factorization}
    If a reciprocal measure $\Lambda \in \mathcal{R}(\mathbb{Q})$ is conditioned on $X_0$ being a.s. constant, then the corresponding measure $\Lambda_{\cdot\vert 0}$ is Markov. 
    For any $M\in\mathcal{M}$ such that $M_0=\Lambda_0$ and $\Lambda \ll M$, the KL-divergence $D_{\text{KL}}(\Lambda\Vert M)$ disintegrates as follow:
    \begin{equation}
        D_{\text{KL}}(\Lambda \Vert M)=\mathbb{E}_{\Lambda_{0}}\left[D_{\text{KL}}(\Lambda_{\cdot\vert 0}\Vert M_{\cdot\vert 0})\right]. \nonumber
    \end{equation}
\end{lemma}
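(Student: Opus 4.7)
The statement bundles two essentially independent claims, which I would prove in sequence.

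\textbf{(i) Markovianity of $\Lambda_{\cdot\vert 0}$.} The plan is to exploit the reciprocal decomposition $\Lambda = \Lambda_{0,\tau}\mathbb{Q}_{\cdot\vert 0,\tau}$. Fix $x_0\in\mathcal{X}$, set $\nu(x_\tau) := \Lambda_{\tau\vert 0}(x_\tau\vert x_0)$, and observe that $\Lambda_{\cdot\vert 0=x_0}$ equals the mixture $\sum_{x_\tau} \nu(x_\tau)\, \mathbb{Q}_{\cdot\vert 0=x_0,\tau=x_\tau}$ of $\mathbb{Q}$-bridges. Each bridge is Markov by Doob's $h$-transform (\cref{lemma:h-transform}), so for $s<t$ I can write
\begin{equation*}
\Lambda(X_t=y \mid \mathcal{F}_s, X_0=x_0) = \sum_{x_\tau} \Lambda(X_\tau=x_\tau \mid \mathcal{F}_s, X_0=x_0)\cdot \mathbb{Q}(X_t=y \mid X_s, X_\tau=x_\tau).
\end{equation*}
Establishing Markovianity reduces to showing that the posterior $\Lambda(X_\tau \mid \mathcal{F}_s, X_0)$ depends on the past only through $X_s$. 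This is a Bayes computation: the joint density of $(\omega_{[0,s]}, X_\tau=x_\tau)$ given $X_0=x_0$ factors as $\nu(x_\tau)\,\mathbb{Q}(\omega_{[0,s]}\mid X_0,X_\tau)$, and the Markov property of $\mathbb{Q}$ gives $\mathbb{Q}(\omega_{[0,s]}\mid X_0,X_\tau) = \mathbb{Q}(\omega_{[0,s]}\mid X_0)\cdot \mathbb{Q}(X_\tau\mid X_s)/\mathbb{Q}(X_\tau\mid X_0)$. The factor $\mathbb{Q}(\omega_{[0,s]}\mid X_0)$ is independent of $x_\tau$ and cancels in the normalized posterior, leaving an expression depending only on $(X_s, X_0, x_\tau)$. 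Hence $\Lambda(X_t\mid \mathcal{F}_s, X_0) = \Lambda(X_t\mid X_s, X_0)$.

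\textbf{(ii) KL disintegration.} This is the standard chain rule for relative entropy applied to the disintegration over $X_0$: for any $\Lambda\ll M$ on $\Omega$,
\begin{equation*}
D_{\text{KL}}(\Lambda\Vert M) = D_{\text{KL}}(\Lambda_0\Vert M_0) + \mathbb{E}_{\Lambda_0}\!\left[D_{\text{KL}}(\Lambda_{\cdot\vert 0}\Vert M_{\cdot\vert 0})\right].
\end{equation*}
The hypothesis $\Lambda_0 = M_0$ kills the first term, yielding the claim. I would verify $\Lambda\ll M$ implies $\Lambda_{\cdot\vert 0}\ll M_{\cdot\vert 0}$ $\Lambda_0$-a.s. to ensure the inner divergences are well defined.

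\textbf{Main obstacle.} Part (ii) is routine; the substance lives in part (i). The delicate step is the Bayes computation showing the posterior over the hidden endpoint loses its dependence on the pre-$s$ path. This cancellation relies crucially on the Markovianity of the reference $\mathbb{Q}$: without it, $\mathbb{Q}(\omega_{[0,s]}\mid X_0,X_\tau)$ would not factor through $X_s$ and the mixture would generically fail to be Markov, which is precisely why an unconditioned reciprocal mixture loses the Markov property in general.
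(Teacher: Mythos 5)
Your proposal is correct, and part (ii) coincides with the paper's own argument: the paper likewise factors the Radon--Nikodym derivative as $\frac{d\Lambda}{dM}=\frac{d\Lambda_0}{dM_0}\frac{d\Lambda_{\cdot\vert 0}}{dM_{\cdot\vert 0}}$, uses $M_0=\Lambda_0$ to drop the initial term, and applies the tower property, with finiteness of the divergence supplied by $\Lambda\ll M$ and $\vert\mathcal{X}\vert<\infty$. Where you genuinely differ is part (i): the paper does not prove Markovianity of $\Lambda_{\cdot\vert 0}$ at all, but simply cites Proposition 2.5 of L\'eonard et al.\ (2014) and Lemma 1.4 of Jamison (1974) on reciprocal processes, whereas you reconstruct that fact from scratch via the mixture-of-bridges representation and a Bayes computation showing the posterior over the terminal endpoint given $\mathcal{F}_s$ depends on the past only through $X_s$; the factorization $\mathbb{Q}(\omega_{[0,s]}\mid X_0,X_\tau)=\mathbb{Q}(\omega_{[0,s]}\mid X_0)\,\mathbb{Q}(X_\tau\mid X_s)/\mathbb{Q}(X_\tau\mid X_0)$ is exactly the mechanism underlying the cited results, so your route buys a self-contained argument well suited to the finite-state setting, at the cost of extra bookkeeping the paper avoids by citation. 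Two points to tighten if you write it out: the Markov property should be established for the conditional law of the entire future path (equivalently all finite-dimensional future marginals) given $\mathcal{F}_s$, not just the single-time marginal $X_t$ --- your argument extends verbatim, since conditionally on $X_\tau$ the whole future of the bridge depends only on $(X_s,X_\tau)$ by \cref{lemma:h-transform} --- and you should flag where the standing assumption that $\mathbb{Q}$ admits bridges everywhere (positivity of $\mathbb{Q}(X_\tau=x_\tau\mid X_0=x_0)$ and of the transition probabilities entering the quotient) is used to keep the Bayes ratios well defined.
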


\begin{proof}
    According to proposition 2.5 of \citet{leonard2014reciprocal} and lemma 1.4 of \citet{jamison1974reciprocal}, conditioning $\Lambda$ on $X_0$ not only preserves its reciprocal property, but also transforms it into a Markov process. 
    Due to the absolute continuity together with $\vert\mathcal{X}\vert < \infty$, the KL-divergence is finite.
    Then, the KL-divergence can be reformulated as follows:
    \begin{align}
        D_{\text{KL}}(\Lambda \Vert M) \nonumber
        &= \mathbb{E}_{\Lambda}\left[\frac{d\Lambda}{dM}\right],\\ \nonumber
        &= \mathbb{E}_{\Lambda}\left[\frac{d\Lambda_{\cdot\vert 0}}{dM_{\cdot\vert 0}}\right],\\ \nonumber
        &= \mathbb{E}_{\Lambda_{0}}\mathbb{E}_{\Lambda_{\cdot\vert 0}} \left[\frac{d\Lambda_{\cdot\vert 0}}{dM_{\cdot\vert 0}}\right], \\ \nonumber
        &= \mathbb{E}_{\Lambda_{0}}\left[D_{\text{KL}}(\Lambda_{\cdot\vert 0}\Vert M_{\cdot\vert 0})\right]. \nonumber
    \end{align}
\end{proof}

According to \cref{lemma:KL divergence factorization}, while a reciprocal measure $\Lambda\in\mathcal{R}(\mathbb{Q})$ is in general non-Markov, the conditional measure $\Lambda_{\cdot \vert 0}$ is Markov. 
Note that we can compute the KL-divergence between two Markov measure based on \cref{eq:KL divergence between Markov measures}. 

\subsubsection{Generator of conditioned process}\label{appendix.a.markov_projection.generator of conditioned process}

To compute $D_{\text{KL}}(\Lambda \Vert M)$ based on  \cref{lemma:KL divergence factorization}, \cref{eq:KL divergence between Markov measures}, we need the generator of the conditioned process $\Lambda_{\cdot\vert 0}$.
Before deriving the generator of $\Lambda_{\cdot\vert 0}$, we first consider the pinned process of $\mathbb{Q}$ conditioned on $X_\tau=z$ in prior.

\begin{lemma}\label{lemma:h-transform}
    Let ${(X_{t})_{0\leq t\leq \tau}}$ be a Markov process under the reference measure $\mathbb{Q}$ with transition probability $P_{s:t}(\cdot,\cdot), (s\leq t)$, and generator $A_{s}(\cdot, \cdot)$.
    Consider the process conditioned on $X_\tau=z$ with corresponding measure denoted by $\mathbb{Q}^{(z)}$. 
    Then, the conditioned process is also Markov, and its generator is given by:
    \begin{equation}
        A_{s}(x,y;z)= A_{s}(x,y)\frac{P_{s:\tau}(y,z)}{P_{s:\tau}(x,z)} - \delta_{xy}\left[\sum_{u}A_{s}(x, u)\frac{P_{s:\tau}(u,z)}{P_{s:\tau}(x,z)}\right]. \nonumber
    \end{equation}
\end{lemma}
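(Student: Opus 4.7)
The plan is to establish this h-transform result by a direct computation: first verify that conditioning on $X_\tau=z$ preserves the Markov property, then write the conditional transition kernel explicitly via Bayes' rule, and finally differentiate at $t=s$ using the Kolmogorov forward and backward equations associated with $\mathbb{Q}$.

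For the Markov property, I would invoke the standard fact that if $(X_t)$ is Markov under $\mathbb{Q}$, then conditioning on a terminal event $\{X_\tau=z\}$ yields a Markov process (this is a consequence of the Markov property together with the tower rule: for $s\leq t\leq \tau$, $\mathbb{P}(X_t=y\mid \gF_s, X_\tau=z)$ depends on $\gF_s$ only through $X_s$). Granted this, the conditional transition probability is, by Bayes' rule,
\begin{equation}
P^{(z)}_{s:t}(x,y) \;=\; \mathbb{Q}(X_t=y \mid X_s=x,\, X_\tau=z) \;=\; \frac{P_{s:t}(x,y)\,P_{t:\tau}(y,z)}{P_{s:\tau}(x,z)}, \nonumber
\end{equation}
valid whenever $P_{s:\tau}(x,z)>0$ (guaranteed by the irreducibility assumption on $\mathbb{Q}$).

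Next, I would compute the generator $A^{(z)}_s(x,y)=\bigl[\partial_t P^{(z)}_{s:t}(x,y)\bigr]_{t=s}$. Since $P_{s:\tau}(x,z)$ does not depend on $t$, the product rule gives
\begin{equation}
\partial_t P^{(z)}_{s:t}(x,y) \;=\; \frac{1}{P_{s:\tau}(x,z)}\Bigl[\bigl(\partial_t P_{s:t}(x,y)\bigr)P_{t:\tau}(y,z) \;+\; P_{s:t}(x,y)\bigl(\partial_t P_{t:\tau}(y,z)\bigr)\Bigr]. \nonumber
\end{equation}
I would then plug in the Kolmogorov forward equation $\partial_t P_{s:t}(x,y)=\sum_u P_{s:t}(x,u)A_t(u,y)$ and the Kolmogorov backward equation $\partial_t P_{t:\tau}(y,z)=-\sum_u A_t(y,u)P_{t:\tau}(u,z)$, and evaluate at $t=s$ using $P_{s:s}(x,y)=\delta_{xy}$. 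The first term collapses to $A_s(x,y)P_{s:\tau}(y,z)/P_{s:\tau}(x,z)$, while the second term contributes only when $x=y$, producing $-\delta_{xy}\sum_u A_s(x,u)P_{s:\tau}(u,z)/P_{s:\tau}(x,z)$. Adding these gives the stated formula.

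The main obstacles are mild bookkeeping ones rather than deep ones. First, one has to justify exchanging conditioning and the time derivative, which follows from the assumed continuity and differentiability of $P_{s:t}$ over the finite state space and the positivity of $P_{s:\tau}(x,z)$. Second, one must correctly invoke the \emph{backward} equation (not the forward one) for the factor $P_{t:\tau}(y,z)$ since it is the initial time $t$ that is being differentiated; getting the sign right here is the only place a careless reader could slip. Once these are handled, the computation is purely algebraic and the Markov property of $\mathbb{Q}^{(z)}$ follows immediately from the product-form expression of $P^{(z)}_{s:t}$, completing the proof.
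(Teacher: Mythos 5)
Your proposal is correct and follows essentially the same route as the paper's proof: condition via Bayes' rule to get $P_{s:t}(x,y)P_{t:\tau}(y,z)/P_{s:\tau}(x,z)$, verify the Markov property from the conditioning argument, and differentiate at $t=s$ using the forward equation for the first factor and the backward equation for $P_{t:\tau}(y,z)$, with the $\delta_{xy}$ term arising exactly as in the paper. The only cosmetic difference is that you name the backward Kolmogorov equation explicitly where the paper writes $\partial_s P_{s:\tau}(y,z)$ and substitutes the same identity.
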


\begin{proof}
The conditional probability of $X_{t}$ given the natural filtration $\mathcal{F}_u$ and $X_{s}$ with $u\leq s \leq t \leq \tau$ under the measure $\mathbb{Q}^{(z)}$ is as follows: 
\begin{align}
    \mathbb{Q}^{(z)}(X_{t}=y\vert X_{s}=x,\mathcal{F}_{u}) \nonumber
    &=\mathbb{Q}(X_{t}=y\vert X_{s}=x, X_{\tau}=z, \mathcal{F}_{u}) ,\\ \nonumber
    &=\mathbb{Q}(X_{t}=y\vert X_{s}=x, X_{\tau}=z), & (\because \mathbb{Q}\in \mathcal{M})\\ \nonumber
    &=\mathbb{Q}^{(z)}(X_{t}=y\vert X_{s}=x),
\end{align}
which confirms $\mathbb{Q}^{(z)}$ is Markov.

Next, the transition probability of $\mathbb{Q}^{(z)}$, denoted by $P_{s:t}(x,y;z)$, is derived as: 
\begin{align} 
    P_{s:t}(x,y;z) \nonumber
    &=\mathbb{Q}^{(z)}(X_{t}=y\vert X_{s}=x), \\  \nonumber
    &=\mathbb{Q}(X_{t}=y\vert X_{s}=x)\frac{\mathbb{Q}(X_{\tau}=z\vert X_{t}=y)}{\mathbb{Q}(X_{\tau}=z\vert X_{s}=x)}, \\  \nonumber
    &=P_{s:t}(x,y)\frac{P_{t:\tau}(y, z)}{P_{s:\tau}(x,z)}.  \nonumber
\end{align}
Note that, we assumed the measure $\mathbb{Q}$ construct bridge everywhere, the probability ratio has finite value.

Finally, the corresponding generator $A_{s}(x,y;z)$ is obtained using the Kolmogorov forward equation: \begin{align} 
    A_{s}(x,y;z) \nonumber
    &= \frac{\partial }{\partial t}P_{s:t}(x,y;z)\bigg\vert_{t=s}, \\ \nonumber
    &= A_{s}(x,y)\frac{P_{s:\tau}(y,z)}{P_{s:\tau}(x,z)} + \delta_{xy}\left[\frac{\partial_{s}P_{s:\tau}(y,z)}{P_{s:\tau}(x,z)}\right], \\ \nonumber
    &= A_{s}(x,y)\frac{P_{s:\tau}(y,z)}{P_{s:\tau}(x,z)} - \delta_{xy}\left[\sum_{u}A_{s}(x, u)\frac{P_{s:\tau}(u,z)}{P_{s:\tau}(x,z)}\right]. \nonumber
\end{align}
\end{proof}

We now consider the transition probability and generator of conditioned measure $\Lambda_{\cdot\vert 0}$ of $\Lambda \in \mathcal{R}(\mathbb{Q})$. 

\begin{lemma}\label{lemma:generator of pinned down reciprocal}
    For a reciprocal measure $\Lambda\in \mathcal{R}(\mathbb{Q})$, the conditioned process with $X_0=x_0$ is denoted by $\Lambda_{\cdot\vert 0=x_0}$.
    The generator of $\Lambda_{\cdot\vert 0=x_0}$ is given by the conditional expectation:
    \begin{equation}
        A_{s}^{\Lambda_{\cdot\vert 0=x_0}}(x, y) = \mathbb{E}_{z\sim \Lambda_{\tau \vert 0, s}}\left[ A_{s}(x, y;z) \vert X_0=x_0, X_s=x\right], \nonumber
    \end{equation}
    where $A_{s}(x, y; z)$ is the generator of the conditioned process of $\mathbb{Q}$ with $X_{\tau}=z$.
\end{lemma}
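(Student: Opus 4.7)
The plan is to start from the reciprocal structure $\Lambda = \Lambda_{0,\tau}\,\mathbb{Q}_{\cdot\vert 0,\tau}$, write down the transition probability of $\Lambda_{\cdot\vert 0=x_0}$ by inserting the terminal state $X_\tau$ and marginalizing, and then differentiate at $t=s$ to read off the generator. This mirrors the derivation in \cref{lemma:h-transform}, but now the conditioning on $X_\tau$ is replaced by an average against the posterior $\Lambda_{\tau\vert 0,s}$.

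More concretely, first I would note that $\Lambda_{\cdot\vert 0=x_0}$ is Markov (as used in \cref{lemma:KL divergence factorization}), so its transition probability $P^{\Lambda_{\vert 0=x_0}}_{s:t}(x,y)$ is well defined. I would expand
\begin{equation}
    P^{\Lambda_{\vert 0=x_0}}_{s:t}(x,y) \;=\; \sum_{z\in\mathcal{X}} \Lambda(X_\tau=z \,\vert\, X_0=x_0,X_s=x)\; \Lambda(X_t=y \,\vert\, X_0=x_0,X_s=x,X_\tau=z). \nonumber
\end{equation}
The key identification is that the inner conditional probability equals $P_{s:t}(x,y;z)$, the transition probability of the pinned process $\mathbb{Q}^{(z)}$ from \cref{lemma:h-transform}. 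This uses the reciprocal property to replace $\Lambda(\cdot\vert X_0,X_\tau)$ by $\mathbb{Q}(\cdot\vert X_0,X_\tau)$, then the Markov property of $\mathbb{Q}^{(z)}$ together with $s>0$ to drop the extra conditioning on $X_0$. Hence
\begin{equation}
    P^{\Lambda_{\vert 0=x_0}}_{s:t}(x,y) \;=\; \sum_{z} \Lambda_{\tau\vert 0,s}(z \,\vert\, x_0,x)\; P_{s:t}(x,y;z). \nonumber
\end{equation}

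Finally, I would differentiate with respect to $t$ at $t=s$. Because $\mathcal{X}$ is finite, the derivative and the finite sum commute without any further hypothesis, and using the definition $A_s(x,y;z)=\partial_t P_{s:t}(x,y;z)\vert_{t=s}$ I obtain
\begin{equation}
    A^{\Lambda_{\vert 0=x_0}}_s(x,y) \;=\; \sum_{z} \Lambda_{\tau\vert 0,s}(z\,\vert\, x_0,x)\; A_s(x,y;z) \;=\; \mathbb{E}_{z\sim \Lambda_{\tau\vert 0,s}}\!\left[A_s(x,y;z)\,\big\vert\, X_0=x_0,X_s=x\right], \nonumber
\end{equation}
which is the claimed identity.

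The step I expect to be the main obstacle is the justification of the identity $\Lambda(X_t=y\vert X_0=x_0,X_s=x,X_\tau=z)=P_{s:t}(x,y;z)$. This is where the reciprocal hypothesis is used in an essential way: without it, the conditional law given $(X_0,X_\tau)$ is not $\mathbb{Q}(\cdot\vert X_0,X_\tau)$, and the Markov property of the bridge (\cref{lemma:h-transform}) cannot be invoked to discard the $X_0$ in the conditioning. Everything else is a clean manipulation of finite sums of transition probabilities, so once this identity is secured the differentiation at $t=s$ yields the generator formula directly.
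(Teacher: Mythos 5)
Your proposal is correct and follows essentially the same route as the paper's proof: decompose $P^{\Lambda_{\vert 0=x_0}}_{s:t}(x,y)$ by conditioning on $X_\tau$, use the reciprocal property together with the Markov property of the pinned process from \cref{lemma:h-transform} to identify the inner kernel with $P_{s:t}(x,y;z)$, and differentiate at $t=s$. The only difference is bookkeeping: the paper expands the weight $\Lambda_{\tau\vert 0,s}$ explicitly via Bayes' rule (in terms of the densities $\nu,\mu$ and $P_{s:\tau}$) and applies the product rule before reassembling the conditional expectation, whereas you keep the $t$-independent weight intact and differentiate only $P_{s:t}(x,y;z)$, which is a slightly more direct rendering of the same computation.
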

\begin{proof}
We denote the transition probability conditioned on $X_0=x_0$ by $P^{\Lambda_{\cdot\vert 0=x_0}}_{s:t}$, which is computed as follows:
\begin{equation}
    P^{\Lambda_{\cdot\vert 0=x_0}}_{s:t}(x,y) = \int_{\mathcal{X}} 
    \Lambda_{\cdot\vert 0=x_0}(X_{\tau}=z\vert X_{s}=x) 
    \Lambda_{\cdot\vert 0=x_0}(X_{t}=y \vert X_{s}=x, X_{\tau}=z) dz. \nonumber
\end{equation}
The first term in the integrand is
\begin{align}\nonumber
    \Lambda_{\cdot\vert 0=x_0}(X_{\tau}=z\vert X_{s}=x)
    &= \Lambda(X_{\tau}=z\vert X_0=x_0, X_{s}=x),\\ \nonumber
    &= \frac{\Lambda(X_\tau = z \vert X_0 = x_0)\Lambda(X_s = x \vert X_0 = x_0, X_\tau = z)}{\Lambda(X_s = x \vert X_0=x_{0})}, \\ \nonumber
    &= \frac{\nu_{\tau}(z; x_0)}{\nu_{s}(x; x_0)} P_{0:s}(x_0, x; z), \\ \nonumber
    &= \frac{\nu_{\tau}(z; x_0)}{\nu_{s}(x; x_0)} P_{0:s}(x_0, x)\frac{P_{s:\tau}(x,z)}{P_{0:\tau}(x_0, z)},\\ \nonumber
    &= \frac{\nu_{\tau}(z; x_0)}{\nu_{s}(x; x_0)} \frac{\mu_{s}(x; x_0)}{\mu_\tau(z; x_0)}P_{s:\tau}(x,z), \nonumber
\end{align}
where $\nu_{s}(\cdot;x_0), \mu_{s}(\cdot ;x_0)$ are the probability mass functions of $\Lambda_{s\vert 0}, \mathbb{Q}_{s\vert 0}$, respectively, with $X_0=x_0$.
Given the initial-terminal condition, $\Lambda$ is equivalent to the reference $\mathbb{Q}$ based on the definition of reciprocal class \cref{def:reciprocal projection}.
Similarly, the second term is same as $\mathbb{Q}(X_{t}=y \vert X_0=x_0, X_{s}=x, X_{\tau}=z)$, which can be expressed as $P_{s:t}(x,y; z)$ due to the Markov property of $\mathbb{Q}$.
Therefore, the transition probability is:
\begin{align}
    P^{\Lambda_{\cdot\vert 0=x_{0}}}_{s:t}(x,y) &= 
    \frac{\mu_s (x;x_0)}{\nu_s (x; x_0)}P_{s:t}(x,y) \int_{\mathcal{X}}P_{t:\tau}(y,z)\frac{\nu_\tau(z;x_0)}{\mu_\tau(z;x_0)} dz. \nonumber
\end{align}

Accordingly, the generator is derived as:
\begin{align}
    A^{\Lambda_{\cdot\vert 0=x_0}}_{s}(x,y) \nonumber
    &= \partial_t P^{\Lambda_{\cdot\vert 0=x_{0}}}_{s:t}(x,y) \bigg\vert_{t=s}, \\ \nonumber
    &= \frac{\mu_s (x;x_0)}{\nu_s (x; x_0)} \partial_t P_{s:t}(x,y)\bigg\vert_{t=s} \int_{\mathcal{X}}P_{t:\tau}(y,z)\frac{\nu_\tau(z;x_0)}{\mu_\tau (z;x_0)} dz \\ \nonumber
    &\quad+ \frac{\mu_s (x;x_0)}{\nu_s (x;x_0)}P_{s:t}(x,y) \int_{\mathcal{X}}\partial_t P_{t:\tau}(y,z)\bigg\vert_{t=s}\frac{\nu_\tau (z; x_0)}{\mu_\tau (z;x_0)} dz, \\ \nonumber
    &=  \frac{\mu_s (x;x_0)}{\nu_s (x;x_0)}A_{s}(x,y) \int_{\mathcal{X}}P_{s:\tau}(y,z)\frac{\nu_\tau (z;x_0)}{\mu_\tau (z;x_0)}dz\\ \nonumber
    &\quad- \delta_{xy}\frac{\mu_s (x; x_0)}{\nu_s (x; x_0)}\int_{\mathcal{X}}\sum_{u} \left[A_{s}(y, u)P_{s:\tau}(u,z)\right] \frac{\nu_\tau (z; x_0)}{\mu_\tau (z;x_0)}dz, \\ \nonumber
    &= \int_{\mathcal{X}}\frac{\mu_s (x; x_0)}{\nu_s(x; x_0)} \left[A_{s}(x,y)P_{s:\tau}(y,z) - \delta_{xy}\sum_{u}A_s (y,u) P_{s:\tau}(u, z)\right]\frac{\nu_\tau (z; x_0)}{\mu_\tau (z; x_0)}dz, \\ \nonumber
    &= \int_{\mathcal{X}}\frac{\nu_{\tau}(z; x_0)}{\mu_\tau(z; x_0)} \frac{\mu_{s}(x; x_0)}{\nu_{s}(x; x_0)}P_{s:\tau}(x,z) A_{s}(x,y;z) dz, \\ \nonumber
    &= \int_{\mathcal{X}}\Lambda_{\vert 0}(X_{\tau}=z\vert X_s=x) A_{s}(x, y;z) dz, \\ \nonumber
    &= \mathbb{E}_{z\sim\Lambda_{\tau \vert 0, s}}[A_{s}(x,y;z)\vert X_0=x_{0}, X_s=x]. \nonumber
\end{align}

In conclusion, the generator of $\Lambda_{\cdot\vert 0}$ is given as the conditional expectation,
\begin{equation}
    A_{s}^{\Lambda_{\cdot\vert 0}}(x, y)=\mathbb{E}_{\Lambda_{\tau\vert 0,s}}[A_{s}(x,y; X_\tau)\vert X_0,X_s=x].  \nonumber
\end{equation}
\end{proof}

\subsubsection{Marginal distribution of mixture of Markov chains}\label{appendix.a.markov_projection.marginal distribution}

A reciprocal process $\Lambda \in \mathcal{R}(\mathbb{Q})$ can be represented as a mixture of Markov chains as:
\begin{equation}\nonumber
    \Lambda(\cdot) = \sum_{x, y}\mathbb{Q}(\cdot \vert X_0=x, X_\tau=y) \Lambda_{0,\tau}(x, y).
\end{equation}
We here propose the mixture representation of Markov chain similar to theorem 2 of \citet{peluchetti2023non} which describing mixture of diffusion process over Euclidean space.

\begin{proposition}\label{prop:marginal distribution of Markov projection}
    Consider a family of Markov measures on $\Omega$ indexed by $u\in\mathcal{I}$
    \begin{align} \nonumber
        \partial_t P_{s:t}^{u}(x,y)
        &=\sum_z P_{s:t}^{u}(x, z)A_{t}^u(z, y), \\ \nonumber
        A_{s}^{u}(x,y)
        &=\partial_t P_{s, t}^{u}(x, y)\bigg\vert_{t=s},
    \end{align}
    where $P^u$ and $A^u$ is the transition probability and generator for each measure associated to the process $X^u=(X_{t}^u)_{t\in[0,\tau]}$.
    We here assume that $A_s^u$ is finite for every $u$.
    Let the corresponding Markov measure be $M^u$, and $\mu^u_t$ be the density of $M^u_t$, the marginal distribution of $X_t^u$.
    Let a mixture of $M^u$ with the index distribution $\mathcal{U}$ over $\mathcal{I}$ be $\Lambda$,
    \begin{equation} \nonumber
        \Lambda(\cdot)=\int_{\mathcal{I}} M^u(\cdot)\mathcal{U}(du).
    \end{equation}
    We denote mixture marginal density $\mu_t$ and mixture initial distribution $\Lambda_0$ as:
    \begin{align} \nonumber
        &\mu_t(\cdot)= \int_{\mathcal{I}}\mu_t^u(\cdot)\mathcal{U}(du),\\ \nonumber
        &\Lambda_0(\cdot)=\int_{\mathcal{I}} M^u_0(\cdot) \mathcal{U}(du).
    \end{align}

    Let $X=(X_t)_{t\in[0, \tau]}$ be another Markov chain generated by:
    \begin{align} \nonumber
        &\partial_t P_{s:t}(x,y)
        =\sum_z P_{s:t}(x, z)A_{t}(z, y), \\ \nonumber
        &A_{t}(x, y) 
        = \frac{1}{\mu_t(x)}\int_{\mathcal{I}} 
        A_t^u(x,y)\mu_t^u(x)
        \mathcal{U}(du), \\ \nonumber
        &X_0 
        \sim \Lambda_0 .
    \end{align}

    Then, the marginal density of $X_t$ is $\mu_t$.
    It is assumed that exchange of $\partial_t$ and $\int_{\mathcal{I}}\mathcal{U}(du)$ is justified.
\end{proposition}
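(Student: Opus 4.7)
The plan is to show that the marginal density of $X_t$, call it $\rho_t$, satisfies exactly the same Kolmogorov forward equation as $\mu_t$, with the same initial condition, and then invoke uniqueness to conclude $\rho_t=\mu_t$ for all $t\in[0,\tau]$.

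First I would write the forward Kolmogorov equation for each member of the family: for every $u\in\mathcal{I}$,
\begin{equation} \nonumber
    \partial_t \mu_t^u(y) \;=\; \sum_{x} \mu_t^u(x)\, A_t^u(x,y).
\end{equation}
Integrating against $\mathcal{U}(du)$ and using the assumed exchange of $\partial_t$ and $\int_{\mathcal{I}}\mathcal{U}(du)$ gives
\begin{equation} \nonumber
    \partial_t \mu_t(y) \;=\; \int_{\mathcal{I}} \sum_{x} \mu_t^u(x)\, A_t^u(x,y)\, \mathcal{U}(du) \;=\; \sum_{x} \int_{\mathcal{I}} \mu_t^u(x)\, A_t^u(x,y)\, \mathcal{U}(du),
\end{equation}
where the swap of the finite sum over $\mathcal{X}$ and the integral over $\mathcal{I}$ is legitimate because $|\mathcal{X}|<\infty$ and $A_t^u$ is assumed finite. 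The defining identity $\mu_t(x) A_t(x,y) = \int_{\mathcal{I}} A_t^u(x,y)\,\mu_t^u(x)\,\mathcal{U}(du)$ then rewrites the right-hand side as $\sum_x \mu_t(x) A_t(x,y)$, so $\mu_t$ solves the Kolmogorov forward equation associated with the generator $A_t$.

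Next I would check that $A_t$ is a bona fide generator: writing $w^u_t(x) = \mu_t^u(x)/\mu_t(x)$ (whenever $\mu_t(x)>0$), one has $\int_{\mathcal{I}} w^u_t(x)\,\mathcal{U}(du) = 1$, so $A_t(x,\cdot)$ is a $w^u_t(x)$-weighted average of the row vectors $A_t^u(x,\cdot)$; non-negativity off the diagonal and vanishing row sums then pass through from each $A_t^u$ to $A_t$. This ensures that the Markov chain $X$ generated by $A_t$ is well-defined.

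Finally, the law of $X$ has marginal density $\rho_t$ solving the same forward equation with generator $A_t$, and by construction $\rho_0(x) = \Lambda_0(x) = \int_{\mathcal{I}} M_0^u(x)\,\mathcal{U}(du) = \mu_0(x)$. Since the Kolmogorov forward equation on a finite state space with bounded time-dependent generator admits a unique probability solution given the initial datum, we conclude $\rho_t \equiv \mu_t$ on $[0,\tau]$. The main obstacle is essentially bookkeeping rather than conceptual: justifying the $\partial_t$--$\int_{\mathcal{I}}$ interchange (which the statement explicitly assumes) and verifying that $A_t$ is a legitimate generator so that uniqueness of the forward equation applies; once both are in hand, the equality of marginals is immediate.
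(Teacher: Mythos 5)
Your proposal is correct and follows essentially the same route as the paper's proof: verify that $A_t$ is a legitimate generator (nonnegative off-diagonal entries, vanishing row sums) and then show, via the assumed interchange of $\partial_t$ with $\int_{\mathcal{I}}\mathcal{U}(du)$ and the finiteness of $\mathcal{X}$, that $\mu_t$ satisfies the Kolmogorov forward equation associated with $A_t$. Your explicit appeal to uniqueness of the forward equation with initial datum $\mu_0=\Lambda_0$ simply spells out the final step the paper leaves implicit.
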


\begin{proof}
We start from verifying $A_t(x, y)$ admits conditions of transition rate function.
For $x\ne y$ it is trivial that $A_t(x,y)$ is finite and non-negative.
Also, $\sum_{y\in\mathcal{X}} A_t(x,y)$ becomes zeros for all $x$ and $t$ as:
\begin{align} \nonumber
    \sum_{y\in\mathcal{X}}A_t(x,y)
    &= \sum_{y\in\mathcal{X}} 
    \frac{1}{\mu_t(x)}\int_{\mathcal{I}}  A_t^u(x,y)\mu_t^u(x) \mathcal{U}(du), \\ \nonumber
    &=  \frac{1}{\mu_t(x)}\int_{\mathcal{I}} \sum_{y\in\mathcal{X}} A_t^u(x,y)\mu_t^u(x) \mathcal{U}(du), \\ \nonumber
    &=  0.
\end{align}
Thus, mixture of generators $A_t(x,y)$ holds the condition for a generator of Markov measures.

Next, for $t \in (0, \tau)$, 
\begin{align} \nonumber
    \partial_t \mu_t(x) 
    &= \partial_t \int_{\mathcal{I}}\mu_t^u(x) \mathcal{U}(du), \\ \nonumber
    &= \int_{\mathcal{I}} \partial_t\mu_t^u(x) \mathcal{U}(du), & (\text{assumption)}\\ \nonumber
    &= \int_{\mathcal{I}} \sum_{y\in\mathcal{X}} A^u_t(y, x) \mu_t^u(y) \mathcal{U}(du), & (\because \text{Kolmogorov equation})\\ \nonumber
    &= \sum_{y\in\mathcal{X}}\left( \int_{\mathcal{I}} A^u_t(y, x) \mu_t^u(y) \mathcal{U}(du)\right), \\ \nonumber
    &= \sum_{y\in\mathcal{X}} A_t(y, x) \mu_t(y). \\ \nonumber
\end{align}
The equality of the left-hand side and the final line corresponds to the Kolmogorov equation for the process $X$ generated by $A_t$.
This shows that $\mu_t$ is the marginal distribution of $X_t$. 
\end{proof}

\subsubsection{Minimizer of the KL-divergence}\label{appendix.a.markov_projection.Minimizer of the KL divergence}

The Markov projection of a reciprocal process $\Lambda$, denoted $\Pi_{\mathcal{M}}(\Lambda)$, is a Markov process $M^{*}$ which minimizes the reverse KL-divergence $D_{\text{KL}}(\Lambda, M)$.
We here characterize $M^{*}$ by specifying its generator. 
While the generator $A_{t}^{\Lambda_{\cdot\vert 0}}$ of $\Lambda_{\cdot\vert 0}$ represented as a conditional expectation given $X_0, X_t$ as noted in \cref{lemma:generator of pinned down reciprocal}, that of the $M^{*}$ is supposed to be represented as the conditional expectation without $X_0$ condition.
The following lemma says that $\mathbb{E}_{\Lambda_{0\vert t}}\left[ A_{t}^{\Lambda_{\cdot\vert 0}}(X_t, y)\right]$ is the generator.

\begin{lemma}\label{lemma:minimizer of KL divergence}
Let the Markov projection of a reciprocal process $\Lambda\in\mathcal{R}(\mathbb{Q})$ be $M^*=\Pi_{\mathcal{M}}(\Lambda)$. 
Then the generator of $M^*$ is $A_{t}^{M^*}(X_t, y)=\mathbb{E}_{\Lambda_{\tau\vert t}}\left[A_{t}(X_t, y;X_\tau) \bigg\vert X_t\right]$.
\end{lemma}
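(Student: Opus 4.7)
The plan is to minimize $D_{\text{KL}}(\Lambda\Vert M)$ over Markov measures $M$ by first disintegrating the divergence along the initial state using \cref{lemma:KL divergence factorization} and then performing a pointwise variational optimization on the transition rate $A_t^M$. First, I restrict to candidates satisfying $M_0=\Lambda_0$, since any other choice contributes an unavoidable strictly positive term $D_{\text{KL}}(\Lambda_0\Vert M_0)$ after disintegration. This yields
\begin{equation*}
    D_{\text{KL}}(\Lambda\Vert M) = \mathbb{E}_{\Lambda_0}\left[D_{\text{KL}}(\Lambda_{\cdot\vert 0}\Vert M_{\cdot\vert 0})\right].
\end{equation*}
Since both $\Lambda_{\cdot\vert 0}$ (Markov by \cref{lemma:KL divergence factorization}) and $M_{\cdot\vert 0}$ are Markov, the inner divergence is computed via \cref{eq:KL divergence between Markov measures}, giving
\begin{equation*}
    D_{\text{KL}}(\Lambda\Vert M) = \int_0^\tau \mathbb{E}_{\Lambda_{0,t}}\left[\sum_{y\ne X_t} A_t^{\Lambda_{\cdot\vert 0}}(X_t, y)\log\tfrac{A_t^{\Lambda_{\cdot\vert 0}}(X_t, y)}{A_t^M(X_t, y)} + (A_t^{\Lambda_{\cdot\vert 0}} - A_t^M)(X_t, X_t)\right]dt.
\end{equation*}

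Next, I will minimize pointwise in $(t, x, y)$ with $y\ne x$, using the constraint $A_t^M(x, x) = -\sum_{y\ne x} A_t^M(x, y)$ to rewrite the integrand as $\sum_{y\ne x}[a\log(a/b) + b - a]$ with $a = A_t^{\Lambda_{\cdot\vert 0}}(x, y)$ and $b = A_t^M(x, y)$. The map $b\mapsto a\log(a/b) + b - a$ is strictly convex on $b>0$, so the minimizer is unique. Since the Markov rate $A_t^M(x, y)$ is independent of $X_0$, the expectation $\mathbb{E}_{\Lambda_{0\vert t}}$ acts only on $a$, and the first-order condition gives
\begin{equation*}
    A_t^{M^*}(x, y) = \mathbb{E}_{\Lambda_{0\vert t}}\left[A_t^{\Lambda_{\cdot\vert 0}}(x, y) \,\middle|\, X_t = x\right].
\end{equation*}

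Finally, I will substitute the conditional expectation form of $A_t^{\Lambda_{\cdot\vert 0}}$ from \cref{lemma:generator of pinned down reciprocal} and invoke the tower property in $X_0$ to collapse the nested expectations:
\begin{equation*}
    A_t^{M^*}(x, y) = \mathbb{E}_{\Lambda_{0\vert t}}\left[\mathbb{E}_{\Lambda_{\tau\vert 0, t}}\left[A_t(x, y; X_\tau)\,\middle|\, X_0, X_t\right] \,\middle|\, X_t = x\right] = \mathbb{E}_{\Lambda_{\tau\vert t}}\left[A_t(x, y; X_\tau) \,\middle|\, X_t = x\right],
\end{equation*}
which matches the claim. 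The main delicate step is justifying the pointwise variational optimization: one must argue that minimizing the integrand at each $(t, x, y)$ is equivalent to minimizing the entire functional (each summand is non-negative and each $A_t^M(x, y)$ is a free variable), and also verify that the generator so obtained defines a bona fide Markov measure satisfying the marginal-matching claim $M^*_t = \Lambda_t$ from \cref{prop:Markov projection}. The latter follows by viewing $\Lambda$ as the mixture $\int \mathbb{Q}(\cdot\vert X_0, X_\tau)\, d\Lambda_{0,\tau}$ and applying \cref{prop:marginal distribution of Markov projection} with the pinned-down measures as components, which identifies $\mathbb{E}_{\Lambda_{\tau\vert t}}[A_t(x, y; X_\tau)\vert X_t = x]$ precisely as the mixture generator whose marginal flow agrees with $\Lambda_t$.
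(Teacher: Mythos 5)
Your proof is correct, but it takes a genuinely different route from the paper's. You minimize the disintegrated KL functional directly: after reducing to $M_0=\Lambda_0$ via \cref{lemma:KL divergence factorization} and \cref{eq:KL divergence between Markov measures}, you observe that each off-diagonal rate $A_t^M(x,y)$ enters through the strictly convex map $b\mapsto a\log(a/b)+b-a$, so the first-order condition identifies the unique pointwise optimizer as $\mathbb{E}_{\Lambda_{0\vert t}}\bigl[A_t^{\Lambda_{\cdot\vert 0}}(x,y)\,\big\vert\,X_t=x\bigr]$, which the tower property and \cref{lemma:generator of pinned down reciprocal} collapse to the claimed conditional expectation; \cref{prop:marginal distribution of Markov projection} then confirms this rate generates a bona fide Markov measure with $M^*_t=\Lambda_t$. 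The paper instead starts from the explicit candidate $M$ with that generator, verifies $M_t=\Lambda_t$ and $\Lambda\ll M$, and shows $D_{\text{KL}}(\Lambda\Vert M^*)-D_{\text{KL}}(\Lambda\Vert M)=D_{\text{KL}}(M\Vert M^*)$, which is simultaneously $\le 0$ by minimality and $\ge 0$, forcing $M=M^*$ --- a Pythagorean-type identity argument that presupposes the existence of the argmin $M^*$ and is reused almost verbatim in the proof of \cref{theorem:convergence}. Your variational argument is more elementary and buys existence and uniqueness of the minimizer outright (strict convexity), at the cost of having to justify that pointwise optimization over the free off-diagonal rates is legitimate and that the optimizer lies in $\mathcal{M}$ (time-regularity of the induced transition probabilities); you flag both points, and handling them via the mixture representation is at the same level of rigor as the paper's own ``mild assumptions.'' The paper's route, conversely, avoids any appeal to first-order conditions and directly furnishes the decomposition needed later for the convergence analysis.
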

\begin{proof}
    Because $\mathbb{Q}$ is assumed to be able to construct bridge everywhere, $A_t(x, y;z)=0 \iff A_t(x,y)=0 \quad\forall x\ne y, z\in \mathcal{X}, t\in [0,\tau)$.
    
    We claim that the Markov measure $M$ generated by $A_{t}^{M}(x, y)=\mathbb{E}_{\Lambda_{\tau\vert t}}\big[A_{t}(X_t, y; X_\tau) \big\vert X_t=x\big]$ with $M_0=\Lambda_0$ is a.s. $M^*$.
    We can re-formulate the generator as:
    \begin{align} 
        A_t^{M}(X_t, y) \nonumber
        &=\mathbb{E}_{\Lambda_{\tau\vert t}}\left[A_t (X_t, y; X_\tau) \big\vert X_t\right], \\ \nonumber
        &=\sum_{x_\tau}\Lambda(X_\tau=x_\tau\vert X_t) A_t (X_t, y; x_\tau), \\ \nonumber
        &=\sum_{x_\tau}\sum_{x_0} \Lambda(X_0=x_0, X_\tau=x_\tau\vert X_t) A_t(X_t, y;x_\tau), \\ \nonumber
        &=\sum_{x_0}\Lambda(X_0=x_0\vert X_t) \sum_{x_\tau}\Lambda(X_\tau=x_\tau \vert X_0=x_0, X_t)  A_t(X_t, y;x_\tau), \\ \nonumber
        &=\sum_{x_0}\Lambda(X_0=x_0\vert X_t)A_t^{\Lambda_{\cdot\vert 0}}(X_t, y), \\ 
        &=\mathbb{E}_{\Lambda_{0\vert t}}\left[A_t^{\Lambda_{\cdot\vert 0}} (X_t, y)\big\vert X_t\right], \label{eq:tmp}
    \end{align}
    which conclude that it becomes conditional expectation of the generator of $\Lambda_{\cdot\vert 0}$.
    Also, based on the \cref{prop:marginal distribution of Markov projection}, the $M_t=\Lambda_t$ for all $t$.
    
    Note that $A_{t}^{\Lambda_{\cdot\vert 0}}(x,y)=\mathbb{E}_{\Lambda_{\tau\vert 0,t}}\big[A_{t}(x, y; X_\tau) \big\vert X_0, X_t=x\big]$, which implies $A_{t}^{M}(x, y)=0 \Longrightarrow A_{t}^{\Lambda_{\cdot\vert 0}}(x,y)=0 \quad\forall x\ne y, z\in \mathcal{X}, t\in [0,\tau)$. 
    Thus, $\Lambda \ll M$. 
    Because $M^*$ is the minimizer of KL-divergence $D_\text{KL}(\Lambda\Vert \cdot)$, we can assume $M^*_0=\Lambda_0$.
    Based on this, we want to show that $D_{\text{KL}}(\Lambda \Vert M^*) - D_{\text{KL}}(\Lambda \Vert M) = 0$.
    Recall that \cref{eq:KL divergence between Markov measures}, where the KL-divergence is formulated as
    \begin{align} \nonumber
        D_{\text{KL}}(\Lambda\Vert M) &= \int_0^\tau \mathbb{E}_{\Lambda_{0,t}}\left[\left(A_t^{\Lambda_{\cdot\vert 0}} - A_t^M\right)(X_t, X_t) + \sum_{y\ne X_t} A_t^{\Lambda_{\cdot\vert 0}}\log\frac{A_t^{\Lambda_{\cdot\vert 0}}}{A_t^M}(X_t, y)\right]dt, \\ \nonumber
        &= \int_0^\tau \underbrace{\mathbb{E}_{\Lambda_{0,t}}\left[\sum_{y\ne X_t} \left(A_t^{\Lambda_{\cdot\vert 0}}\log\frac{A_t^{\Lambda_{\cdot\vert 0}}}{A_t^M} - A_t^{\Lambda_{\cdot\vert 0}} + A_t^M\right) (X_t, y) \right]}_{f(t, \Lambda, M)}dt.
    \end{align}
    
    
    Then, $D_\text{KL}(\Lambda\Vert M^*)-D_\text{KL}(\Lambda\Vert M)=\int_{0}^\tau\Delta dt \le 0$ by definition, where $\Delta:=f(t, \Lambda, M^*)-f(t, \Lambda, M)$.
    \begin{align} \nonumber
        \Delta &= 
        \mathbb{E}_{\Lambda_{0,t}}\left[
        \sum_{y\ne X_t}\left(A_t^{M^*} - A_t^M + A_t^{\Lambda_{\cdot\vert 0}}\log\frac{A_t^M}{A_t^{M^*}}\right)(X_t, y)
        \right], \\ \nonumber
        &= 
        \mathbb{E}_{\Lambda_{t}}\mathbb{E}_{\Lambda_{0\vert t}}\left[
        \sum_{y\ne X_t}\left(A_t^{M^*} - A_t^M + A_t^{\Lambda_{\cdot\vert 0}}\log\frac{A_t^M}{A_t^{M^*}}\right)(X_t, y)
        \right], \\ \nonumber
        &=
        \mathbb{E}_{\Lambda_{t}}\sum_{y\ne X_t}
        \mathbb{E}_{\Lambda_{0\vert t}}\left[
        \left(A_t^{M^*} - A_t^M + A_t^{\Lambda_{\cdot\vert 0}}\log\frac{A_t^M}{A_t^{M^*}}\right)(X_t, y)
        \right], \\ \nonumber
        &=
        \mathbb{E}_{\Lambda_{t}}\sum_{y\ne X_t}
        \left(A_t^{M^*} - A_t^M + \mathbb{E}_{\Lambda_{0\vert t}}\left[A_t^{\Lambda_{\cdot\vert 0}} \vert X_t\right]\log\frac{A_t^M}{A_t^{M^*}}\right)(X_t, y),
        &(\because M, M^* \in \mathcal{M})
        \\ \nonumber
        &=
        \mathbb{E}_{\Lambda_{t}}\sum_{y\ne X_t}
        \left(A_t^{M^*} - A_t^M + A_t^M\log\frac{A_t^M}{A_t^{M^*}}\right)(X_t, y),
        &(\because \cref{eq:tmp})
        \\ \nonumber
        &=
        \mathbb{E}_{M_{t}} 
        \sum_{y\ne X_t}\left(A_t^{M^*} - A_t^M + A_t^M\log\frac{A_t^M}{A_t^{M^*}}\right)(X_t, y).
        &(\because M_t = \Lambda_t\space)
        \\ \nonumber
    \end{align}
    We can deduce that $\int_{0}^\tau \Delta dt=D_\text{KL}(M\Vert M^*) \ge 0$, which conclude that $M=M^*$.
    
\end{proof}

\subsubsection{Proof of \cref{prop:Markov projection}}

Now we prove the proposition \cref{prop:Markov projection} using above lemmas.

\begin{proof}{\cref{prop:Markov projection}}

By the \cref{lemma:KL divergence factorization} and \cref{eq:KL divergence between Markov measures}, the KL-divergence of $\Lambda$ to any Markov measure $M\in\mathcal{M}$ disintegrates as:
\begin{equation} \nonumber
D_{\text{KL}}(\Lambda\Vert M) = \int_0^\tau \mathbb{E}_{\Lambda_{0,t}}\left[\left(A_t^{\Lambda_{\cdot\vert 0}} - A_t^M\right)(X_t, X_t) + \sum_{y\ne X_t} A_t^{\Lambda_{\cdot\vert 0}}\log\frac{A_t^{\Lambda_{\cdot\vert 0}}}{A_t^M}(X_t, y)\right]dt,
\end{equation}
where the $A_t^{\Lambda_{\cdot\vert 0}}$ is the generator of pinned process of $\Lambda_{\cdot\vert 0}$ stated by \cref{lemma:generator of pinned down reciprocal}.
According to \cref{lemma:minimizer of KL divergence}, the Markov measure $M^*$ generated by $A_t(x,y)=\mathbb{E}_{\Lambda_{\tau\vert t}}\left[A_t(x,y;X_\tau)\vert X_t=x\right]$
is the minimizer of $D_{\text{KL}}(\Lambda \Vert M)$ for $M \in \mathcal{M}$.
In last, \cref{prop:marginal distribution of Markov projection} ensure the time marginals of $\Lambda, M^*$ for all $t \in [0,\tau]$ are equivalent.
\end{proof}

\subsection{Time-reversal Markov projection}
\begin{proposition} \label{prop:Time-reversal Markov projection}
    Let $M^*=\Pi_\mathcal{M}(\Lambda), \Lambda\in \mathcal{R}(\mathbb{Q})$ and the forward generator of $\mathbb{Q}$ be $A_t(x, y)$ with the transition probability $P_{s:t}(x, y)$. 
    Let $X=(X_t)_{t\in[0, \tau]}$ be the canonical process and $Y=(Y_t)_{t\in[0, \tau]}$ be the time reversal of $X$, where $Y_t=X_{\tau-t}$.
    Under mild assumptions, the time-reversal generator of $M^*$ becomes
    \begin{align} \nonumber
        \tilde{A}_t^{M^*}(Y_t, x)&=\mathbb{E}_{0\vert t}\left[\tilde{A}_t(Y_t, x; X_0) \bigg\vert Y_t\right] \\ \nonumber
        \tilde{A}_s(y, x;z)
        &=\partial_t \tilde{P}_{s:t}(y, x;z) \bigg\vert_{t=s} \\ \nonumber
        &=A_{\tau-s}(x, y)\frac{P_{0:\tau-s}(z, x)}{P_{0:\tau-s}(z, y)} - \delta_{xy}\sum_u A_{\tau-s}(u, x)\frac{P_{0:\tau-s}(z, u)}{P_{0:\tau-s}(z, x)},
    \end{align}
    where $z \in \mathcal{X}$, and $\tilde{A}_s(\cdot, \cdot; z)$ is the generator for the conditioned process of $Y$ with $Y_\tau=z$.
    The reverse KL-divergence is
    \begin{equation} \nonumber
        D_\text{KL}(\Lambda \Vert M^*)= \int_0^\tau \mathbb{E}_{\Lambda_{\tau, t}}\left[(\tilde{A}^{\Lambda_{\cdot\vert \tau}} - \tilde{A}^{M^*})(Y_t, Y_t) + \sum_{x\ne Y_t}\tilde{A}^{\Lambda_{\cdot\vert \tau}} \log{\frac{\tilde{A}^{\Lambda_{\cdot\vert \tau}}}{\tilde{A}^{M^*}}}(Y_t, x)\right]dt,
    \end{equation}
    where the $\tilde{A}^{\Lambda_{\cdot\vert \tau}}$ is the time-reversal generator for the conditioned measure $\Lambda_{\cdot\vert \tau}$ which is defined as
    \begin{equation} \nonumber
        \tilde{A}_t^{\Lambda_{\cdot\vert \tau}}(Y_t, x)= \mathbb{E}_{\Lambda_{0\vert t, \tau}}\left[\tilde{A}(Y_t, x;Y_\tau)\bigg\vert Y_t, Y_0\right].
    \end{equation}
\end{proposition}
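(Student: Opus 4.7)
The plan is to mirror the derivation of \cref{prop:Markov projection}, with the roles of the initial and terminal times exchanged via the time reversal $Y_t = X_{\tau-t}$. The key observation is that a path measure is Markov if and only if its time reversal is Markov, and the reciprocal class $\mathcal{R}(\mathbb{Q})$ is preserved under time reversal. Hence the (unique) forward Markov projection $M^\ast = \Pi_{\mathcal{M}}(\Lambda)$ produced by \cref{prop:Markov projection} can be equivalently characterized by its time-reversal generator $\tilde{A}_t^{M^\ast}$, and I would compute this generator by re-running the four-step argument of \cref{prop:Markov projection} starting from the terminal end.

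First, I would derive the time-reversal generator $\tilde{A}_s(y,x;z)$ of the pinned reference process $\mathbb{Q}(\cdot \vert X_0 = z)$ by repeating the argument of \cref{lemma:h-transform}: write $\tilde{P}_{s:t}(y,x;z) = \mathbb{Q}(Y_t = x \vert Y_s = y, Y_0 = z)$, apply Bayes' rule to express this as a ratio of the forward transitions $P_{0:\tau-t}(z,\cdot)$ and $P_{\tau-t:\tau-s}(\cdot,\cdot)$, and then differentiate at $t = s$ via $\tilde{A}_s(y,x;z) = \partial_t \tilde{P}_{s:t}(y,x;z)\big\vert_{t=s}$ to obtain exactly the claimed h-transform formula. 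Second, I would repeat the proof of \cref{lemma:generator of pinned down reciprocal} with the conditioning variable changed from $X_0$ to $Y_0 = X_\tau$; because $\Lambda \in \mathcal{R}(\mathbb{Q})$ coincides with $\mathbb{Q}$ conditionally on the endpoints, the same factorization through $\Lambda_{0,\tau}$ goes through and yields the stated expression for $\tilde{A}_t^{\Lambda_{\cdot\vert \tau}}$.

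Third, I would apply the analog of \cref{lemma:minimizer of KL divergence}, now parameterizing the candidate Markov measure $M$ through its time-reversal generator rather than its forward generator. Because a Markov measure on a finite state space is uniquely determined by either its forward or its backward generator together with any single marginal, and since uniqueness of the minimizer is already furnished by \cref{prop:Markov projection}, the same tower-of-conditional-expectations computation used in \cref{appendix.a.markov_projection.Minimizer of the KL divergence} (with the $X_0$-conditioning replaced by $X_\tau$-conditioning) identifies $\tilde{A}_t^{M^\ast}(Y_t, x)$ as the claimed conditional expectation. The KL-divergence formula then follows by disintegrating $D_{\text{KL}}(\Lambda \Vert M^\ast)$ via the time-reversed analog of \cref{lemma:KL divergence factorization} (conditioning on $Y_0 = X_\tau$ makes $\Lambda_{\cdot\vert \tau}$ Markov by the same reciprocal-plus-endpoint-conditioning argument), followed by the backward version of the Girsanov identity in \cref{eq:KL divergence between Markov measures}.

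The step I expect to be most delicate is establishing these backward analogs of \cref{lemma:KL divergence factorization} and \cref{eq:KL divergence between Markov measures}, because the Girsanov derivation in \cref{appendix.a.markov_projection.KL divergence of markov measure} is carried out with respect to the intrinsically forward canonical filtration. I would need to rewrite the compensator-martingale $N_t + \int_0^t A_s(\omega_s, \omega_s)\,ds$ and the Radon--Nikodym exponential under the reversed filtration $(\sigma(Y_u : u \le t))_t$ and verify that the jump count $N_t$ and escape rates $-A_t(x,x)$ together produce a fully symmetric expression in terms of the backward generators. Once this time-symmetry of the Girsanov representation is in hand, the remainder of the argument reduces to bookkeeping: swapping endpoints and replacing each forward transition probability with its backward counterpart obtained in step one.
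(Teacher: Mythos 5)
Your proposal is correct and takes essentially the same route as the paper, whose entire proof is ``Follow the proof of \cref{prop:Markov projection}. Note that the Markov measure is time-symmetric.''---you simply make explicit the steps the paper leaves implicit (reversed h-transform, conditioning on $X_\tau$ instead of $X_0$, and the backward Girsanov/disintegration identities), which is exactly the intended argument. One minor notational slip: in your Bayes computation the pinning should be on $Y_\tau=z$ (i.e.\ $X_0=z$), not $Y_0=z$, as your own verbal description and the stated formula with $P_{0:\tau-s}(z,\cdot)$ already indicate.
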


\begin{proof}
    Follow the proof of \cref{prop:Markov projection}. Note that the Markov measure is time-symmetric.
\end{proof}

\subsection{Proof of \cref{theorem:convergence}} \label{appendix.a.proof of convergence}

\begin{proof}
    With iterative projection, we have a sequence of measure $\Lambda^{(n)}$ such that:
    \begin{align}\nonumber
        \Lambda^{(2n+1)}=\Pi_\mathcal{M}(\Lambda^{(2n)}), \\ \nonumber
        \Lambda^{(2n+2)}=\Pi_{\mathcal{R}(\mathbb{Q})}(\Lambda^{(2n+1)}),
    \end{align}
    where $\Lambda^{(0)}\in \mathcal{R}(\mathbb{Q})$.
    Let $\Pi^{(n)}=\Lambda^{(2n)}$ and $M^{(n)}=\Lambda^{(2n+1)}$.
    We will omit the superscription $\cdot^{(n)}$ if there is no confusion.

    Let $\mathbb{P}\in\mathcal{M}$ be a Markov process generated by a generator $A^\mathbb{P}_{t}(x, y)< \infty$ with initial distribution $M_0$ with the Kolmogorov forward equation.
    Assuming that $D_{\text{KL}}(\Pi \Vert \mathbb{P}) < \infty$, then
    \begin{equation} \nonumber
        D_\text{KL}(\Pi \Vert \mathbb{P}) - D_\text{KL}(\Pi \Vert M) = \mathbb{E}_{\Pi}\left[\log\frac{dM}{d\mathbb{P}}\right] < \infty,
    \end{equation}
    which implying the Radon-Nikodym derivative $\frac{dM}{d\mathbb{P}} < \infty$ over the support of $\Pi$.
    
    Then we will first show the equality:
    \begin{equation}\nonumber
        \mathbb{E}_{\Pi}\left[\log\frac{dM}{d\mathbb{P}}\right] = \mathbb{E}_{M}\left[\log\frac{dM}{d\mathbb{P}}\right].
    \end{equation}

    As $\Pi_0=M_0$, it is sufficient to show
    \begin{equation}\nonumber
        \mathbb{E}_{\Pi}\left[\log\frac{dM_{\cdot\vert 0}}{d\mathbb{P}_{\cdot\vert 0}}\right] = \mathbb{E}_{M}\left[\log\frac{dM_{\cdot\vert 0}}{d\mathbb{P}_{\cdot\vert 0}}\right],
    \end{equation}
    where
    \begin{equation}\nonumber
        \log\frac{dM_{\cdot\vert 0}}{d\mathbb{P}_{\cdot\vert 0}}(\omega) = 
        \int_0^\tau \log\frac{A_t^M}{A_t^\mathbb{P}}(\omega_{t-}, \omega_t)dN_t(\omega) 
        + \int_0^\tau (A_t^M - A_t^\mathbb{P})(\omega_t, \omega_t) dt
    \end{equation}
    is given from \cref{appendix.a.markov_projection.KL divergence of markov measure}.
    Since $\Pi_{\vert 0}$ is Markov from \cref{lemma:KL divergence factorization} with generator 
    \begin{equation} \nonumber
        A_t^{\Pi_{\vert 0}}(x, y)=\mathbb{E}_{\Pi_{\tau\vert 0,t}}\left[A_t(x,y;X_\tau)\vert X_0, X_t=x\right],
    \end{equation}
    which is derived at the \cref{lemma:generator of pinned down reciprocal}.
    Then,
    \begin{align} \nonumber
        & \mathbb{E}_{\Pi}\left[ \int_0^\tau \log\frac{A_t^M}{A_t^\mathbb{P}}(X_{t-}, X_t)dN_t \right] \\ \nonumber
        &= \mathbb{E}_{\Pi}\left[ \int_0^\tau \sum_{y\ne X_t}A_t^{\Pi_{\cdot\vert 0}}\log\frac{A_t^M}{A_t^\mathbb{P}}(X_t, y)dt \right], \\ \nonumber
        &= \mathbb{E}_{\Pi}\left[ \int_0^\tau \sum_{y\ne X_t} \mathbb{E}_{\Pi}\left[A_t(X_t,y;X_\tau)\vert X_0, X_t\right] \log\frac{A_t^M}{A_t^\mathbb{P}}(X_t, y)dt \right], \\ \nonumber
        &= \mathbb{E}_{\Pi}\left[ \int_0^\tau \sum_{y\ne X_t} \mathbb{E}_{\Pi}\left[A_t(X_t,y;X_\tau)\vert X_t\right] \log\frac{A_t^M}{A_t^\mathbb{P}}(X_t, y)dt \right], \nonumber
    \end{align}
    by the tower property of conditional expectation.
    From the \cref{lemma:minimizer of KL divergence} and \cref{prop:marginal distribution of Markov projection}, we can replace the $\mathbb{E}_{\Pi}\left[A_t(X_t,y;X_\tau)\vert X_t\right]$ as $A_t^{M}(X_t,y)$, 
    \begin{align} \nonumber
        & \mathbb{E}_{\Pi}\left[ \int_0^\tau \sum_{y\ne X_t} \mathbb{E}_{\Pi}\left[A_t(X_t,y;X_\tau)\vert X_t\right] \log\frac{A_t^M}{A_t^\mathbb{P}}(X_t, y)dt \right], \\ \nonumber
        &= \mathbb{E}_{\Pi}\left[ \int_0^\tau \sum_{y\ne X_t} A_t^M \log\frac{A_t^M}{A_t^\mathbb{P}}(X_t, y)dt \right], \\ \nonumber
        &= \mathbb{E}_{M}\left[ \int_0^\tau \sum_{y\ne X_t} A_t^M \log\frac{A_t^M}{A_t^\mathbb{P}}(X_t, y)dt \right], \nonumber
    \end{align}
    where the last equation justified since $\Pi_t=M_t$ for all $t\in [0, \tau]$. 
    The other term, $\int_0^\tau (A_t^M - A_t^\mathbb{P})(X_t, X_t) dt$, can be treated as the same way, which establishes the equality.
    From the result, we get $D_\text{KL}(\Pi\Vert\mathbb{P})=D_\text{KL}(\Pi\Vert M) + D_\text{KL}(M \Vert \mathbb{P})$.
    The equality is derived for other diffusion processes \citep{peluchetti2023diffusion, shi2024diffusion, liu2023learning}, but in this paper we extended it to continuous Markov chain with discrete state space case.
    
    By letting $\mathbb{P}=\mathbb{P}^\text{SB}$, we get
    \begin{equation}
        D_\text{KL}(\Pi\Vert \mathbb{P}^\text{SB}) \ge D_\text{KL}(M\Vert \mathbb{P}^\text{SB}), \nonumber
    \end{equation}
    the equality holds if and only if $\Pi=M$.
    By the assumption $D_\text{KL}(\Lambda^{(0)}\Vert \mathbb{P}^\text{SB}) < \infty$, the sequence of KL-divergence $\{D_\text{KL}(\Lambda^{(n)}\Vert \mathbb{P}^\text{SB})\}_{n\in\mathbb{N}}$ is non-increasing and bounded, which implies the KL-divergence converges.
    Thus, 
    \begin{equation} \nonumber
        \lim_{n\to\infty}D_\text{KL}(\Pi^{(n)}\Vert \mathbb{P}^\text{SB})- D_\text{KL}(M^{(n)}\Vert \mathbb{P}^\text{SB})=\lim_{n\to\infty}D_\text{KL}(\Pi^{(n)}\Vert M^{(n)})=0.
    \end{equation}
    
    We here utilize Aldous' tightness criteria for showing the tightness of $\mathbb{P}^\text{SB}$. 
    Since the state space is finite it is sufficient to show the following:
    For any $\varepsilon > 0$, there exists $\delta > 0$ such that for all stopping time $t$,
    \begin{equation} \nonumber
        \mathbb{P}^\text{SB}(d_\mathcal{X}(X_{t+\theta}, X_t) \ge \eta) \le \varepsilon,
    \end{equation}
    for arbitrary small $\eta > 0$ and $0 < \theta < \delta$\footnote{See section 16 of \citet{billingsley2013convergence} and section 3 of \citet{ethier2009markov}}.
    To check the criteria, we will show that the leaving rate of $\mathbb{P}^\text{SB}$ is bounded.
    From the \cref{lemma:generator of pinned down reciprocal} and \cref{lemma:minimizer of KL divergence}, we know the analytic form of the transition rate $A_t$.
    Since \cref{theorem:SB solution} ensure that the $\mathbb{P}^\text{SB}$ is Markov as well as is in reciprocal class, 
    \begin{align} \nonumber
        A_t^{\mathbb{P}^\text{SB}}(x, x) 
        &= \mathbb{E}_{\mathbb{P}^\text{SB}}\left[A_t(x, x;X_\tau) \vert X_0=x_0, X_t=x\right], \\ \nonumber
        &= \sum_{X_\tau\in\mathcal{X}} \bigg[ \mathbb{P}^\text{SB}(X_\tau \vert X_t=x,X_0=x_0)A_t(x, x) \\ \nonumber
        &\quad - \frac{\mathbb{P}^\text{SB}(X_\tau\vert X_t=x, X_0)}{P_{t:\tau}(x, X_\tau)}\sum_{u\in\mathcal{X}}A(x, u)P_{t:\tau}(u, X_\tau) \bigg], \\ \nonumber
        &= \sum_{X_\tau\in\mathcal{X}} \bigg[\mathbb{P}^\text{SB}(X_\tau \vert X_t=x)A_t(x, x) \\ \nonumber
        &\quad - \frac{\mathbb{P}^\text{SB}(X_\tau\vert X_t=x, X_0)}{P_{t:\tau}(x, X_\tau)}\sum_{u\in\mathcal{X}}A(x, u)P_{t:\tau}(u, X_\tau) \bigg], \\ \nonumber
        &= A_t(x, x) - \sum_{X_\tau\in\mathcal{X}} \bigg[\frac{\mathbb{P}^\text{SB}(X_\tau\vert X_t=x, X_0)}{P_{t:\tau}(x, X_\tau)}\sum_{u\in\mathcal{X}}A(x, u)P_{t:\tau}(u, X_\tau) \bigg], \\ \nonumber
        &= A_t(x, x) - \sum_{X_\tau\in\mathcal{X}} \bigg[
        \frac{d\mathbb{P}^\text{SB}_{t,\tau}}{d\mathbb{Q}_{t,\tau}}(x, X_\tau)\sum_{u\in\mathcal{X}}A(x, u)P_{t:\tau}(u, X_\tau) \bigg]. \nonumber
    \end{align}
    The Radon-Nikodym derivative is finite and positive for $x$, such that $\mathbb{P}^\text{SB}(X_t=x) > 0$. 
    Because the state space $\mathcal{X}$ is finite, there is finite upper bound $u_t$ for every possible pair $(x, x_\tau)$.
    Thus, 
    \begin{align} \nonumber
        A_t^{\mathbb{P}^\text{SB}}(x, x)  
        &\ge A_t(x, x) - \sum_{X_\tau\in\mathcal{X}} \bigg[u_t\sum_{u\in\mathcal{X}}A(x, u)P_{t:\tau}(u, X_\tau) \bigg], \\ \nonumber
        &= A_t(x, x) -  u_t\sum_{u\in\mathcal{X}}A(x, u)\sum_{X_\tau\in\mathcal{X}}P_{t:\tau}(u, X_\tau), \\ \nonumber
        &= A_t(x, x) -  u_t\sum_{u\in\mathcal{X}}A(x, u), \\ \nonumber
        &= A_t(x, x). 
    \end{align}
    The leaving rate $c_t^\text{SB}(x)$ of $\mathbb{P}^\text{SB}$ at $(t, x)$ is bounded by that of $Q$, $c_t(x)$.
    Assuming the leaving rate of $Q$ is uniformly bounded by $c$, $c_t^\text{SB}$ is uniformly bounded by $c$.
    Back to the Aldous' tightness criteria, by choosing $\delta=\varepsilon/c$ we can see that $\mathbb{P}^\text{SB}$ is tight.

    The sequence $\{\Pi^{(n)}\}_{n\in\mathbb{N}}$ is tight.
    Since $\mathbb{P}^\text{SB}$ is tight, for any $\varepsilon > 0$ we can choose a compact and measurable $K$ (under Skorokhod topology and associated Borel $\sigma$ algebra).
    For any measurable $K$,
    \begin{align} \nonumber
        D_\text{KL}(\Pi^{(n)}\Vert \mathbb{P}^\text{SB})
        &= \mathbb{E}_{\Pi^{(n)}}\left[-\log\frac{d\mathbb{P}^\text{SB}}{d\Pi^{(n)}}\vert K^c\right]\Pi^{(n)}(K^c) + 
        \mathbb{E}_{\Pi^{(n)}}\left[-\log\frac{d\mathbb{P}^\text{SB}}{d\Pi^{(n)}}\vert K\right]\Pi^{(n)}(K),  \\ \nonumber
        &\ge -\log\frac{\mathbb{P}^\text{SB}(K^c)}{\Pi^{(n)}(K^c)}\Pi^{(n)}(K^c)
        -\log\frac{\mathbb{P}^\text{SB}(K)}{\Pi^{(n)}(K)}\Pi^{(n)}(K),
    \end{align}
    by the Jensen inequality.
    If $\{\Pi^{n}\}_{n\in\mathbb{N}}$ is not tight, for each compact $K$ and $\lambda > 0$, there is at least one $n \in \mathbb{N}$ where $\Pi^{(n)}(K^c) \ge \lambda, \Pi^{(n)}(K) < 1-\lambda$.
    Thus, for $\varepsilon > 0$ there exists $n \in \mathbb{N}$, such that
    \begin{equation} \nonumber
        -\log\frac{\mathbb{P}^\text{SB}(K^c)}{\Pi^{(n)}(K^c)}\Pi^{(n)}(K^c) \ge -\log(\varepsilon/\lambda)\lambda,
    \end{equation}
    which implies that the lower bound of $D_\text{KL}(\Pi^{(n)}\Vert \mathbb{P}^\text{SB})$ can be arbitrary large.
    However, the KL-divergence is non-increasing and upper bounded by $D_\text{KL}(\Pi^{(0)}\Vert \mathbb{P}^\text{SB}) < \infty$, contradiction occurs. 
    Thus, the sequence of measure should be tight.
    Similarly, we can show the $\{M^{(n)}\}_{n\in\mathbb{N}}$ is tight.

    The space $(\Omega, d_{\Omega})$ is a Polish space\footnote{See section 12 of \citet{billingsley2013convergence} or section 3.5 of \citet{ethier2009markov}}, and therefore, by Prokhorov's theorem, the collections of measures $\{\Pi^{(n)}\}_{n\in\mathbb{N}}$ and $\{M^{(n)}\}_{n\in\mathbb{N}}$ are relatively compact.
    Each subsequence of $\{M^{(n)}\}_{n\in\mathbb{N}}$ has a sub-subsequence $\{\Pi^{(i)}\}_{i\ge 0}$ weakly converges to $\Pi^{(\infty)}$ as $i \to \infty$. 
    Similarly, there is a sub-subsequence $\{M^{(i)}\}_{i\ge 0}$ that converges in law to $M^{(\infty)}$. 
    By the lower semi-continuity of the KL-divergence, $D_\text{KL}(\Pi^{(\infty)}\Vert M^{(\infty)}) \le \liminf_{i\to\infty}(\Pi^{(i)}\Vert M^{(i)})=0$, which implies the two convergence point is equal to $\mathbb{P}^{(\infty)}$.
    The resulting measure is Markov and is in $\mathcal{R}(\mathbb{Q})$, because the state space is finite, which deduce $\mathbb{P}^\text{SB}=\mathbb{P}^{(\infty)}$.
    We choose an arbitrary convergent point of sub-subsequence resulting in $\mathbb{P}^{\text{SB}}$, the convergence is ensured\footnote{See Theorem 2.6 of \citet{billingsley2013convergence}}.
\end{proof}

\section{Algorithm}\label{appendix.algorithm}


\begin{algorithm}[H]
\caption{Iterative Discrete Markovian Fitting}
\label{alg:iterative_dmf}
\begin{algorithmic}[1]
\REQUIRE Joint distribution $\pi$, tractable bridge $\mathbb{Q}_{\cdot\vert0,\tau}$, number of outer iterations $N \in \mathbb{N}$
\STATE Let $\Lambda^{(0)} = \pi \mathbb{Q}_{\cdot\vert0,\tau}$
\FOR{$n = 0, \ldots, N$}
    \STATE Learn $\phi$ using \cref{eq:Loss of backward Markov prjection} with $\Lambda^{(2n)}$
    \STATE Let $\Lambda^{(2n+1)} = M_b^{(2n+1)} \mathbb{Q}_{\cdot\vert0,\tau}$
    \STATE Learn $\theta$ using \cref{eq:Loss of forward Markov prjection} with $\Lambda^{(2n+1)}$
    \STATE Let $\Lambda^{(2n+2)} = M_f^{(2n+2)} \mathbb{Q}_{\cdot\vert0,\tau}$
\ENDFOR
\RETURN $\theta^*$, $\phi^*$
\end{algorithmic}
\end{algorithm}
\section{Graph permutation matching}\label{appendix.b.Graph permutation matching}

\subsection{Introduction} \label{appendix.b.Introduction}

A graph $\mathcal{G}=(\mathcal{V},\mathcal{E})$ is defined as the set of nodes $\mathcal{V}=\{v_i\}$ and the set of edges $\{e_{ij}\}$, where $i$ and $j$ denote the node indices.
Under a permutation $\sigma\in S_n$, the structure of the graph $\mathcal{G}$ remains unchanged, as the node and edge sets are invariant to indexing:
\begin{align} \nonumber
    &\sigma(\mathcal{V})=\{v_{\sigma(i)}\}=\mathcal{V}, \\ \nonumber
    &\sigma(\mathcal{E})=\{e_{\sigma(i)\sigma(j)}\}=\mathcal{E},
\end{align}
However, the vectorized representation of a graph, $\mathbf{G}=(\mathbf{V}, \mathbf{E})$ is affected by the permutation because $\mathbf{V}$ and $\mathbf{E}$ are treated as ordered sets.
Thus, a graph $\mathcal{G}$ can be considered as a set of all permuted version of its vectorized representation:
\begin{equation} \nonumber
    \mathcal{G}=\{\sigma(\mathbf{G}): \sigma \in S_n \},
\end{equation}
where the $\mathbf{G}$ is an arbitrarily indexed vectorization of $\mathcal{G}$.

The likelihood of transitioning from an initial graph $\mathbf{G}=(\mathbf{V}, \mathbf{E})$ to a terminal graph $\mathbf{G}'=(\mathbf{V}', \mathbf{E}')$, denoted $p(\mathbf{G}'\vert \mathbf{G})=\mathbb{Q}(X_\tau=\mathbf{G}'\vert X_0=\mathbf{G})$, depends on both node and edge correspondences, whereas the likelihood $p(\mathcal{G}'\vert\mathbf{G})$, which is permutation-invariant, does not.
More specifically, for any permutation $\sigma$, the likelihood $p(\sigma(\mathbf{G}')\vert \mathbf{G})$ changes, even though the underlying graph $\mathcal{G}'$ remains unchanged.
Calculating the likelihood of the graph itself (as opposed to the graph vector) would require summing the likelihoods of all possible permutations of $\mathbf{G}'$, but this approach is computationally prohibitive due to the factorial number of permutations.

This dependency on graph permutation introduces challenges not only in computing likelihoods but also in constructing reciprocal measures.
Ideally, a reciprocal measure over graph domain would account for all permutations of graph vectors.
However, as with likelihood computations, the construction of reciprocal measures is highly sensitive on the alignment of graph vectors, making proper handling of these permutations essential for the iterative Markovian fitting (IMF) algorithm.

\subsection{Sub-optimal vs. Optimal Permutation} \label{appendix.b.optimal permutation}

In practice, the key insight is that the likelihood difference of graph vectors between the optimal permutation $\sigma^*$ and sub-optimal permutations $\sigma$ is substantial, allowing us to neglect sub-optimal permutations. 
The design choice of the reference process $\mathbb{Q}$, particularly its signal to noise ratio $\frac{\bar{\alpha}(\tau)}{\bar{\alpha}(0)}\gg 0$ (see \cref{eq:practical transition probability design}), ensures the transitions preserving the initial states are far more probable.
As a result, the likelihood for sub-optimal permutations, where node and edge correspondences are mismatched, is expected to be significantly lower than for the optimal permutation.
The reason is that even a small mismatch in node alignment between $\sigma^*$ and $\sigma$ can cause a large number of edge state mismatches, leading to a dramatic decrease in the total likelihood.

Thus, the likelihood $p(\sigma^*\mathbf{G}'\vert \mathbf{G}) \gg p(\sigma\mathbf{G}'\vert \mathbf{G})$ and the contribution of sub-optimal permutation in calculating $p(\mathcal{G}'\vert \mathcal{G})$ is negligible.
It is sufficient that focus solely on the optimal permutation $\sigma^*$ for the likelihood computation, as the probability of sub-optimal arrangements is effectively zero in practical terms.

The prioritization of optimal permutation could be also utilized in the construction of reciprocal measures.
Suppose we have only one graph for initial and terminal, where the one is $\mathcal{G}$ and $\mathcal{G}'$, respectively.
Without loss of generality, we assume an even distribution over graph vectors $\mathbf{G}\in\mathcal{G}$, and then consider permutations over $\sigma\mathbf{G}'\in\mathcal{G}'$.
Note that $p(\sigma\mathbf{G}' \vert \mathbf{G}) = p(\sigma'\sigma\mathbf{G}' \vert \sigma'\mathbf{G})$.
According to the static SB solution, the graph vectors $\sigma\mathbf{G}'\in\mathcal{G}'$ are distributed according to the likelihood $p(\sigma\mathbf{G}'\vert \mathbf{G})$ for each graph vector $\mathbf{G}\in\mathcal{G}$, where the optimal permutation is selected most frequently.
In this context, neglecting sub-optimal permutation can be viewed as the static OT solution.

As the original SB problem is formulated with the dynamics over graph \textit{vector} domain, arranging the graph vectors is a part of the SB problem.
However, by relying on the optimal permutation, we can reduce complexity of the SB problem aroused by the graph vectors alignment.
This strategy ensures the transition from a graph vector to the optimally permuted graph vector, resulting in more reliable convergence of the iterative algorithm.

\subsection{Standard formulation of Graph matching problem} \label{appendix.b.formulation of graph matching}

In this section we will discuss about the standard formulation of the graph matching problem and how the task of finding the optimal permutation $\sigma^*$, which minimizes the negative log-likelihood (NLL) $-\log{p(\sigma\mathbf{G}'\vert \mathbf{G})}$, can be framed as a graph matching problem.

Consider two graph vectors $\mathbf{G}=(\mathbf{V}, \mathbf{E})$ and $\mathbf{G}'=(\mathbf{V}', \mathbf{E}')$ along with the cost function $\mathbf{c}^V$ and $\mathbf{c}^E$, which define the cost of mapping nodes and edges, respectively.
The binary assignment matrix $\mathbf{X}\in\{0, 1\}^{n\times n'}$ represents the matching between nodes, where $n$ and $n'$ denote the number of nodes of $\mathbf{G}$ and $\mathbf{G}'$, respectively.
If $v_i\in\mathbf{V}$ matches $v_a'\in\mathbf{V}'$, then $\mathbf{X}_{i,a}=1$, while all other entries for node $v_i$ are zero.

The total node matching cost is given by $\sum_{\mathbf{X}_{i,a}=1}\mathbf{c}^V(v_i, v_a')$, where $\mathbf{c}^V(v_i, v_a')$ denotes the cost of matching from $v_i\in \mathbf{V}$ to $v_a'\in \mathbf{V}'$.
Similarly, the total edge matching cost is $\sum_{\mathbf{X}_{i,a}=1,\mathbf{X}_{j,b}=1}\mathbf{c}^E(e_{ij}, e_{ab}')$, where $\mathbf{c}^E(e_{ij}, e_{ab}')$ is the cost of matching from $e_{ij}\in \mathbf{E}$ to $e_{ab}'\in \mathbf{E}'$.

We define the cost matrix $\mathbf{A}\in\mathbb{R}^{nn'\times nn'}$, where the diagonal components $\mathbf{A}_{ia,ia}=\mathbf{c}^V(v_i, v_a')$ represent node costs and the off-diagonal components $\mathbf{A}_{ia, jb}=\mathbf{c}^E(e_{ij}, e_{ab}')$ represent edge costs.
By flattening the assignment matrix to be $\mathbf{x}\in\{0, 1\}^{nn'}$, we can write the total cost function $f$ as 
\begin{align}\nonumber
    f(\mathbf{x})
    &=\sum_{\mathbf{x}_{ia}=1, \mathbf{x}_{jb}=1} \mathbf{c}^E(e_{ij}, e_{ab}') + \sum_{\mathbf{x}_{ia}=1}\mathbf{c}^V(v_i, v_a'), \\ \nonumber
    &=\mathbf{x}^{\top}\mathbf{A}\mathbf{x}.
\end{align}
Thus, the graph matching problem is find the optimal assignment $\mathbf{x}$ that minimizing cost $f(\mathbf{x})$, which is written in formal:
\begin{align} \label{eq:QAP formulation}
    & \mathbf{x}^*=\arg\min_{\mathbf{x}} \mathbf{x}^\top \mathbf{A} \mathbf{x}, \\ \nonumber
    & s.t. \sum_{i=1}^n\mathbf{x}_{ia} \le 1, \sum_{a=1}^{n'}x_{ia} \le 1, 
\end{align}
where the inequalities become equalities if $n=n'$. 
This formulation corresponds to the well-known quadratic assignment problem (QAP), which is NP-hard.

The NLL $-\log{p(\mathbf{G}'\vert \mathbf{G})}$ can be decomposed by the sum of the NLLs of nodes and edges:
\begin{equation} \nonumber
    -\log{p(\mathbf{G}'\vert \mathbf{G})}=\sum_{\delta_{ia}=1, \delta_{jb}=1} -\log{P_{0:\tau}^E(e_{ij}, e_{ab}')} + \sum_{\delta_{ia}=1} -\log{P_{0:\tau}^V(v_i, v_a')}.
\end{equation}
By interpreting the NLLs of nodes and edges as cost function $\mathbf{c}^V$ and $\mathbf{c}^E$, respectively, the problem of finding the optimal permutation $\sigma^*$ can be formulated as a QAP, as in \cref{eq:QAP formulation}.

\subsection{Solution method} \label{appendix.b.solution method}
Exact solution methods for the QAP, such as mixed integer programming (MIP), requires combinatorial optimization, which incurs prohibitive computational costs \citep{sahni1976p}.
Many accelerated algorithms adopt branch-and-bound strategies that utilize bounds of objective functions \citep{anstreicher2003recent, gilmore1962optimal, loiola2007survey, xia2008gilmore}, reducing the exploration space.
However, the combinatorial optimization cannot be avoidable in these strategy. 
Alternatively, continuous relaxation methods \citep{cho2014finding, leordeanu2005spectral} solve the QAP in \cref{eq:QAP formulation} using a continuous vector $\mathbf{x}$, bypassing combinatorial explorations.
However, the continuous relaxation yields non-binary vectors, which require a discretization step, potentially introducing errors.

In our work, we compared two continuous relaxation methods: spectral method (SM) and maximum polling method (MPM), followed by the Hungarian algorithm for post-discretization.
Both methods iteratively update the continuous version of assignment vector $\mathbf{x}$ as:
\begin{equation} \label{eq:QAP algorithm update}
    \mathbf{x}_{k+1} = \frac{\mathbf{x}_{k} - \mathbf{A}\odot\mathbf{x}_k \varepsilon}{v},
\end{equation}
where $\odot$ denotes matrix multiplication for SM and max-pooled matrix multiplication for MPM, and $\varepsilon$ denotes step size, and $v$ denotes the normalizer.
Specifically, the $\mathbf{A}\odot\mathbf{x}$ for SM is described as
\begin{equation} \nonumber
    (\mathbf{A}\odot\mathbf{x})_{ia}=\mathbf{x}_{ia}\mathbf{A}_{ia:ia} + \sum_{j\in\mathcal{N}_i}\sum_{b\in\mathcal{N}_a}\mathbf{x}_{jb}\mathbf{A}_{ia;jb}.
\end{equation}
In the MPM case, the operation is defined as
\begin{equation} \nonumber
    (\mathbf{A}\odot\mathbf{x})_{ia}=\mathbf{x}_{ia}\mathbf{A}_{ia:ia} + \sum_{j\in\mathcal{N}_i}\max_{b\in\mathcal{N}_a}\mathbf{x}_{jb}\mathbf{A}_{ia;jb}.
\end{equation}

In practice, we modified the cost matrix $\mathbf{A}$ slightly for efficiency.
To accelerate the process, we neglect all edges corresponding to dummies in $\mathbf{G}'$, which significantly reduces computational cost, making $\vert \mathcal{N}_a\vert$ scale linearly with $n'$.
Additionally, a small Gaussian perturbation was applied to $\mathbf{A}$, slightly altering the minima in the continuous vector space. 
We solved each QAP ten times to compensate the effects of randomness, improving performance with reasonable computational costs (see \cref{appendix.b.performance of graph matching algorithm}).

\subsection{Performance of graph matching algorithm} \label{appendix.b.performance of graph matching algorithm}

In this section, we compare the performance of the following algorithms under different conditions: (1) SM algorithm, (2) MPM algorithm, and (3) MPM algorithm with randomness. 
The hyper-parameters $\bar{\alpha}(\tau)/\bar{\alpha}(0)=0.3$ for the reference process $\mathbb{Q}$.

To evaluate the effectiveness of QAP solvers on molecular graphs, we selected 100 molecules from the ZINC test set. 
In all experiments, the source molecule was treated as fully connected (with dummy types), while the target molecule retained only its original edges. 
We performed up to 1,000 iterations of updates according to \cref{eq:QAP algorithm update} with a specified tolerance.

To assess the performance of the algorithms, we permuted the molecular graphs and then tested whether the original indices could be recovered, where optimality is achieved with the inverse permutation. 
For this task, we evaluated the exact matching ratio, which indicates the proportion of cases where the original indices were successfully recovered. 
Due to the inherent symmetry in molecular graphs, multiple optimal permutations may exist. Therefore, instead of relying solely on exact index recovery, we base the success criterion on the objective function value.
If the Negative Log-Likelihood (NLL) error falls below 1e-2, the solution is considered successful.

For different molecule pairs where no optimal permutation is available, which is common in practical scenarios, exact solvers become computationally prohibitive even with relatively small molecular graphs.
As such, we evaluated the performance of the QAP solvers by measuring the reduction in NLL between the initial and optimized permutations.

We conducted experiments with various hyperparameters, and the results are illustrated in \cref{tab:appendix_table_1}. 
All algorithms successfully found exact matches for the same molecules, but the MPM algorithm performed better than the SM algorithm when applied to different molecule pairs.
For all subsequent experiments, we adopted the MPM algorithm with randomness in the cost matrix as the QAP solver.

\begin{table}[h!]
\caption{
\textbf{Comparison of different algorithms based on configuration parameters. } NLL drop represents the improvement in likelihood between randomly paired molecules, while exact match reflects the percentage of permutations successfully recovered after applying a random permutation to the molecules.
$\uparrow$ and $\downarrow$ denote higher and lower values are better, respectively.
}
\centering
\resizebox{1.0\textwidth}{!}{
\begin{tabular}{lcccccccc}
\toprule
\multirow{2.5}{*}{Algorithm}
& \multicolumn{6}{c}{Configuration}
& \multirow{2.5}{*}{NLL drop($\uparrow$)}
& \multirow{2.5}{*}{Exact matching($\uparrow$)}
\\
\cmidrule{2-7}
& Pooling
& Tolerance
& Precision
& Max \# iterations
& Noise coefficient
& \# trials
& 
& 
\\
\midrule
(1) SM
& Sum
& 1e-9
& FP64
& 1000
& 0
& 1
& 2.251
& 100 \%
\\
(2) MPM
& Max
& 1e-5
& FP32
& 1000
& 0
& 1
& 13.256
& 100 \%
\\
(3) MPM + Randomness
& Max
& 1e-5
& FP32
& 1000
& 1e-6
& 10
& 13.256
& 100 \%
\\
(4) Our setting
& Max
& 1e-4
& FP32
& 2500
& 1e-6
& 10
& 14.324
& 100 \%
\\
\bottomrule
\end{tabular}
}
\label{tab:appendix_table_1}
\end{table}

\subsection{Relation to graph edit distance} \label{appendix.b.relation to GED}

The graph edit distance (GED) is a widely used and flexible metric for measuring the dissimilarity between two graphs.
It is defined as the minimum cost required to transform one graph into another through a sequence of unit operations.
Each unit operation can be a removal, substitution, or insertion, and can be applied to either node or edges.
The total cost of the transformation is the sum of the costs assigned to these components.

According to \citet{bougleux2015quadratic}, finding the minimal-cost transformation between two graphs is equivalent to the QAP problem with an associated cost function.
Though both problems are NP-hard, the equivalence is meaningful in that there are numerous approximation algorithms for the QAP problem that operate within polynomial time.

The basic idea for re-formulation into QAP problem involves the introduction of dummy nodes and dummy edges, where removal (insertion) operations could be replaced by substitution into (from) dummies. 
The cost function of unit operation is defined as $\mathbf{c}^V(v_i, v_a')$ and $\mathbf{e}^E(e_{ij}, e_{ab}')$ for node replacement and edge replacement, respectively. 
Let $\alpha=\bar{\alpha}(\tau)/\bar{\alpha}(0)$, the cost replacement is defined as: 
\begin{align}\nonumber
& \mathbf{c}^V(v_{i}, v_{a}')= 
\begin{cases}
    -\log{\frac{(d^V-1)\alpha + 1}{d^V}}& \text{if } v_{i}=v_{a}'\\
    -\log{\frac{-\alpha + 1}{d^V}}& \text{otherwise,}
\end{cases} \\ \nonumber
& \mathbf{c}^E(e_{ij}, e_{ab}')= 
\begin{cases}
    -\log{\frac{(d^E-1)\alpha + 1}{d^E}}& \text{if } e_{ij}=e_{ab}'\\
    -\log{\frac{-\alpha + 1}{d^E}}& \text{otherwise,}
\end{cases}
\end{align}
where $d^V$ and $d^E$ denote the cardinality of $\mathcal{X}^V$ and $\mathcal{X}^E$, respectively. 
Similar to \cref{eq:QAP formulation}, we can formulate it to a quadratic problem with the objective function $f$ as:
\begin{align}\nonumber
    f(\mathbf{x})
    &=\sum_{\mathbf{x}_{ia}=1, \mathbf{x}_{jb}=1} \mathbf{c}^E(e_{ij}, e_{ab}') + \sum_{\mathbf{x}_{ia}=1}\mathbf{c}^V(v_i, v_a'), \\ \nonumber
    &=\sum_{\mathbf{x}_{ia}=1, \mathbf{x}_{jb}=1} -\log{P_{0:\tau}^E(e_{ij}, e_{ab}')} + \sum_{\mathbf{x}_{ia}=1} -\log{P_{0:\tau}^V(v_i, v_a')}, \\ \nonumber
    &= p(\sigma\mathbf{G}'\vert \mathbf{G}),
\end{align}

where, the $\sigma$ is the graph permutation associated to the assignment vector $\mathbf{x}$.

However, the GED is not same to the NLL. 
Note that the edit cost functions penalize every operations with same cost $-\log{\frac{-\alpha + 1}{d}}$ except for the identity operation.
Though the cost of identity operation is lesser than the others, the optimal transformation would not contains any identity operation.
In real, the cost of identity operation does not affect the optimal edit path, implying that GED is not equal but proportion to the NLL, where the difference proportional to the number of the nodes and edges that are equal under the optimal graph matching $\sigma^*$.
Though the GED is not exactly same to the NLL, the problem is equivalent to the graph matching problem. 

This observation provides a clear interpretation of the underlying dynamics of the SB problem, revealing that the associated OT cost is effectively the GED. 
Therefore, solving the SB problem can be understood as finding the OT plan between graph distributions, where the transport cost is defined by the GED.
\section{Implementation details}\label{appendix.c.Implementation details}

\subsection{Parameterization} \label{appendix.c.Parameterization}

In this section, we briefly describe the neural network parameterization and the practical training loss. 
According to \cref{eq:Loss of forward Markov prjection,eq:Loss of backward Markov prjection}, the neural network approximates the generator $A_t^{\Lambda_{\cdot \vert 0}}(x, y)$ and $\Tilde{A}_t^{\Lambda_{\cdot \vert \tau}}(y, x)$, which are formulated as conditional expectation of $A(x, y;z)$ and $\Tilde{A}(y, x; z)$, respectively (see \cref{lemma:generator of pinned down reciprocal}).

The target generator takes the form:
\begin{equation}
    A_{s}(x, y; z) = 
    \begin{cases}
        A_s(x, y)\frac{P_{s:\tau}(y, z)}{P_{s:\tau}(x, z)},& \text{if } x\ne y\\
        -\sum_{u\ne x}A_s(x, u)\frac{P_{s:\tau}(u, z)}{P_{s:\tau}(x, z)},& \text{if } x=y
    \end{cases}
\end{equation}
, where $A_s$ and $P_{s:\tau}$ are tractable attributes of $\mathbb{Q}$.
Let the the transition probability matrix be $P(s,\tau):[0,\tau]^2\to \mathbb{R}^{\vert\mathcal{X}\vert\times \vert\mathcal{X}\vert}$, with $P_{s:\tau}(x, y)= \mathbf{e}_{x}^\top P(s,\tau) \mathbf{e}_{y}$. 
Thus, the neural network predicts the distribution of $z$ given $X_s=x$, denoted as $z_{\theta}(s, x)\in\mathbb{R}^{\vert\mathcal{X}\vert}$, and our parameterization choice is as follows: 
\begin{equation} \nonumber
    A_s^{M^\theta}(x, y;\theta)=A_s(x, y)\frac{\mathbf{e}_{y}^\top P(s,\tau)z_\theta(s, x)}{\mathbf{e}_{x}^\top P(s,\tau)z_\theta(s, x)},
\end{equation}
for $y\ne x$.
The time reverse generator is also defined similarly.

Though the loss formulation was defined as continuous manner, we approximate it with the discretizations.
Firstly, we will replace $A_t(X_t, X_t; z) - A_t^{M^\theta}(X_t, X_t)$ with as follows:
\begin{align} \nonumber
    A_t(X_t, X_t; z) - A_t^{M^\theta}(X_t, X_t) &\approx 
    \frac{1}{\Delta t}(1 + A_t(X_t, X_t; z) \Delta t) \log{\frac{1 + A_t(X_t, X_t; z)\Delta t}{1 + A_t^{M^\theta}(X_t, X_t)\Delta t}}, \\ \nonumber
    &\approx \frac{1}{\Delta t}P_{t:t+\Delta t}^{\mathbb{Q}_{\cdot\vert \tau=z}}(X_t, X_t)\log
    \frac{P_{t:t+\Delta t}^{\mathbb{Q}_{\cdot\vert \tau=z}}(X_t, X_t)}
    {P_{t:t+\Delta t}^{M^\theta}(X_t, X_t)}.
\end{align}
Similarly, 
\begin{align} \nonumber
    A_t(x, y; z)\log{\frac{A_t(x, y; z)}{A_t^{M^\theta}(x, y)}} \approx \frac{1}{\Delta t} P_{t:t+\Delta t}^{\mathbb{Q}_{\cdot\vert \tau=z}}(x, y)\log\frac{P_{t:t+\Delta t}^{\mathbb{Q}_{\cdot\vert \tau=z}}(x, y)}{P_{t:t+\Delta t}^{M^\theta}(x, y)}.
\end{align}

Thus the discretized loss function corresponding to \cref{eq:Loss of forward Markov prjection} is:
\begin{equation} \nonumber
    L(\theta)=\sum_{t_i} \frac{1}{\Delta t}\mathbb{E}_{\Lambda_{t_i, \tau}}\left[
     \sum_{y} P_{t:t+\Delta t}^{\mathbb{Q}_{\cdot\vert \tau=z}}(x, y)\log\frac{P_{t:t+\Delta t}^{\mathbb{Q}_{\cdot\vert \tau=z}}(x, y)}{P_{t:t+\Delta t}^{M^\theta}(x, y)}
    \right].
\end{equation}
The time reversed loss can be discretized as same way.

\subsection{Data processing for ZINC250K dataset experiment}
\label{appendix.ZINC250K-preprocessing}
In \cref{subsection:Small molecule transformation}, we experimented with the standard ZINC250K dataset.
In our molecular graph representation, $\mathbf{G}=(\mathbf{V},\mathbf{E})$, the node vectors $\mathbf{V}=(v^{(i)})_i$ represent the atomic types, and the edge vectors $\mathbf{E}=(e^{(ij)})_{ij}$ represent bond orders.
Due to this implementation choice, node features other than the atomic type cannot be represented, causing occasional failures in decoding into molecules.
Still, to focus on the SB problem itself, we filtered out molecules whose graph representations are not directly converted into molecules by the RDKit package.
Among the node features, formal charge and explicit hydrogen information is crucial in decoding process since, without them, each atom's valency cannot be inferred, so that corresponding molecule cannot be uniquely determined.
Thus, the following criteria were applied to filter molecules from the ZINC250K dataset: (1) all atoms do not possess a formal charge and (2) all aromatic atoms do not have an explicit hydrogen.
\textcolor{\revision}{
Our final training and test dataset contain 23,936 and 5,984 molecule pairs, respectively.
}

\subsection{Neural Network Parameterization and Hyperparameter settings} \label{appendix.c.2.NeuralNetwork}
\textbf{DDSBM and DBM.}
Our neural network parameterization is based on \citet{vignac2022digress}, which uses a graph transformer network \citep{dwivedi2020GT}.
In short, it takes as input a noisy graph $\mathbf{G}_t=(\mathbf{V}_t, \mathbf{E}_t)$ and predicts a distribution over the target graphs.
Structural and spectral features are also used as inputs to improve the expressivity of neural networks.
We refer the reader to \citet{vignac2022digress} for more details.

For noise scheduling, we employ a slightly different strategy than \citet{vignac2022digress}.
While \citet{vignac2022digress} uses a cosine schedule for $\bar{\alpha}_t$, we implement a symmetric scheduling of $\alpha_t$ by incorporating $\alpha_\text{min}$, as defined below:
\begin{equation}
\alpha(t) = \frac{\partial_t\bar{\alpha}(t)}{\bar{\alpha}(t)} = \cos\left(\frac{t/\tau + s}{1 + s} \cdot \frac{\pi}{2}\right)^2 \cdot (1 - \alpha_{\text{min}}) + \alpha_{\text{min}},
\end{equation}
for $0 \leq t \leq \tau/2$, with $\alpha(t) = \alpha(\tau - t)$ for the remaining half of the schedule. 
For instance, with 100 diffusion steps, $\bar{\alpha}(\tau)\approx0.95$ when $\alpha_\text{min}=0.999$, and $\bar{\alpha}(\tau)\approx0.90$ when $\alpha_\text{min}=0.99795$.

\begin{figure*}[h!]
\includegraphics[width=0.9\textwidth]{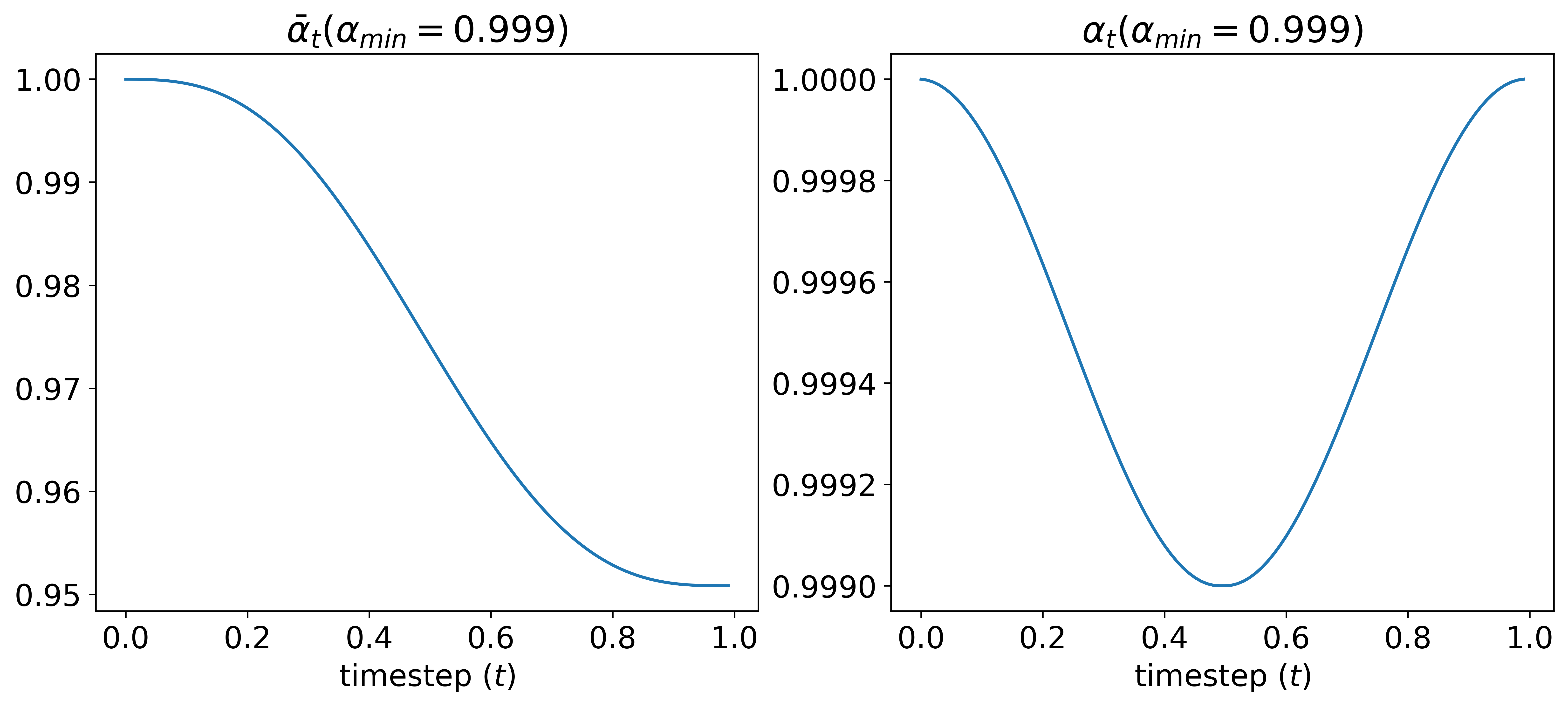}
\centering
\caption{\textbf{Plot of $\alpha(t)$ and $\bar{\alpha}(t)$ as functions of timestep $t$ where $\alpha_\text{min}=0.999$}}
\label{fig:alphaschedule}
\end{figure*}

We trained DDSBM models with IMF iterations, and DBM models, an one-directional variant of DDSBM with fixed joint molecular pairs, were trained with the same number of gradient updates to ensure consistency.
Both DBM and DDSBM reported in this work were trained using four RTX A4000 GPUs.
The detailed hyperparameters for training are shown in \cref{tab:appendix_training_hyperparameters}.

\begin{table}[h!]
\caption{
\textbf{Training hyperparmeters of DBM and DDSBM for graph transformation.}
}
\centering

\resizebox{0.9\textwidth}{!}{
\begin{tabular}{lccccc}
\toprule

Task & Model & diffusion steps & $\alpha_{\text{min}}$ & epoch & SB iterations \\
\midrule
ZINC250K & DBM & 100 & 0.999 & 1800 & -- \\
ZINC250K & DDSBM & 100 & 0.999 & 300 & 6 \\
Polymer & DBM & 100 & 0.999 & 1250 & -- \\
Polymer & DDSBM & 100 & 0.999 & 250 & 5 \\
\bottomrule
\end{tabular}
}
\label{tab:appendix_training_hyperparameters}
\end{table}

\textbf{Graph-to-Graph Translation.}
For HierG2G and AtomG2G, we used the default settings provided on the official GitHub repository.
Both models were trained until maximum epochs by default with a single RTX A4000 GPU.


\color{\revision}{
\subsection{Details about metrics} \label{appendix.c.2.DetailsAboutMetrics}
We here provide detailed explanations about each metric used in the results section.
They are classified into two groups; the first is to evaluate the marginal distribution of the generated data (data-wise) and the other is to evaluate the joint distribution between initial and generated data (pair-wise).
Among the metrics explained below, \textbf{Validity}, \textbf{Uniqueness}, \textbf{Novelty}, \textbf{NSPDK}, $W_1$, and \textbf{FCD} are metrics that evaluate the marginal distributions, while  \textbf{NLL} and \textbf{MAD} are metrics that evaluate the joint distribution.

\subsubsection{Basic metrics}
\begin{itemize}
    \item \textbf{Validity:}  
    Measures the proportion of chemically valid molecules generated by the model. A valid molecule adheres to fundamental chemical rules, such as valency constraints.
    \item \textbf{Uniqueness:}  
    Represents the fraction of unique molecules in the generated set. This metric ensures the diversity of the generated molecules and prevents redundancy.
    \item \textbf{Novelty:}  
    Quantifies the fraction of generated molecules that are not present in the training dataset. This evaluates the ability of the model to explore new regions of the chemical space.
\end{itemize}

\subsubsection{Graph structural metrics}
\begin{itemize}
    \item \textbf{Negative Log-Likelihood (NLL):}  
    Evaluates the quality of the joint distribution by measuring the cost of transforming one graph into another based on the reference process. This metric is closely related to the graph edit distance (GED), which quantifies the minimal number of modifications required to transform one graph into another (see \cref{appendix.b.relation to GED}).
    \item \textbf{Neighborhood Subgraph Pairwise Distance Kernel (NSPDK):}  
    Computes the distance between graph distributions using mean maximum discrepancy (MMD).
    NSPDK evaluates similarity based on local neighborhood subgraphs, which incorporate node and edge features as well as the underlying graph structure \citep{costa2010fast}. 
\end{itemize}

\subsubsection{Molecular properties metrics}
\begin{itemize}
    \item \textbf{Wasserstein-1 Distance ($W_1$):}  
    Measures the distance between the property distributions of the target and generated molecules for the properties to be modified, focusing on the marginal distribution. This metric reflects how closely the generated molecular properties match the desired target distribution.
    \item \textbf{Mean Absolute Difference (MAD):}  
    Quantifies the average absolute difference in key properties within each molecular pair of initial and generated molecules. Low MAD value ensures that critical molecular properties such as drug-likeness (QED) and synthetic accessibility score (SAscore) remain consistent during graph transformation \citep{bickerton2012quantifying,ertl2009estimation}.
    \item \textbf{Fréchet ChemNet Distance (FCD):}  
    Calculates the similarity between the target molecule distribution and the generated molecules, focusing on the marginal distribution of the molecules \citep{preuer2018frechet}. It is based on comparing the feature distributions of molecules embedded using a pre-trained neural network.
\end{itemize}
}
\color{black}
\section{Supplementary results}
\label{appendix.d.Supplementary results}

\subsection{\textcolor{\revision}{More statistical results for main experiments}}
\label{appendix.d.standard deviations}

\textcolor{\revision}{
\cref{tab:zinc_std,tab:polymer_std} show average and standard deviation for all the metrics of ZINC and polymer experiments from three different training runs, respectively. For clarity, we plot the raw values of each molecular property---log $P$, QED, and SAscore---in \cref{figure:e.figure_distributions}.
}

\begin{table}[h!]
\caption{
\textcolor{\revision}{
\textbf{Distribution shift performance on ZINC.}
$\uparrow$ and $\downarrow$ denote higher and lower values are better, respectively.
Standard deviation values for three independent runs included compared to the original table in the main result.
}
}
\centering
\resizebox{1.0\textwidth}{!}{
\begin{tabular}{l|ccc|cc|cccc}
\toprule


Model
& Val.$(\uparrow)$
& Uniq.$(\uparrow)$
& Nov.$(\uparrow)$
& NLL$(\downarrow)$
& NSPDK$(\downarrow)$
& LogP $W_1(\downarrow)$
& QED MAD$(\downarrow)$
& SAscore MAD$(\downarrow)$
& FCD$(\downarrow)$
\\

\midrule

Reference
& -
& -
& -
& 360.862
& 1.47e-4
& 2.007
& 0.153
& 0.595
& 4.807 / 0.279
\\

\midrule

AtomG2G
& {\normalsize 99.9$\pm$0.1}
& {\normalsize 64.4$\pm$2.6}
& {\normalsize 99.3$\pm$0.1}
& {\normalsize 355.025$\pm$0.484}
& {\normalsize 9.70e-3$\pm$2.01e-3}
& {\normalsize 0.162$\pm$0.034}
& {\normalsize 0.143$\pm$0.001}
& {\normalsize 0.697$\pm$0.019}
& {\normalsize 5.019$\pm$0.572}
\\

HierG2G
& {\normalsize \textbf{100.0}$\pm$0.0}
& {\normalsize 73.7$\pm$1.3}
& {\normalsize 99.5$\pm$0.1}
& {\normalsize 344.458$\pm$4.454}
& {\normalsize 2.10e-2$\pm$4.75e-3}
& {\normalsize \textbf{0.113}$\pm$0.045}
& {\normalsize 0.146$\pm$0.005}
& {\normalsize 0.687$\pm$0.032}
& {\normalsize 5.742$\pm$0.378}
\\

\midrule

DBM
& {\normalsize 87.6$\pm$1.2}
& {\normalsize \textbf{100.0}$\pm$0.0}
& {\normalsize \textbf{100.0}$\pm$0.0}
& {\normalsize 288.572$\pm$3.327}
& {\normalsize 8.04e-4$\pm$2.23e-4}
& {\normalsize 0.150$\pm$0.012}
& {\normalsize 0.141$\pm$0.002}
& {\normalsize 0.608$\pm$0.013}
& {\normalsize 1.046$\pm$0.043}
\\

DDSBM
& {\normalsize 94.8$\pm$1.9}
& {\normalsize \textbf{100.0}$\pm$0.0}
& {\normalsize 99.9$\pm$0.0}
& {\normalsize \textbf{160.461}$\pm$10.409}
& {\normalsize \textbf{7.30e-4}$\pm$9.29e-5}
& {\normalsize 0.139$\pm$0.003}
& {\normalsize \textbf{0.120}$\pm$0.009}
& {\normalsize \textbf{0.402}$\pm$0.028}
& {\normalsize \textbf{0.833}$\pm$0.082}
\\

\bottomrule
\end{tabular}
}
\label{tab:zinc_std}
\end{table}
\begin{table}[h!]
\caption{
\textcolor{\revision}{
\textbf{Distribution shift performance on polymer.}
$\uparrow$ and $\downarrow$ denote higher and lower values are better, respectively.
Standard deviation values for three independent runs included compared to the original table in the main result.
}
}
\centering
\resizebox{1.0\textwidth}{!}{
\begin{tabular}{l|ccc|cc|cc}
\toprule

Model
& Val.$(\uparrow)$
& Uniq.$(\uparrow)$
& Nov.$(\uparrow)$
& NLL$(\downarrow)$
& NSPDK$(\downarrow)$
& GAP $W_1(\downarrow)$
& FCD$(\downarrow)$
\\

\midrule

Reference
& -
& -
& -
& 749.800
& 5.64e-4
& 0.312
& 1.469 / 0.384
\\

\midrule

DBM
& {\normalsize 43.4$\pm$0.015}
& {\normalsize \textbf{99.8}$\pm$0.002}
& {\normalsize \textbf{97.4}$\pm$0.003}
& {\normalsize 580.415$\pm$11.279}
& {\normalsize 5.82e-3$\pm$3.62e-r}
& {\normalsize 0.249$\pm$0.019}
& {\normalsize 2.230$\pm$0.175}
\\

DDSBM
& {\normalsize \textbf{97.4}$\pm$0.004}
& {\normalsize 94.5$\pm$0.011}
& {\normalsize 71.3$\pm$0.038}
& {\normalsize \textbf{212.047}$\pm$18.444}
& {\normalsize \textbf{4.18e-3}$\pm$1.55e-4}
& {\normalsize \textbf{0.127}$\pm$0.013}
& {\normalsize \textbf{1.074}$\pm$0.038}
\\

\bottomrule
\end{tabular}
}
\label{tab:polymer_std}
\end{table}

\begin{figure*}[h!]
 \centering
 \includegraphics[width=1.00\textwidth]{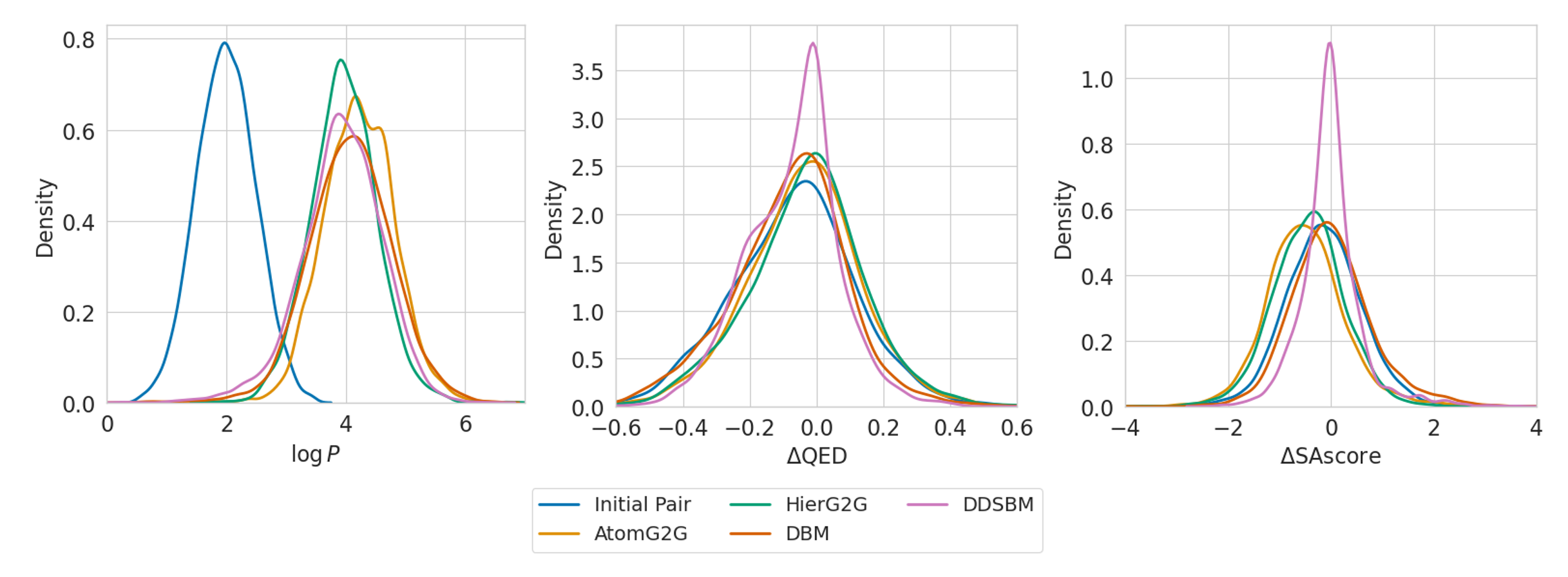}
 \caption{
 \textbf{Density plots comparing the distributions of three molecular properties (log \textit{P}, $\Delta$QED, and $\Delta$SAscore) across four different molecular generation methods: AtomG2G, HierG2G, DBM, and DDSBM.}
 }
 \label{figure:e.figure_distributions}
\end{figure*}


\subsection{Ablation studies}
\label{appendix.d.ablation studies}

\textcolor{\revision}{
We conducted ablation studies about the effects of graph matching algorithm and initial coupling of data on the distribution shift performance and the convergence of IMF iteration.
Note that the former and the latter affect graph vectors and graphs, respectively.
}

\subsubsection{The effect of graph matching algorithms}
\label{appendix.d.effect of graph matching algorithms}

\textcolor{\revision}{
We first conducted an experiment about graph matching algorithms' effects on the performance of DDSBM.
As \cref{tab:appendix_albation_graph_matching_algorithms} shows, the performance variations across graph matching algorithms were not significant.
We attribute this outcome to the inherent characteristics of the DDSBM framework.
As the SB iterations progress, the differences between initial and generated graphs would be decreased, in other words, they become increasingly similar.
This is due to our reference process employs non-zero $\bar{\alpha}_\tau$ to assign low probabilities to dissimilar data pairs.
Consequently, the influence of the graph matching algorithm on the optimality of graph data for each Markovian projection gradually diminishes, as supported by the \cref{fig:gm_ablation_NLL_drop}.
This effect causes the model to converge in a similar manner regardless of the specific graph matching algorithm applied at each IMF iteration, resulting in minor differences on the final values as shown in the \cref{tab:appendix_albation_graph_matching_algorithms}.
}

\begin{table}[h!]
\caption{
\textcolor{\revision}{
\textbf{Ablation study on graph matching.}
$\uparrow$ and $\downarrow$ denote higher and lower values are better, respectively.
}
}
\centering
\resizebox{1.0\textwidth}{!}{
\begin{tabular}{l|ccc|cc|cccc}
\toprule
Algorithm
& Val.$(\uparrow)$
& Uniq.$(\uparrow)$
& Nov.$(\uparrow)$
& NLL$(\downarrow)$
& NSPDK$(\downarrow)$
& LogP $W_1(\downarrow)$
& QED MAD$(\downarrow)$
& SAscore MAD$(\downarrow)$
& FCD$(\downarrow)$
\\

\midrule
(1) SM
& 94.6
& \textbf{100.0}
& 99.9
& 163.931
& 1.17e-3
& 0.159
& 0.122
& \textbf{0.388}
& 0.837
\\

(2) MPM
& \textbf{95.8}
& \textbf{100.0}
& \textbf{100.0}
& \textbf{155.378}
& \textbf{6.95e-4}
& 0.164
& \textbf{0.110}
& 0.401
& 0.773
\\

(3) MPM + Randomness
& 95.3
& \textbf{100.0}
& \textbf{99.9}
& 158.534
& \textbf{6.78e-4}
& \textbf{0.130}
& 0.116
& 0.424
& 0.759
\\

(4) Our setting
& 94.1
& \textbf{100.0}
& 99.9
& 164.480
& 7.49e-4
& 0.139
& 0.124
& 0.392
& \textbf{0.747}
\\
\bottomrule
\end{tabular}
}
\label{tab:appendix_albation_graph_matching_algorithms}
\end{table}

\textcolor{\revision}{
Furthermore, we additionally examined two scenarios on our main expeirment with the ZINC250K dataset: 1) training \textit{with} graph matching during the IMF iterations, and 2) training \textit{without} any graph matching algorithm during the whole training process.
For the former, the graph matching setup used in our main ZINC experiment in \cref{subsection:Small molecule transformation} was used without change.
\cref{fig:combined_figures} shows that, similar to the comparison of several graph matching algorithm, the use of graph matching merely affected the convergence of DDSBM after a sufficient number of IMF iterations.
Still, we observed that, without graph matching, both the training loss and the negative log-likelihood (NLL) between the original and generated data start at higher values and require more iterations to reach levels comparable to those achieved with graph matching.
This indicates that graph matching algorithms, while not essential for the eventual convergence of DDSBM, can accelerate the convergence through the IMF iterations, providing an additional benefit.
}


\begin{figure}[t!]
\centering
\begin{subfigure}[t]{0.48\textwidth}
\centering
\includegraphics[width=\linewidth]{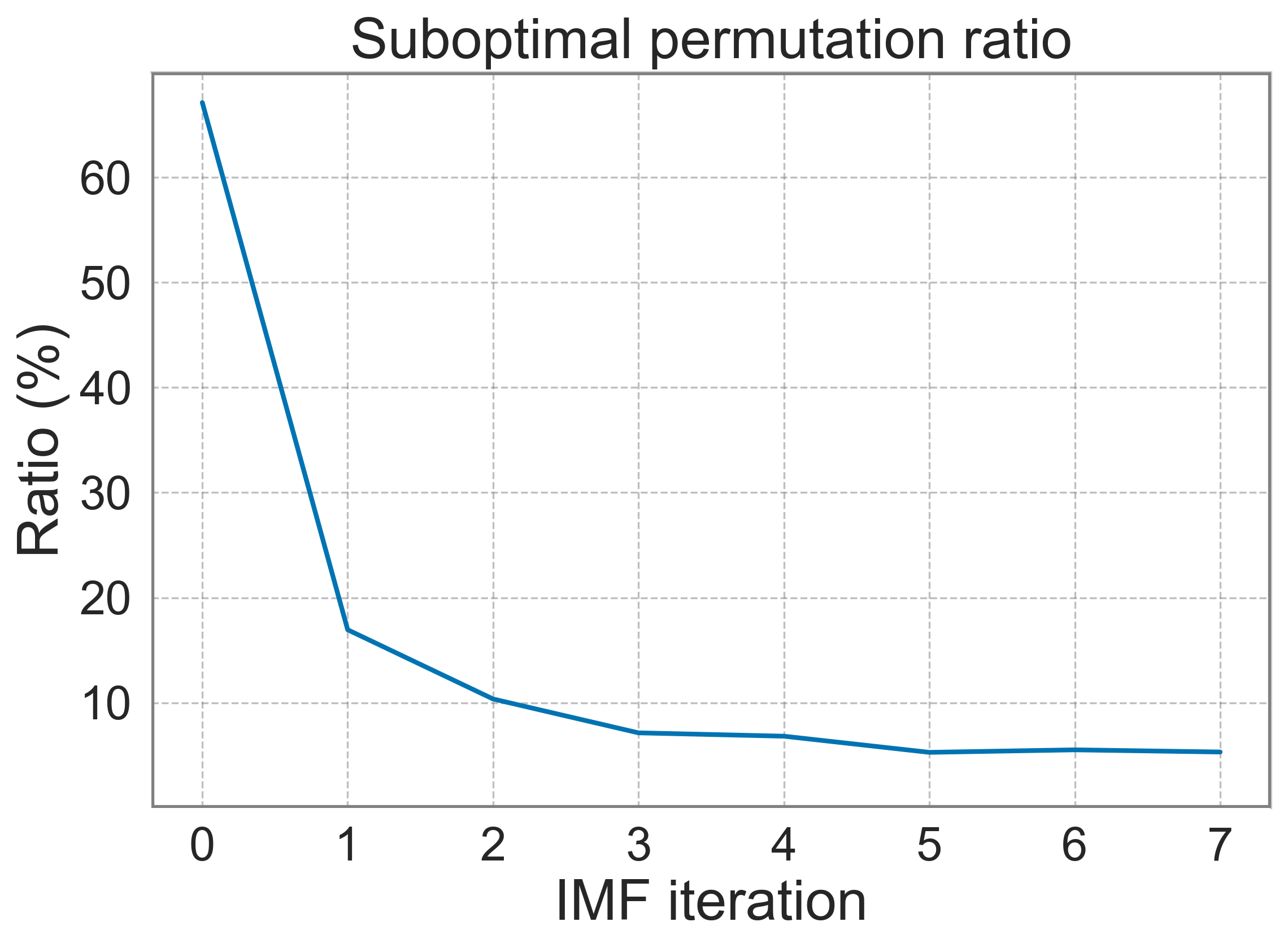}
\caption{\textcolor{\revision}{The ratio of suboptimal initial pairs}}
\label{fig:effect_of_gm_suboptimal_initial_pairs}
\end{subfigure}
\hfill
\begin{subfigure}[t]{0.48\textwidth}
\centering
\includegraphics[width=\linewidth]{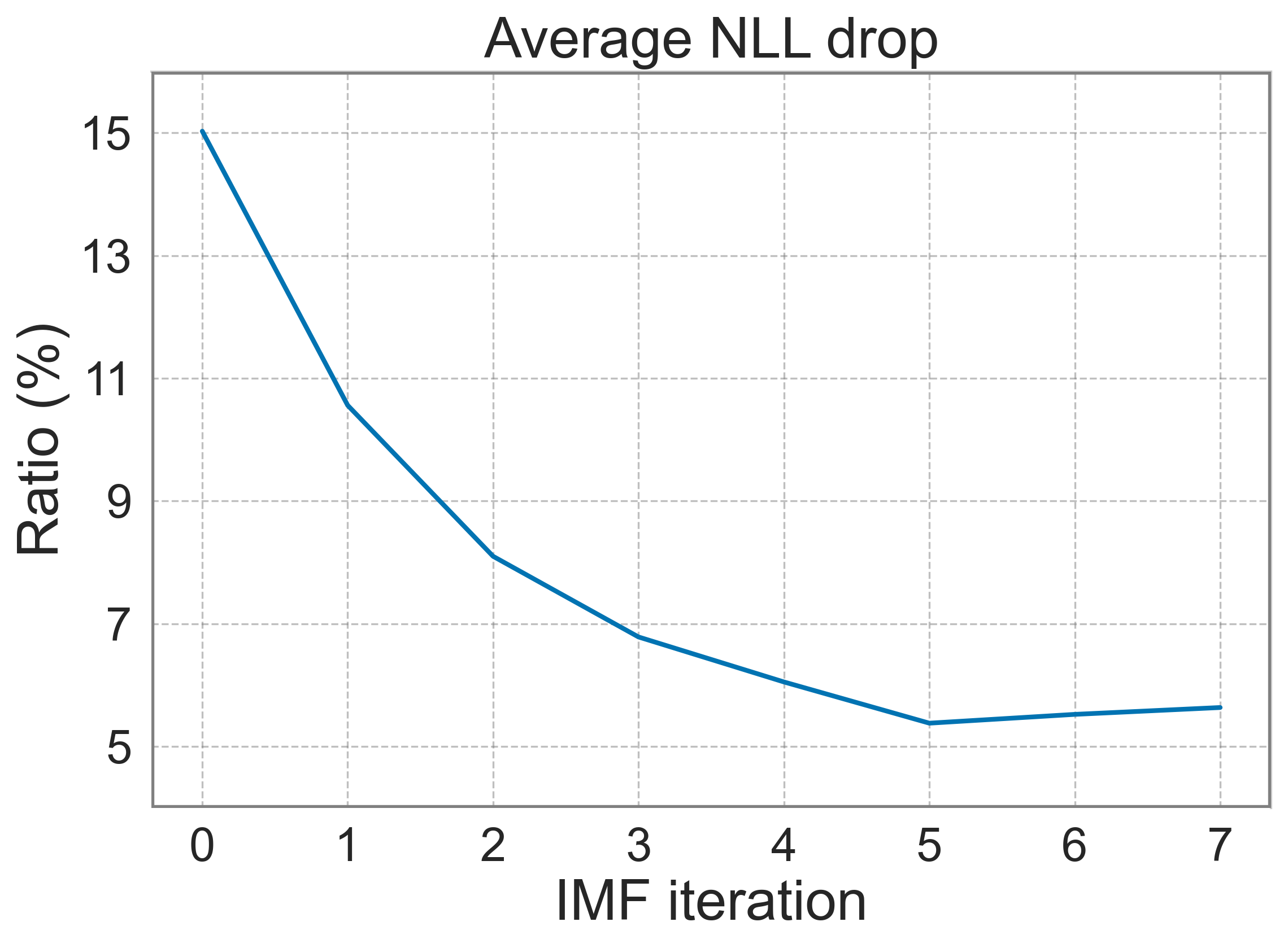}
\caption{\textcolor{\revision}{The average NLL drop}}
\label{fig:effect_of_gm_nll_drop}
\end{subfigure}
\caption{
\textcolor{\revision}{
\textbf{The effect of graph matching algorithm on graph permutation matching versus the IMF iterations.} The result is computed from training data from a single run of our graph matching algorithm, which updated during an IMF iteration. (a) The ratio of suboptimal initial permutation pairs, found by the graph matching algorithm. For each iteration, this amount of graph data pairs are modified in their permutations. (b) The average NLL drop percentage for the data pairs whose permutations were changed by the graph matching algorithm, i.e. $(\text{NLL}_\text{init}-\text{NLL}_\text{new}) / \text{NLL}_\text{init}$.
}
}
\label{fig:gm_ablation_NLL_drop}
\end{figure}


\begin{figure}[h!]
\centering
\begin{subfigure}[t]{0.48\textwidth}
\centering
\includegraphics[width=\linewidth]{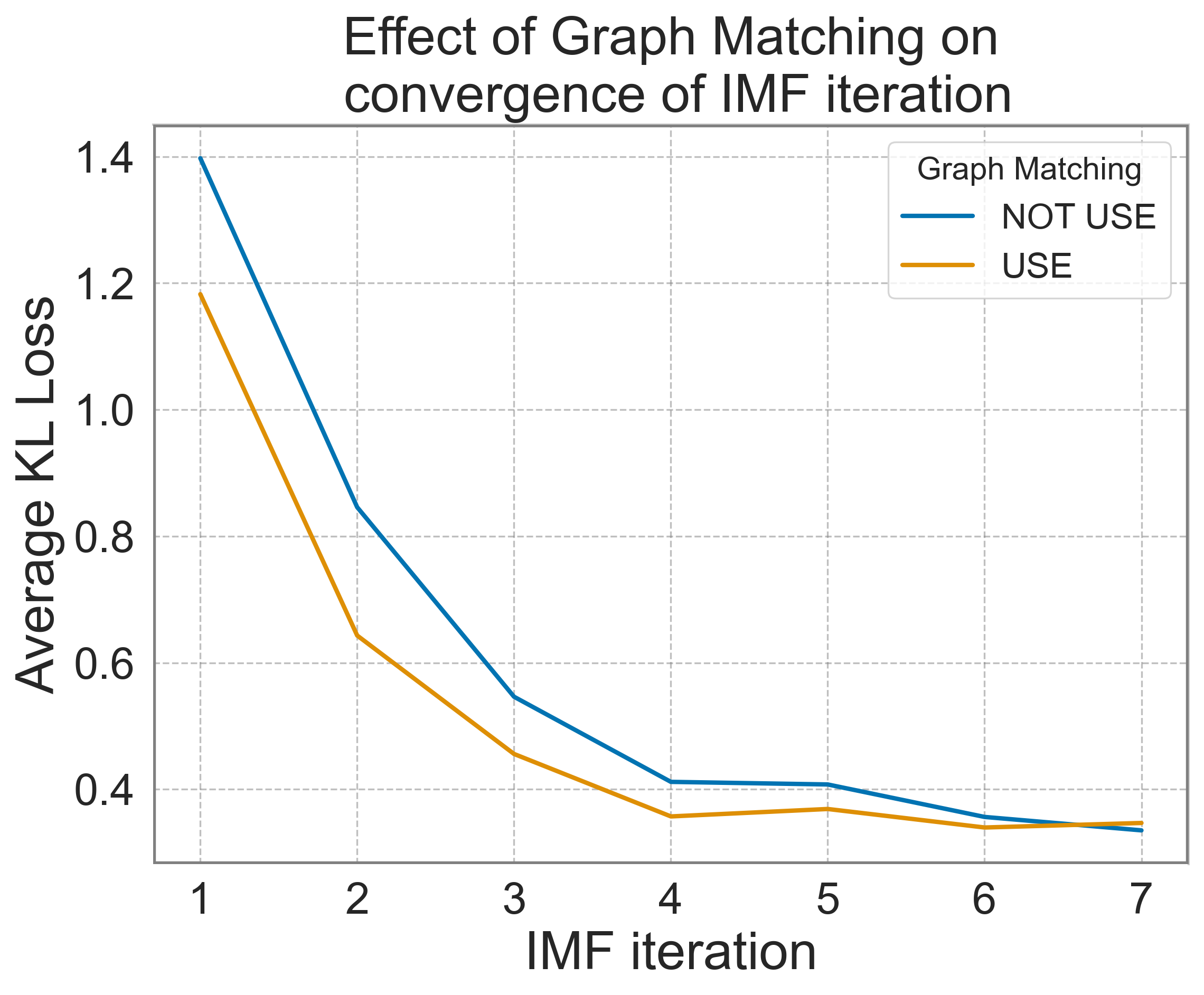}
\caption{\textcolor{\revision}{Comparison of $D_{\textrm{KL}}(\Lambda|M^\theta)$.}}
\label{fig:gm_ablation_IMF_convergence}
\end{subfigure}
\hfill
\begin{subfigure}[t]{0.48\textwidth}
\centering
\includegraphics[width=\linewidth]{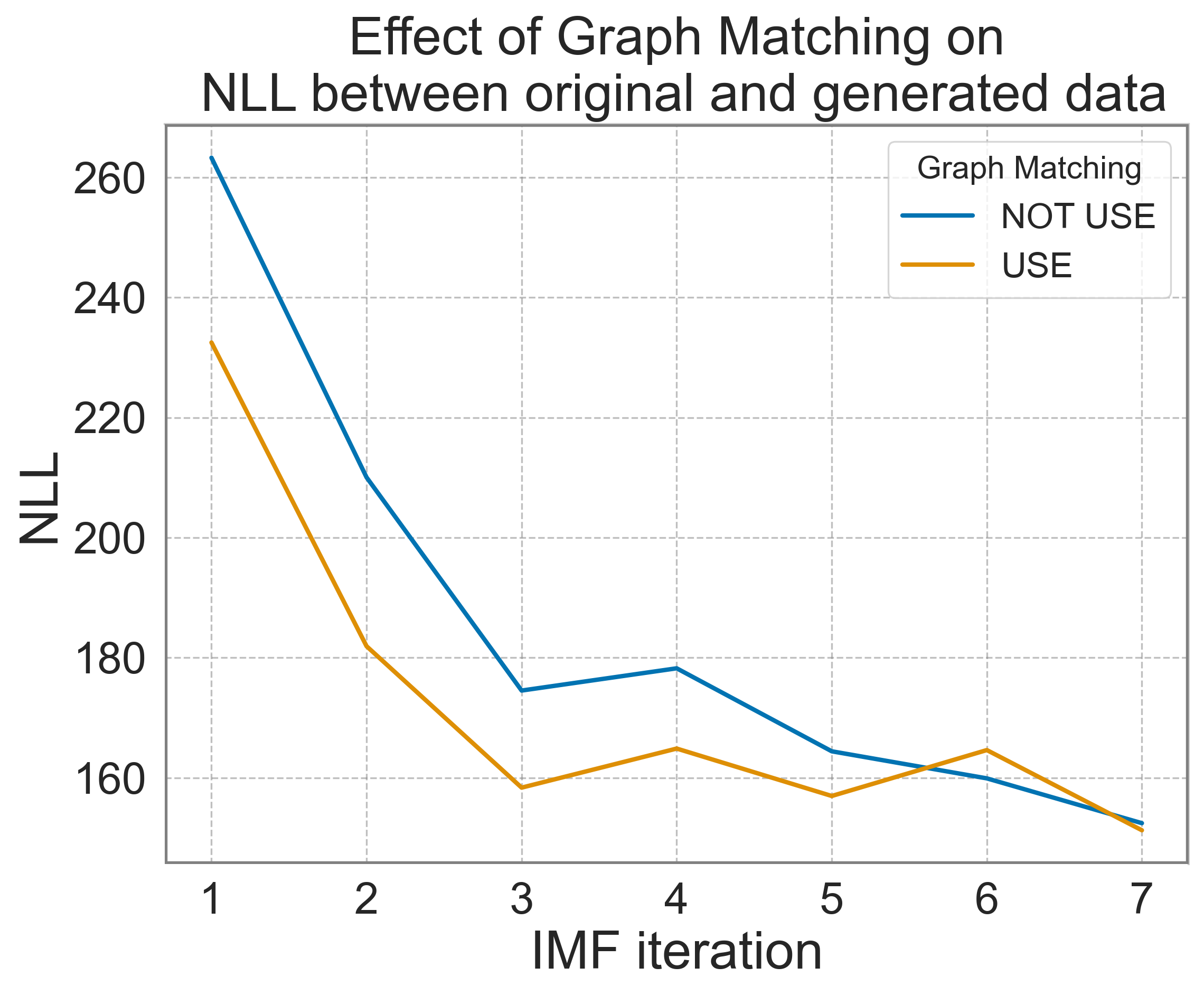}
\caption{\textcolor{\revision}{Comparison of NLL.}}
\label{fig:gm_ablation_NLL}
\end{subfigure}
\caption{
\textcolor{\revision}{
\textbf{Comparison of training loss, $D_{\textrm{KL}}(\Lambda|M^\theta)$, and NLL between original and generated data across IMF iterations of experiment setup with and without graph matching algorithm.} Here for graph matching, we used algorithm (4) in \cref{tab:appendix_albation_graph_matching_algorithms}.
Also, note that NLL in \cref{fig:gm_ablation_NLL} is calculated directly from original data and generated data without permutation alignment.
}
}
\label{fig:combined_figures}
\end{figure}

\subsubsection{The effect of initial coupling}
\label{appendix.d.effect of initial coupling}
We further analyze the molecular optimization task in \cref{subsection:Small molecule transformation} with different initial couplings.
In that section, we used randomly coupled molecules as the initial coupling for all the models.
However, except for DDSBM, all of them assume that suitable molecule pairs have already been identified to be provided.
To meet this, we adopted Tanimoto similarity \citep{willett1998chemical} as a pseudo-metric to find similar molecule pairs between the two molecule distributions, denoted as Tanimoto similarity-based coupling.
We note that the similarity-based coupling is another optimal transport problem of \textit{maximizing} the sum of pair-wise molecular similarities, where we employed the Hungarian method to obtain a sub-optimal solution.

Using the Tanimoto similarity-based coupling, we retrained all models discussed in \cref{subsection:Small molecule transformation} and compared their performance.
Here, we denote a newly trained DDSBM model as DDSBM-T to deviate it from the model trained on the randomly coupled data.
From \cref{tab:main_table_3}, we observe that all models achieved lower NLL values compared to when they were trained with random coupling.
The HierG2G model exhibits much lower FCD and NLL values compared to those of random coupling, indicating that previous graph transformation methods can be improved if more optimal pairs are provided as training data.

\begin{figure}[h!]
\centering
\includegraphics[width=0.6\textwidth]{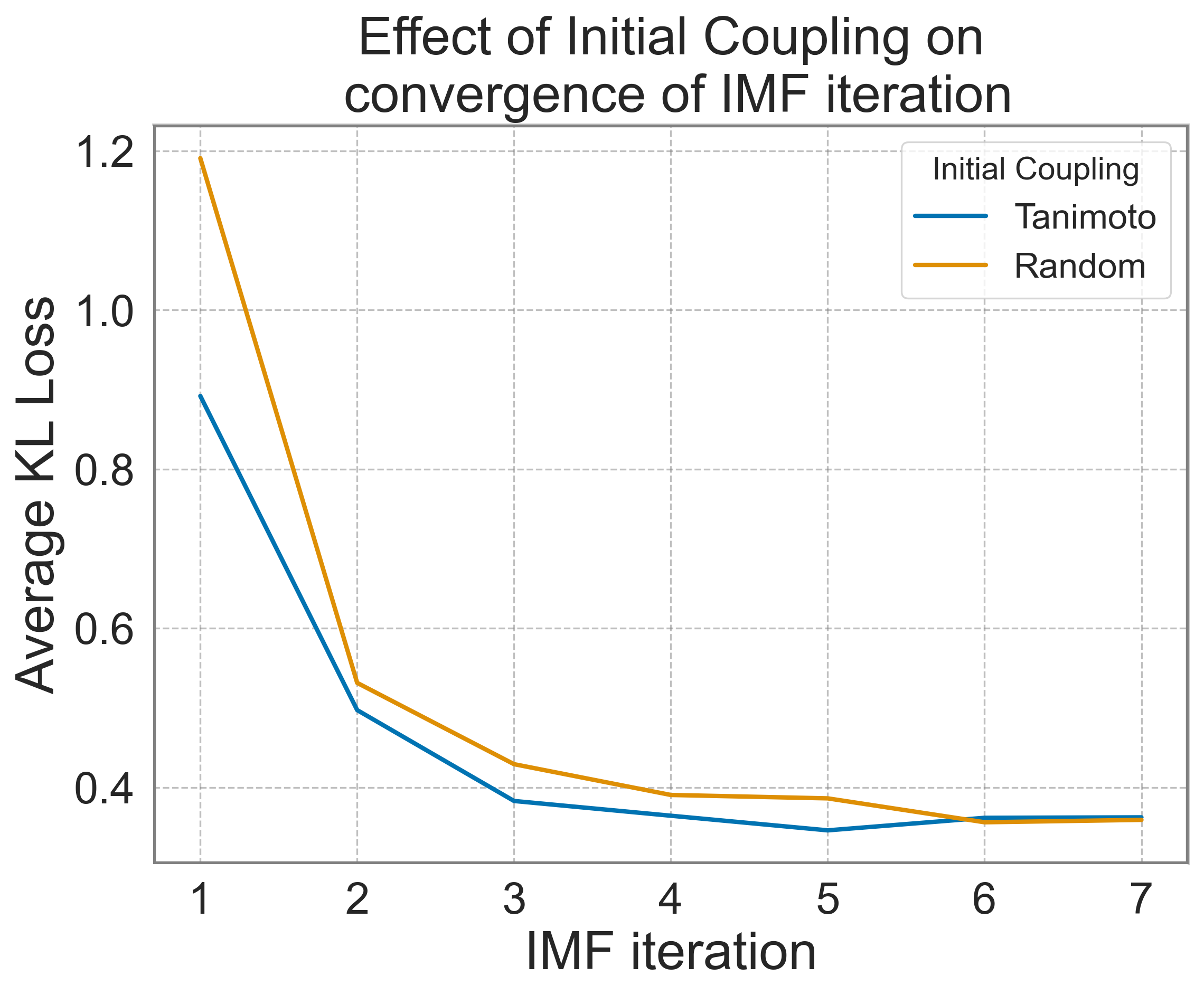}
\caption{\textbf{Comparison of $D_{\textrm{KL}}(\Lambda|M^\theta)$ across IMF iterations for two types of initial couplings: Random and Tanimoto similarity.}}
\label{fig:figure2}
\end{figure}
The distinct feature of DDSBM-T is that all baseline models learn graph transformations between the molecule pairs with high Tanimoto similarity, while DDSBM-T learns graph transformations with minimal cost defined by its reference process $\mathbb{Q}$.
It cannot be guaranteed that the distance defined by our reference process is better than the Tanimoto similarity from the perspective of molecular optimization.
Essentially, the success of molecular optimization should be measured by how well the target property is adjusted while preserving other key properties.
Apparently, \cref{tab:main_table_3} shows that DDSBM-T outperforms the other baselines in terms of molecular property metrics.
This suggests that the graph transformation from DDSBM retains other molecular properties, attaining the goal of the molecule optimization task.

Finally, we analyze the effect of initial coupling on the training process, especially focusing on the approach to convergence.
We illustrate the training losses of DDSBM models from two different initial couplings, random and Tanimoto similarity-based, in \cref{fig:figure2}, respectively.
DDSBM-T shows consistently lower loss values up to the sixth IMF iteration and reaches convergence at the third IMF iteration.

\begin{table}[h!]
\caption{
\textbf{Distribution shift performance on ZINC with initial coupling based on the Tanimoto similarity.} As in \cref{tab:main_table_1_1}, reference refers to metrics from the initial coupling, used as a standard to evaluate each model’s graph translation. The experimental setting is the same as described \cref{subsection:Small molecule transformation}, except for the initial coupling. $\uparrow$ and $\downarrow$ denote higher and lower values are better, respectively.
}
\centering
\resizebox{1.0\textwidth}{!}{
\begin{threeparttable}
\begin{tabular}{lc|ccc|cc|cccc}
\toprule

Model
& Type
& Val.$(\uparrow)$
& Uniq.$(\uparrow)$
& Nov.$(\uparrow)$
& NLL$(\downarrow)$
& NSPDK$(\downarrow)$
& LogP $W_1(\downarrow)$
& QED MAD$(\downarrow)$
& SAscore MAD$(\downarrow)$
& FCD$(\downarrow)$
\\

\midrule

Reference\tnote{1}
& -
& -
& -
& -
& 245.765
& 1.63e-4
& 2.011
& 0.126
& 0.367
& 4.811 / 0.315
\\

\midrule

AtomG2G
& Latent
& \textbf{100.0}
& 99.8
& \textbf{99.9}
& 289.674
& 5.22e-3
& 0.315
& 0.144
& 0.558
& 1.578
\\

HierG2G
& Latent
& \textbf{100.0}
& 99.7
& \textbf{99.9}
& 264.072
& 1.33e-3
& 0.189
& 0.127
& 0.446
& 1.171
\\

\midrule

DBM
& Bridge
& 90.2
& \textbf{100.0}
& \textbf{99.9}
& 220.594
& \textbf{5.91e-4}
& 0.141
& 0.127
& 0.508
& \textbf{0.749}
\\

DDSBM-T
& Schrödinger Bridge
& 95.6
& \textbf{100.0}
& \textbf{99.9}
& \textbf{152.856}
& 6.23e-4
& \textbf{0.103}
& \textbf{0.110}
& \textbf{0.393}
& 0.911
\\

\bottomrule
\end{tabular}
\begin{tablenotes}
\footnotesize
\item[1] NLL, $W_1$, and MADs were calculated using random pairs from the test set. Two FCD values are provided: the first compares the initial molecules in the test set with the terminal molecules in the training set, and the second compares the terminal molecules in both sets. \textcolor{\revision}{Also, the reference NSPDK is computed with the terminal molecules from training and test sets.}
\end{tablenotes}
\end{threeparttable}
}
\label{tab:main_table_3}
\end{table}

\subsection{\textcolor{\revision}{Examples for generated molecules on molecule optimization tasks}}
\label{appendix.examples generated}
To show the difference between molecules generated from DDSBM and others, we visualized some selected examples for the ZINC and polymer datasets, respectively (see \cref{figure:figure1,figure:figure_polymer}).
\textcolor{\revision}{
As expected from the superior performance of DDSBM compared to other methods in terms of joint distribution metrics (NLL and properties MAD), the generation result from DDSBM gives more similar graph structure compared to source molecules in both ZINC and polymer datasets.
We also present generation trajectories for different molecules in ZINC and polymer datasets in \cref{figure:trajectory_zinc,figure:trajectory_polymer}.
}

\begin{figure*}[ht!]
 \centering
 \includegraphics[width=0.90\textwidth]{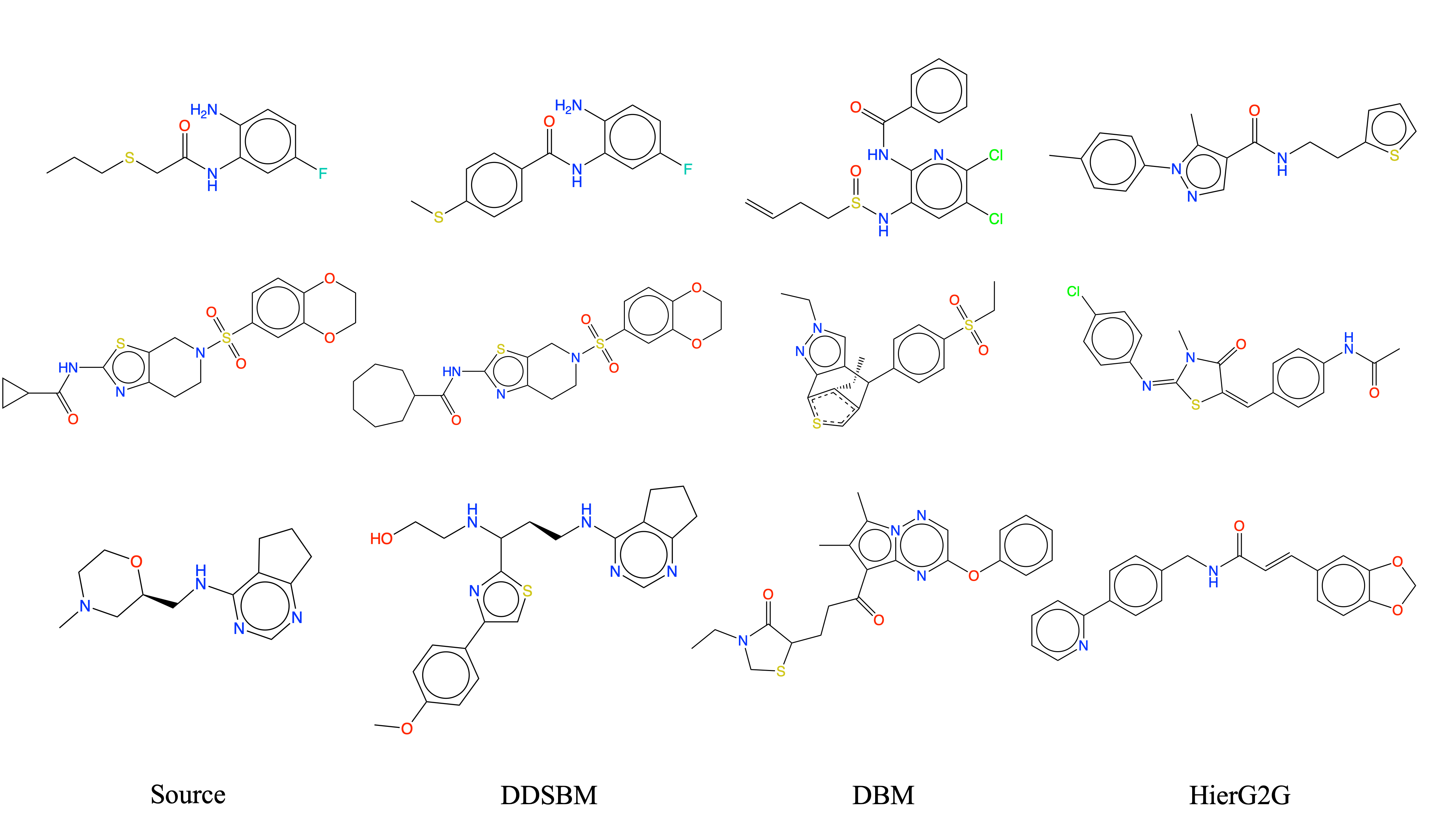}
 \caption{\textbf{Visualization of molecules generated by DDSBM, DBM, and HierG2G compared to the source molecule.}}
 \label{figure:figure1}
\end{figure*}
\begin{figure*}[ht!]
 \centering
 \includegraphics[width=0.90\textwidth]{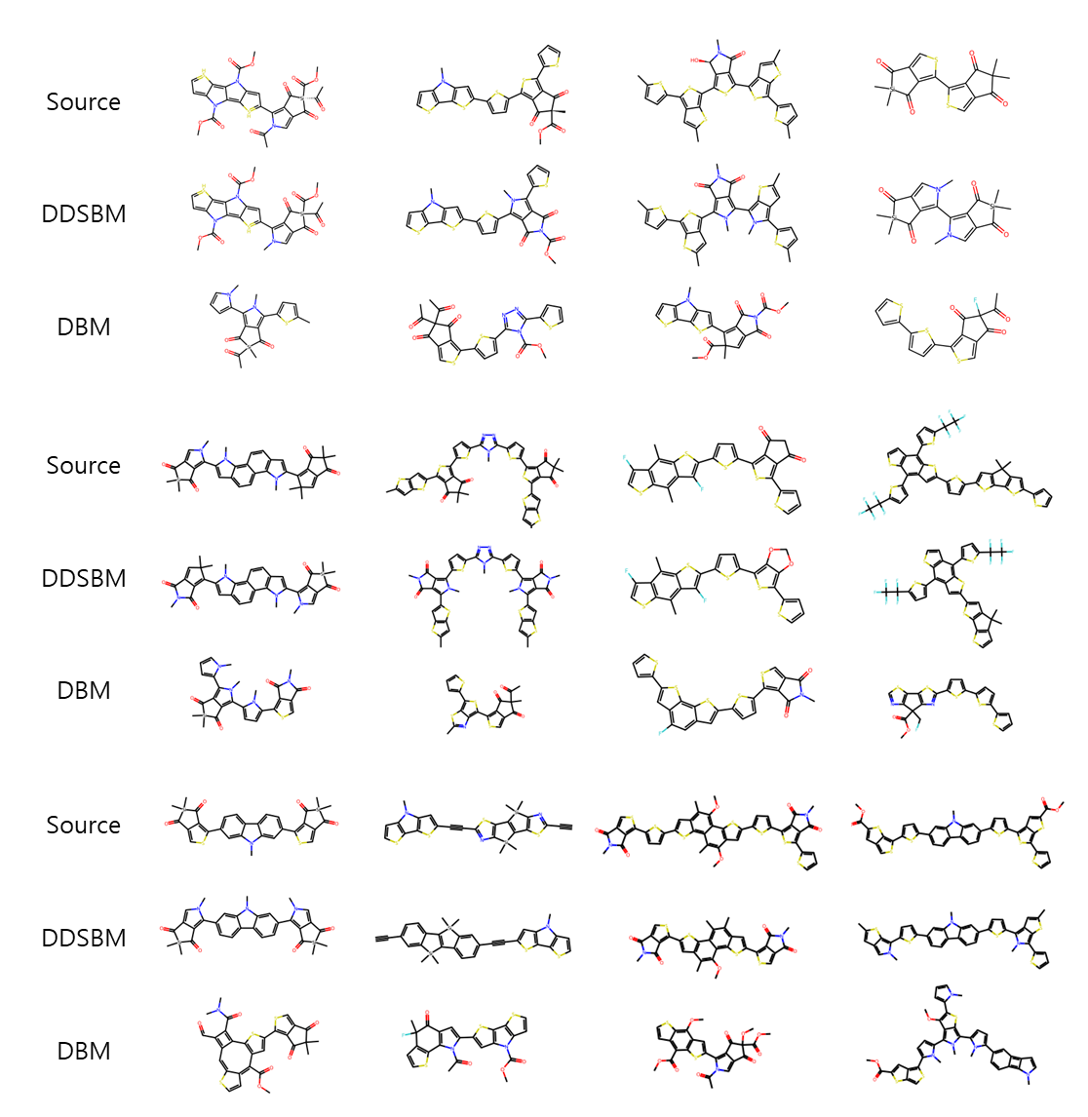}
 \caption{
 \textbf{Visualization of molecules generated by DDSBM and DBM with the source molecule.} The samples generated by DBM and DDSBM were selected from the molecules predicted to have a blue color with GAP values in the range of 2.56--2.75 eV.
 }
 \label{figure:figure_polymer}
\end{figure*}

\begin{figure}[h!]
\centering
\begin{subfigure}[t]{0.19\textwidth}
\centering
\includegraphics[width=\linewidth, trim=0 50 0 0, clip]{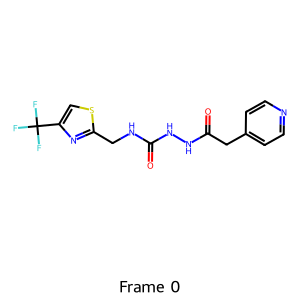}
\end{subfigure}
\begin{subfigure}[t]{0.19\textwidth}
\centering
\includegraphics[width=\linewidth, trim=0 50 0 0, clip]{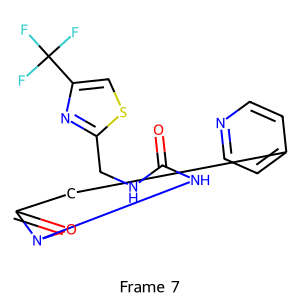}
\end{subfigure}
\begin{subfigure}[t]{0.19\textwidth}
\centering
\includegraphics[width=\linewidth, trim=0 50 0 0, clip]{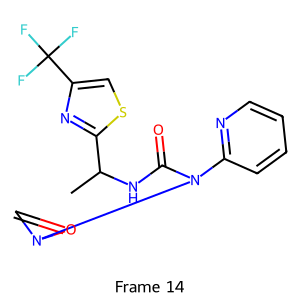}
\end{subfigure}
\begin{subfigure}[t]{0.19\textwidth}
\centering
\includegraphics[width=\linewidth, trim=0 50 0 0, clip]{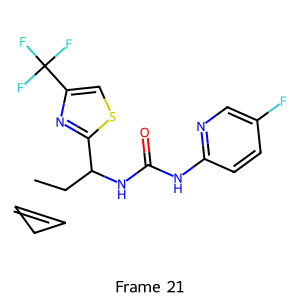}
\end{subfigure}
\begin{subfigure}[t]{0.19\textwidth}
\centering
\includegraphics[width=\linewidth, trim=0 50 0 0, clip]{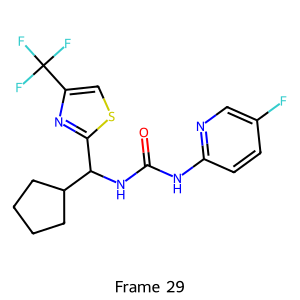}
\end{subfigure}

\vspace{0.5em} 

\begin{subfigure}[t]{0.19\textwidth}
\centering
\includegraphics[width=\linewidth, trim=0 50 0 0, clip]{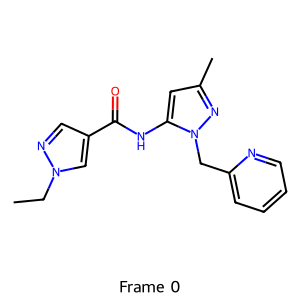}
\end{subfigure}
\begin{subfigure}[t]{0.19\textwidth}
\centering
\includegraphics[width=\linewidth, trim=0 50 0 0, clip]{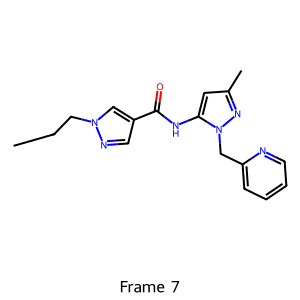}
\end{subfigure}
\begin{subfigure}[t]{0.19\textwidth}
\centering
\includegraphics[width=\linewidth, trim=0 50 0 0, clip]{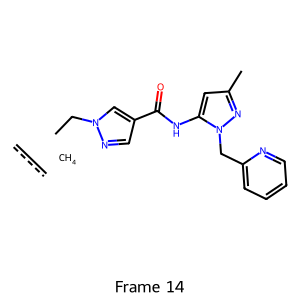}
\end{subfigure}
\begin{subfigure}[t]{0.19\textwidth}
\centering
\includegraphics[width=\linewidth, trim=0 50 0 0, clip]{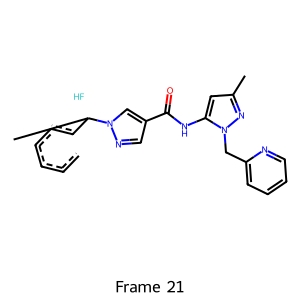}
\end{subfigure}
\begin{subfigure}[t]{0.19\textwidth}
\centering
\includegraphics[width=\linewidth, trim=0 50 0 0, clip]{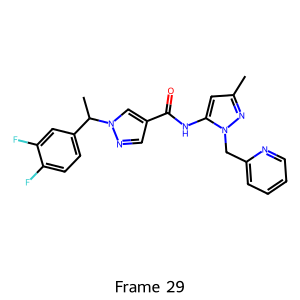}
\end{subfigure}

\vspace{0.5em} 

\begin{subfigure}[t]{0.19\textwidth}
\centering
\includegraphics[width=\linewidth, trim=0 50 0 0, clip]{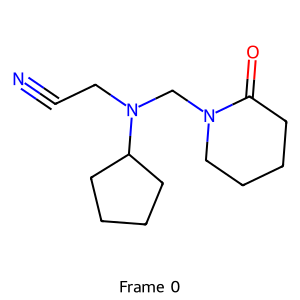}
\end{subfigure}
\begin{subfigure}[t]{0.19\textwidth}
\centering
\includegraphics[width=\linewidth, trim=0 50 0 0, clip]{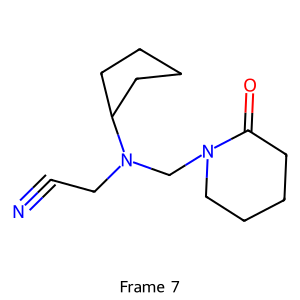}
\end{subfigure}
\begin{subfigure}[t]{0.19\textwidth}
\centering
\includegraphics[width=\linewidth, trim=0 50 0 0, clip]{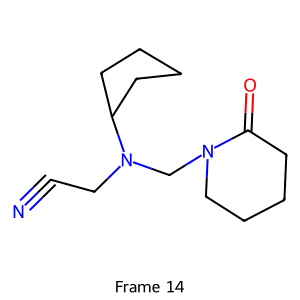}
\end{subfigure}
\begin{subfigure}[t]{0.19\textwidth}
\centering
\includegraphics[width=\linewidth, trim=0 50 0 0, clip]{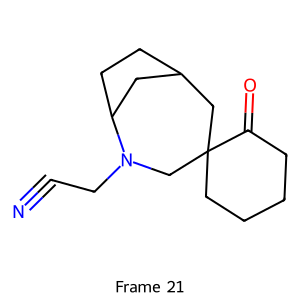}
\end{subfigure}
\begin{subfigure}[t]{0.19\textwidth}
\centering
\includegraphics[width=\linewidth, trim=0 50 0 0, clip]{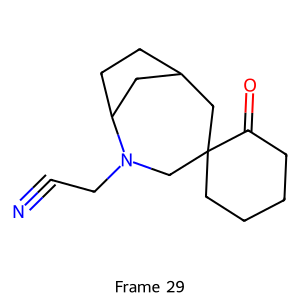}
\end{subfigure}

\vspace{0.5em} 

\begin{subfigure}[t]{0.19\textwidth}
\centering
\includegraphics[width=\linewidth, trim=0 50 0 0, clip]{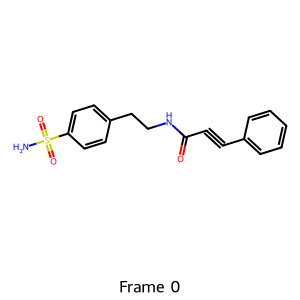}
\end{subfigure}
\begin{subfigure}[t]{0.19\textwidth}
\centering
\includegraphics[width=\linewidth, trim=0 50 0 0, clip]{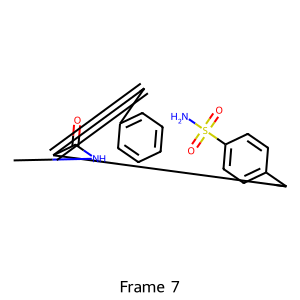}
\end{subfigure}
\begin{subfigure}[t]{0.19\textwidth}
\centering
\includegraphics[width=\linewidth, trim=0 50 0 0, clip]{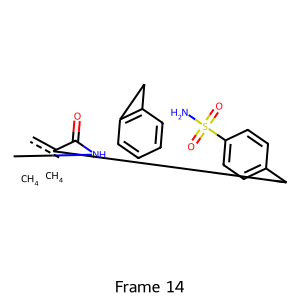}
\end{subfigure}
\begin{subfigure}[t]{0.19\textwidth}
\centering
\includegraphics[width=\linewidth, trim=0 50 0 0, clip]{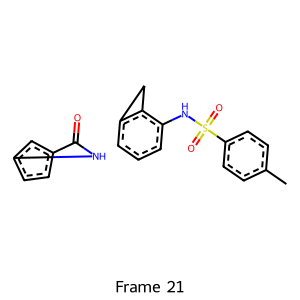}
\end{subfigure}
\begin{subfigure}[t]{0.19\textwidth}
\centering
\includegraphics[width=\linewidth, trim=0 50 0 0, clip]{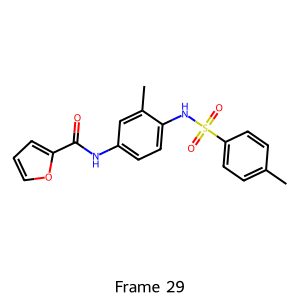}
\end{subfigure}

\vspace{1em}
\makebox[0.19\textwidth]{t=0}
\hfill\makebox[0.19\textwidth]{t=1/4}
\hfill\makebox[0.19\textwidth]{t=1/2}
\hfill\makebox[0.19\textwidth]{t=3/4}
\hfill\makebox[0.19\textwidth]{t=1}

\caption{
\textcolor{\revision}{
\textbf{Trajectory visualization of ZINC molecule optimization task generated by DDSBM.}
Note that \textbf{t=0} and \textbf{t=1} are the original and generated molecules, respectively.
}
}
\label{figure:trajectory_zinc}
\end{figure}

\begin{figure}[h!]
\centering
\begin{subfigure}[t]{0.19\textwidth}
\centering
\includegraphics[width=\linewidth, trim=0 50 0 0, clip]{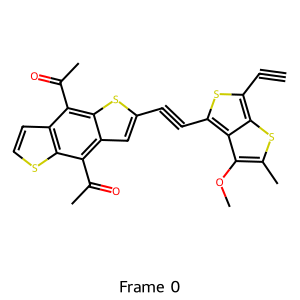}
\end{subfigure}
\begin{subfigure}[t]{0.19\textwidth}
\centering
\includegraphics[width=\linewidth, trim=0 50 0 0, clip]{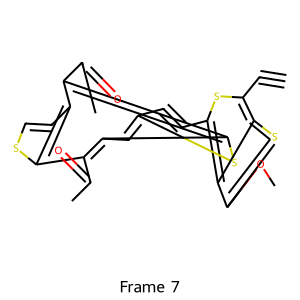}
\end{subfigure}
\begin{subfigure}[t]{0.19\textwidth}
\centering
\includegraphics[width=\linewidth, trim=0 50 0 0, clip]{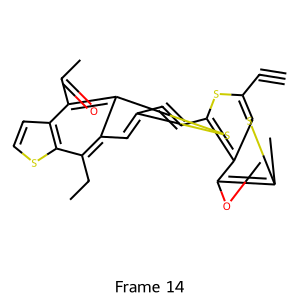}
\end{subfigure}
\begin{subfigure}[t]{0.19\textwidth}
\centering
\includegraphics[width=\linewidth, trim=0 50 0 0, clip]{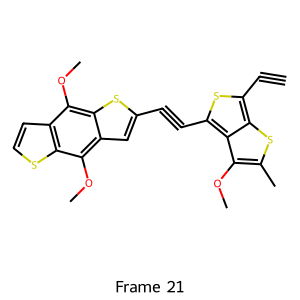}
\end{subfigure}
\begin{subfigure}[t]{0.19\textwidth}
\centering
\includegraphics[width=\linewidth, trim=0 50 0 0, clip]{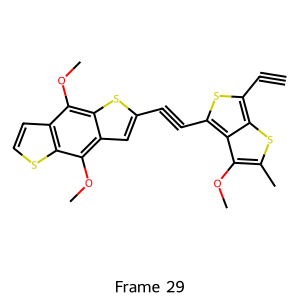}
\end{subfigure}

\vspace{0.5em} 

\begin{subfigure}[t]{0.19\textwidth}
\centering
\includegraphics[width=\linewidth, trim=0 50 0 0, clip]{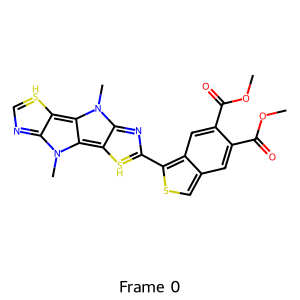}
\end{subfigure}
\begin{subfigure}[t]{0.19\textwidth}
\centering
\includegraphics[width=\linewidth, trim=0 50 0 0, clip]{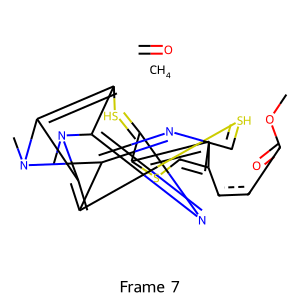}
\end{subfigure}
\begin{subfigure}[t]{0.19\textwidth}
\centering
\includegraphics[width=\linewidth, trim=0 50 0 0, clip]{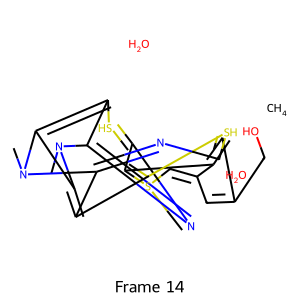}
\end{subfigure}
\begin{subfigure}[t]{0.19\textwidth}
\centering
\includegraphics[width=\linewidth, trim=0 50 0 0, clip]{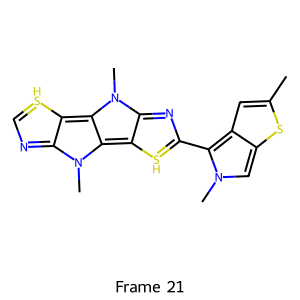}
\end{subfigure}
\begin{subfigure}[t]{0.19\textwidth}
\centering
\includegraphics[width=\linewidth, trim=0 50 0 0, clip]{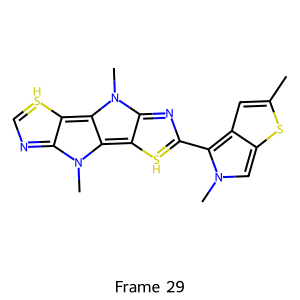}
\end{subfigure}

\vspace{0.5em} 

\begin{subfigure}[t]{0.19\textwidth}
\centering
\includegraphics[width=\linewidth, trim=0 50 0 0, clip]{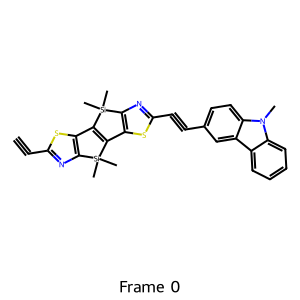}
\end{subfigure}
\begin{subfigure}[t]{0.19\textwidth}
\centering
\includegraphics[width=\linewidth, trim=0 50 0 0, clip]{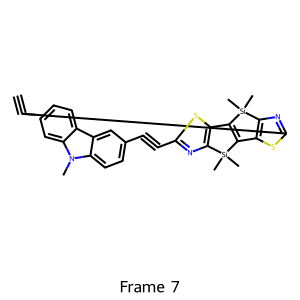}
\end{subfigure}
\begin{subfigure}[t]{0.19\textwidth}
\centering
\includegraphics[width=\linewidth, trim=0 50 0 0, clip]{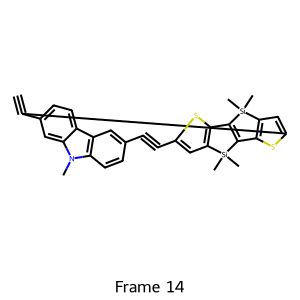}
\end{subfigure}
\begin{subfigure}[t]{0.19\textwidth}
\centering
\includegraphics[width=\linewidth, trim=0 50 0 0, clip]{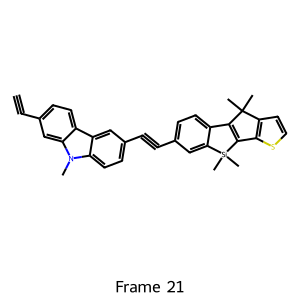}
\end{subfigure}
\begin{subfigure}[t]{0.19\textwidth}
\centering
\includegraphics[width=\linewidth, trim=0 50 0 0, clip]{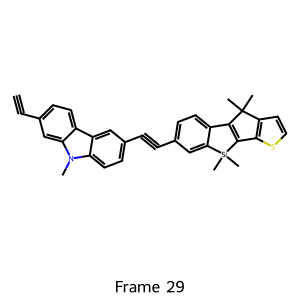}
\end{subfigure}

\vspace{0.5em} 

\begin{subfigure}[t]{0.19\textwidth}
\centering
\includegraphics[width=\linewidth, trim=0 50 0 0, clip]{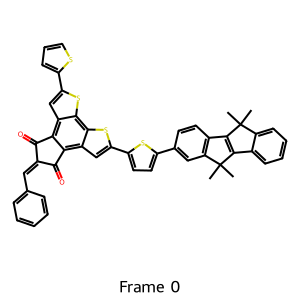}
\end{subfigure}
\begin{subfigure}[t]{0.19\textwidth}
\centering
\includegraphics[width=\linewidth, trim=0 50 0 0, clip]{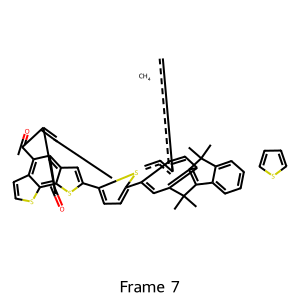}
\end{subfigure}
\begin{subfigure}[t]{0.19\textwidth}
\centering
\includegraphics[width=\linewidth, trim=0 50 0 0, clip]{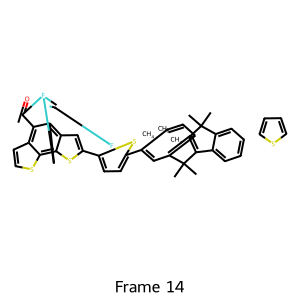}
\end{subfigure}
\begin{subfigure}[t]{0.19\textwidth}
\centering
\includegraphics[width=\linewidth, trim=0 50 0 0, clip]{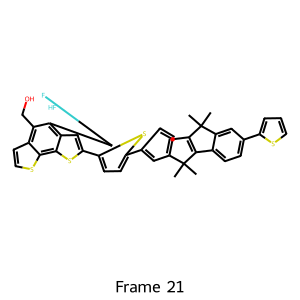}
\end{subfigure}
\begin{subfigure}[t]{0.19\textwidth}
\centering
\includegraphics[width=\linewidth, trim=0 50 0 0, clip]{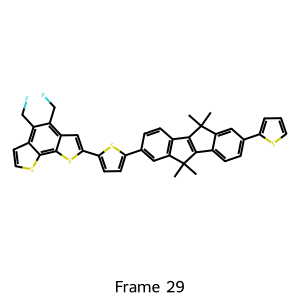}
\end{subfigure}

\vspace{1em}
\makebox[0.19\textwidth]{t=0}
\hfill\makebox[0.19\textwidth]{t=1/4}
\hfill\makebox[0.19\textwidth]{t=1/2}
\hfill\makebox[0.19\textwidth]{t=3/4}
\hfill\makebox[0.19\textwidth]{t=1}

\caption{
\textcolor{\revision}{
\textbf{Trajectory visualization of Polymer optimization task generated by DDSBM.}
Note that \textbf{t=0} and \textbf{t=1} are the original and generated molecules, respectively.
}
}
\label{figure:trajectory_polymer}

\end{figure}

\newpage

\section{\textcolor{\revision}{Unconditional Graph Generation}}
\label{appendix.d.unconditional graph generation}

\subsection{\textcolor{\revision}{Experimental setup and metrics}}
\textcolor{\revision}{
We conducted an additional evaluation of DDSBM on an unconditional graph generation task.
To ensure consistency, we adopted the same hyperparameter settings and training configurations as in DiGress \citep{vignac2022digress}, except for the noise scheduling as detailed in Table \ref{tab:appendix_training_hyperparameters}.
Training, validation, and test splits were same as DiGress, but early stopping based on the validation set was not employed, similar to the main tasks.
The training hyperparameters are detailed in \cref{tab:appendix_training_hyperparameters_uncond}.
}

\textcolor{\revision}{For each task, both the DDSBM and DBM were trained for the same number of epochs, with sampling from the prior distribution repeated five times. 
The results were averaged across the five for the Maximum Mean Discrepancy (MMD) of graph features (degree distributions, clustering coefficients, and orbit counts).
Additionally, for the planar and QM9 datasets, further metrics such as Validity/Uniqueness/Novelty (V.U.N.) were evaluated.
}

\begin{table}[h!]
\caption{
\textcolor{\revision}{
\textbf{Training hyperparameters of DBM and DDSBM for unconditional graph generation}
}
}
\centering

\resizebox{0.9\textwidth}{!}{
\begin{tabular}{lccccc}
\toprule

Task & Model & diffusion steps & $\alpha_{\text{min}}$ & epoch & SB iterations \\
\midrule
QM9 & DBM & 100 & 0.999 & 200 & -- \\
QM9 & DDSBM & 100 & 0.999 & 100 & 2 \\
Community-20 & DBM & 500 & 0.9998 & 1000k & -- \\
Community-20 & DDSBM & 500 & 0.9998 & 200k & 5 \\
Planar & DBM & 1000 & 0.9999 & 100k & -- \\
Planar & DDSBM & 1000 & 0.9999 & 10k & 10 \\
\bottomrule
\end{tabular}
}
\label{tab:appendix_training_hyperparameters_uncond}
\end{table}

\subsection{\textcolor{\revision}{Synthetic graph generation}}
\textcolor{\revision}{
In the Community-20 task, both DBM and DDSBM demonstrated superior performance compared to DiGress in terms of degree, clustering, and orbit metrics. 
This result highlights the strengths of DBM and DDSBM in unconditional graph generation tasks as well. 
Notably, DDSBM achieved the best performance in clustering and competitive results in orbit while also significantly reducing the NLL value compared to DBM. 
These findings underscore the capacity of the DDSBM to generate high-quality graph structures at a minimal cost, even when bridging noisy distribution to data distribution.
}
\begin{table}[h!]
\caption{
\textcolor{\revision}{
\textbf{Unconditional graph generation performance on Community-20.} $\uparrow$ and $\downarrow$ denote higher and lower values are better, respectively.
The best performance is highlighted in bold, and the second-best performance is underlined.
}
}
\centering
\resizebox{0.7\textwidth}{!}{
\begin{threeparttable}
\begin{tabular}{l|cccc}
\toprule

Model
& Degree $\downarrow$
& Clustering $\downarrow$
& Orbit $\downarrow$
& NLL $\downarrow$
\\

\midrule

GraphRNN\tnote{1}
& 8.00e-2
& 1.19e-1 
& 4.00e-2 
& -
\\

GRAN\tnote{1}
& 6.00e-2 
& 1.12e-1 
& 1.00e-2 
& -
\\

GG-GAN\tnote{1}
& 8.00e-2 
& 2.17e-1 
& 8.00e-2 
& -
\\

SPECTRE\tnote{1}
& \textbf{1.00e-2}
& 1.89e-1 
& 2.00e-2 
& -
\\

DiGress\tnote{1}
& 2.00e-2 
& 6.30e-2 
& 1.00e-2 
& -
\\

\midrule

DBM 
& 1.80e-2 
& \underline{3.80e-2}
& \textbf{5.14e-3}
& \underline{3.26e+2}
\\

DDSBM 
& \underline{1.75e-2}
& \textbf{2.78e-2}
& \underline{5.81e
-3}
& \textbf{2.87e+2}
\\

\bottomrule
\end{tabular}
\begin{tablenotes}
\footnotesize
\item[1] These results are taken from \citet{vignac2022digress}.
\end{tablenotes}
\end{threeparttable}
}
\label{tab:uncond_gen_comm20}
\end{table}

\begin{table}[h!]
\caption{
\textcolor{\revision}{
\textbf{Unconditional graph generation performance on Planar.} $\uparrow$ and $\downarrow$ denote higher and lower values are better, respectively.
The best performance is highlighted in bold, and the second-best performance is underlined.
}
}
\centering
\resizebox{0.7\textwidth}{!}{
\begin{threeparttable}
\begin{tabular}{l|ccccc}
\toprule

Model
& Degree $\downarrow$
& Clustering $\downarrow$
& Orbit $\downarrow$
& V.U.N. $\uparrow$
& NLL $\downarrow$
\\

\midrule

GraphRNN\tnote{1}
&4.90e-3 
&2.79e-1 
&1.25e+0 
&0.0 
&-
\\

GRAN\tnote{1}
&7.00e-4 
&\underline{4.34e-2}
&\underline{9.00e-4}
&0.0 
&-
\\

SPECTRE\tnote{1}
&\underline{5.00e-4}
&7.75e-2 
&1.20e-3 
&25.0 
&-
\\

ConGress\tnote{1}
&4.76e-3 
&2.73e-1 
&1.30e+0 
&0.0 
&-
\\

DiGress\tnote{1}
&\textbf{2.80e-4}
&\textbf{3.72e-2}
&\textbf{8.50e-4}
&75.0 
&-
\\

\midrule

DBM 
&9.35e-4 
&8.95e-2 
&8.67e-3 
&\textbf{81.5}
&\underline{1.96e+3}
\\

DDSBM 
&7.39e-4 
&5.81e-2 
&9.60e-4 
&\underline{76.0}
&\textbf{1.51e+3}
\\

\bottomrule
\end{tabular}
\begin{tablenotes}
\footnotesize
\item[1] These results are taken from \citet{vignac2022digress}.
\end{tablenotes}
\end{threeparttable}
}
\label{tab:uncond_gen_planar}
\end{table}

\subsection{\textcolor{\revision}{Small molecular graph generation}}
\textcolor{\revision}{
For the QM9 task, both DBM and DDSBM demonstrated significant improvements in FCD compared to DiGress, as reported in \citet{glad}. Additionally, enhancements in validity and novelty were also observed. While DBM showed slightly better performance in FCD and validity, DDSBM exhibited strengths in novelty compared to DBM. For other metrics, DDSBM and DBM displayed comparable performance.
}

\begin{table}[h!]
\caption{
\textcolor{\revision}{
\textbf{Unconditional graph generation performance on QM9.} $\uparrow$ and $\downarrow$ denote higher and lower values are better, respectively.
The best performance is highlighted in bold, and the second-best performance is underlined.
}
}
\centering
\resizebox{0.7\textwidth}{!}{
\begin{threeparttable}
\begin{tabular}{l|ccccc}
\toprule

Model
& Val. $\uparrow$
& Uniq. $\uparrow$
& Nov. $\uparrow$
& FCD $\downarrow$
& NLL $\downarrow$
\\

\midrule

GraphAF\tnote{1} 
&74.43 
&88.64 
&86.59 
&5.27e+0 
&-
\\

MoFlow\tnote{1}
&91.36 
&\textbf{98.65} 
&\underline{94.72} 
&4.47e+0 
&-
\\

GraphDF\tnote{1}
&93.88 
&\underline{98.58}
&\textbf{98.54} 
&1.09e+1 
&-
\\

GDSS\tnote{1}
&95.72 
&98.46 
&86.27 
&2.90e+0 
&-
\\

GraphArm\tnote{1}
&90.25 
&95.62 
&70.39 
&1.22e+0 
&-
\\

GLAD\tnote{1}
&97.12 
&97.52 
&38.75 
&2.01e-1 
&-
\\

Digress\tnote{1}
&99.00 
&96.66 
&33.40 
&3.60e-1 
&-
\\

\midrule

DBM 
&\textbf{99.87}
&96.59 
&36.54
&\textbf{9.90e-2} 
&\underline{7.55e+1}
\\

DDSBM 
&\underline{99.67}
&96.83 
&38.06 
&\underline{1.05e-1}
&\textbf{5.38e+1}
\\

\bottomrule
\end{tabular}
\begin{tablenotes}
\footnotesize
\item[1] These results are taken from \citet{glad}.
\end{tablenotes}
\end{threeparttable}
}
\label{tab:uncond_gen_qm9}
\end{table}

\subsection{\textcolor{\revision}{Examples for generated graph on unconditional graph generation}}
\textcolor{\revision}{
To illustrate how graph structure is preserved in DDSBM, we visualized a selection of examples from the Planar, Community-20, and QM9 unconditional generation tasks. 
For all three tasks, the generation trajectories were visualized in \cref{figure:figure_planar_chain,figure:figure_comm20_chain,figure:figure_qm9_chain}. 
In the case of planar graphs, we plotted the initial and generated graphs using the trained DDSBM and the public checkpoints of Digress (see \cref{figure:figure_planar}).
}


\begin{figure}[h!]
\centering
\begin{subfigure}[t]{0.19\textwidth}
\centering
\includegraphics[width=\linewidth, trim=
0 20 0 0, clip]{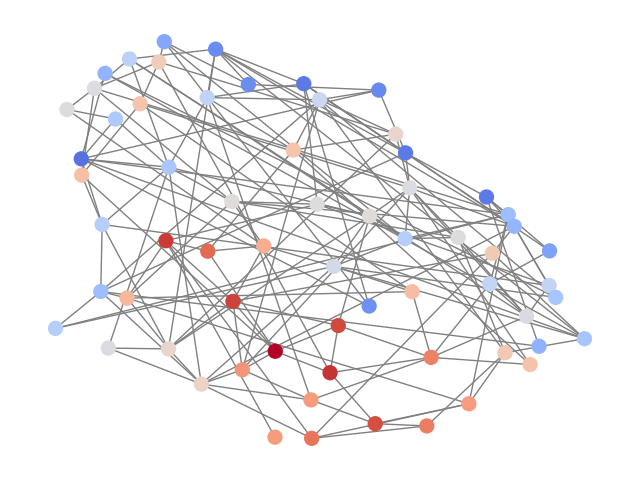}
\end{subfigure}
\begin{subfigure}[t]{0.19\textwidth}
\centering
\includegraphics[width=\linewidth, trim=0 20 0 0, clip]{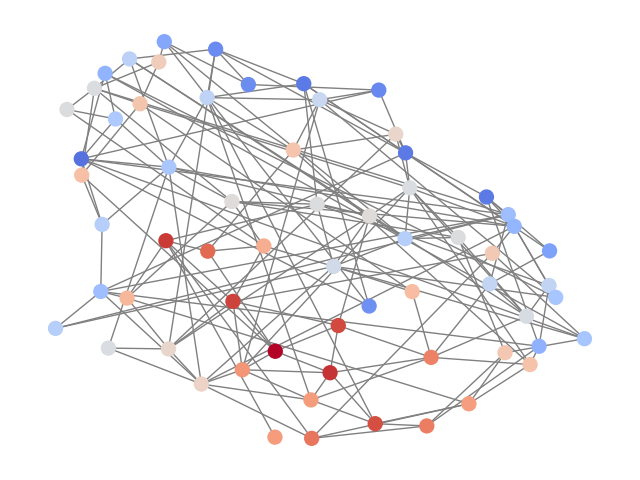}
\end{subfigure}
\begin{subfigure}[t]{0.19\textwidth}
\centering
\includegraphics[width=\linewidth, trim=0 20 0 0, clip]{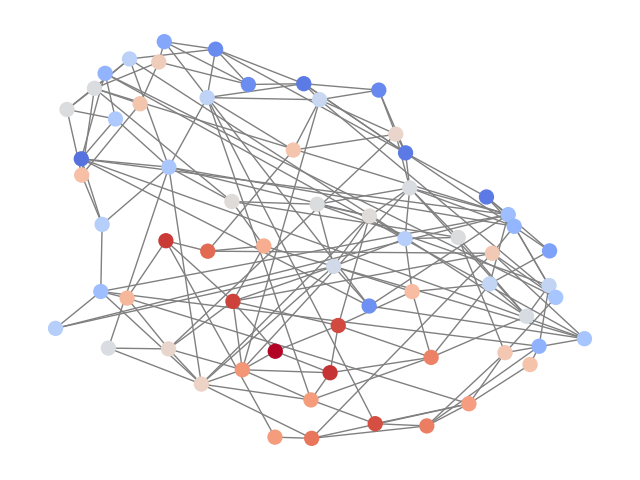}
\end{subfigure}
\begin{subfigure}[t]{0.19\textwidth}
\centering
\includegraphics[width=\linewidth, trim=0 20 0 0, clip]{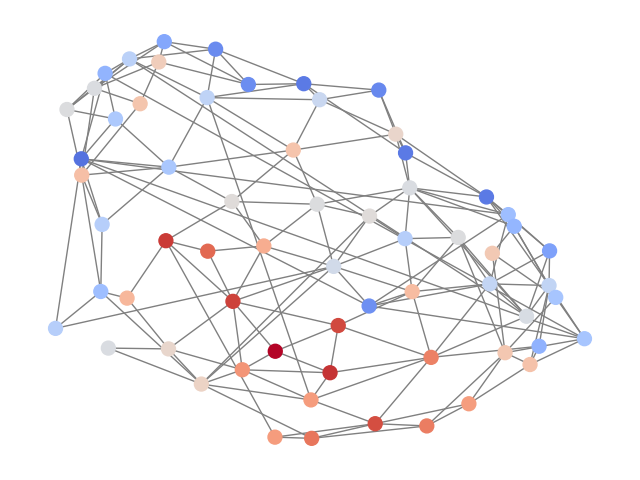}
\end{subfigure}
\begin{subfigure}[t]{0.19\textwidth}
\centering
\includegraphics[width=\linewidth, trim=0 20 0 0, clip]{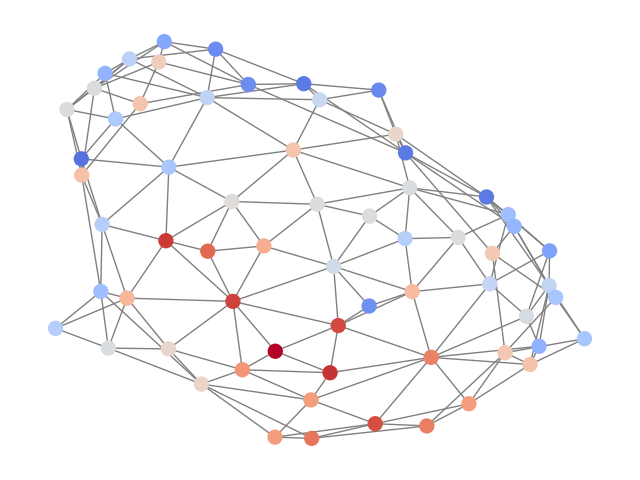}
\end{subfigure}

\vspace{0.5em} 

\begin{subfigure}[t]{0.19\textwidth}
\centering
\includegraphics[width=\linewidth, trim=0 20 0 0, clip]{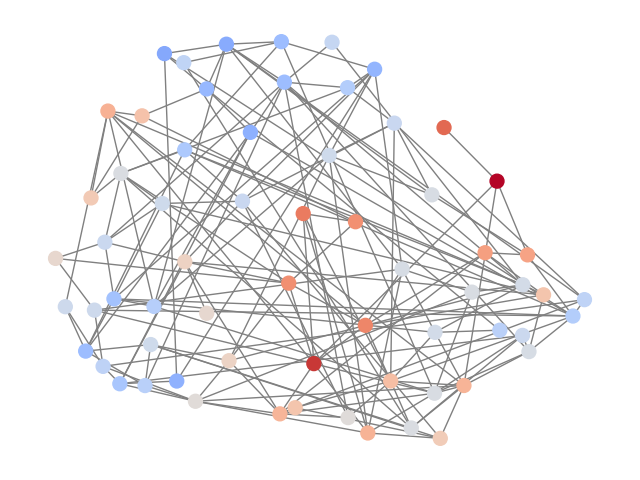}
\end{subfigure}
\begin{subfigure}[t]{0.19\textwidth}
\centering
\includegraphics[width=\linewidth, trim=0 20 0 0, clip]{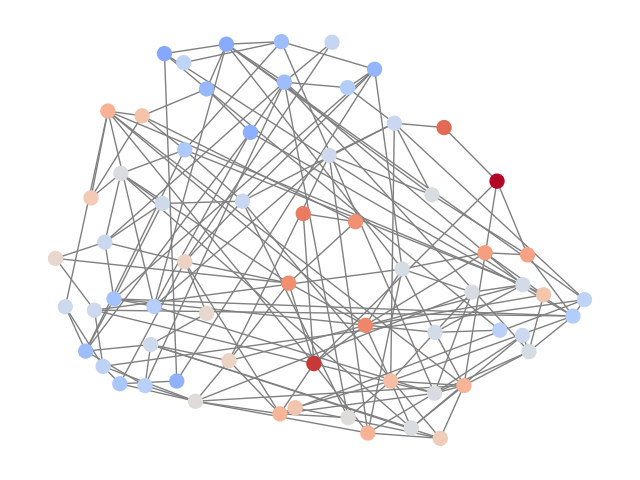}
\end{subfigure}
\begin{subfigure}[t]{0.19\textwidth}
\centering
\includegraphics[width=\linewidth, trim=0 20 0 0, clip]{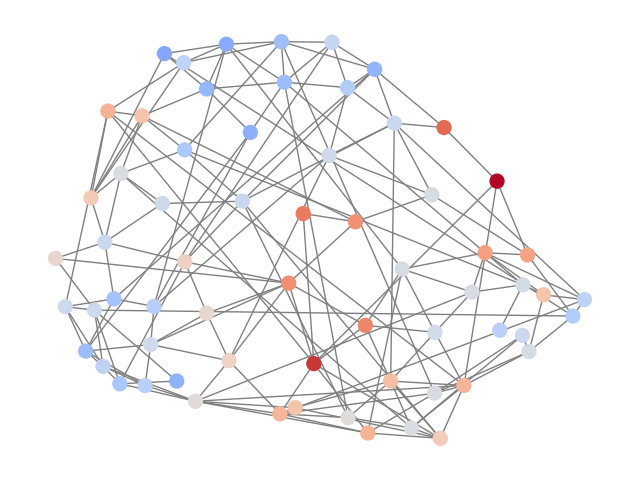}
\end{subfigure}
\begin{subfigure}[t]{0.19\textwidth}
\centering
\includegraphics[width=\linewidth, trim=0 20 0 0, clip]{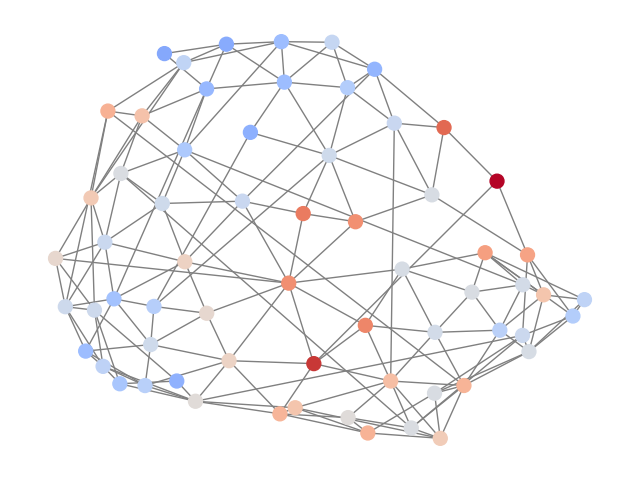}
\end{subfigure}
\begin{subfigure}[t]{0.19\textwidth}
\centering
\includegraphics[width=\linewidth, trim=0 20 0 0, clip]{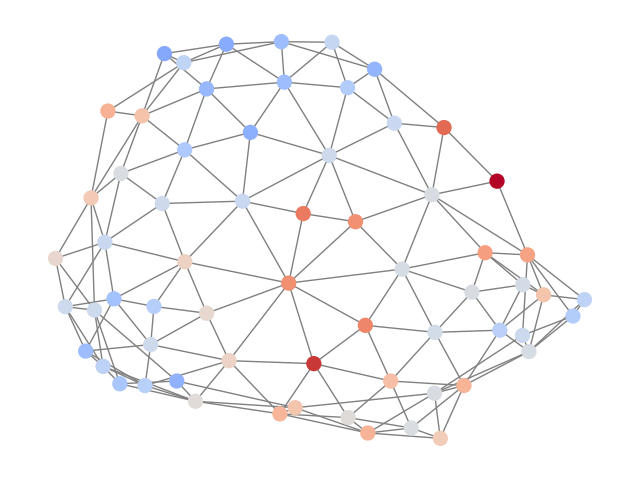}
\end{subfigure}

\vspace{0.5em} 

\begin{subfigure}[t]{0.19\textwidth}
\centering
\includegraphics[width=\linewidth, trim=0 20 0 0, clip]{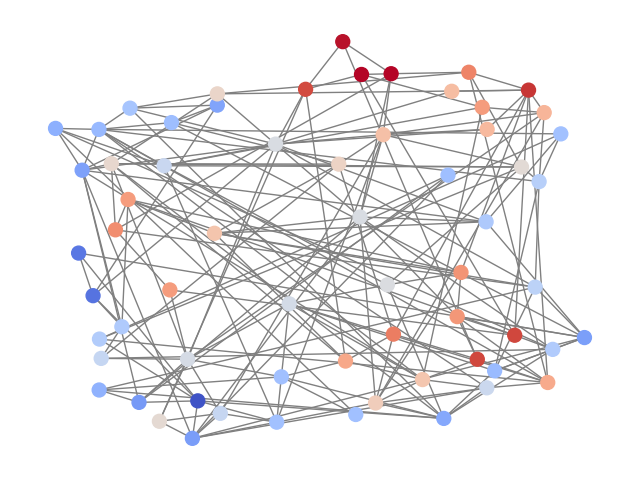}
\end{subfigure}
\begin{subfigure}[t]{0.19\textwidth}
\centering
\includegraphics[width=\linewidth, trim=0 20 0 0, clip]{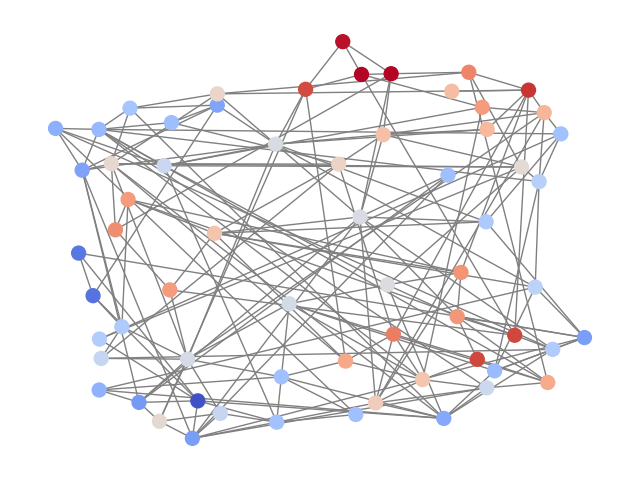}
\end{subfigure}
\begin{subfigure}[t]{0.19\textwidth}
\centering
\includegraphics[width=\linewidth, trim=0 20 0 0, clip]{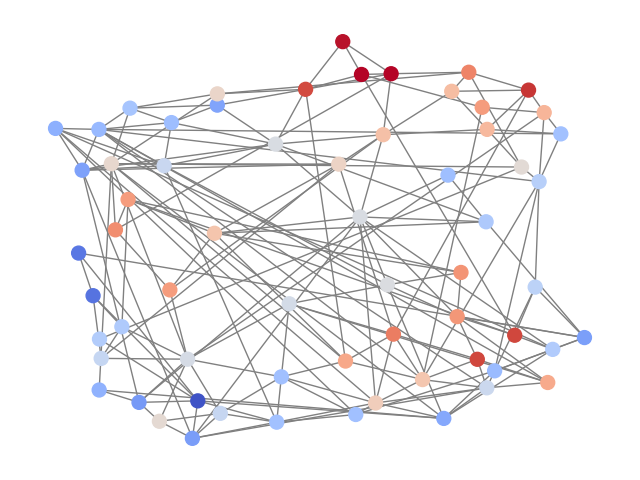}
\end{subfigure}
\begin{subfigure}[t]{0.19\textwidth}
\centering
\includegraphics[width=\linewidth, trim=0 20 0 0, clip]{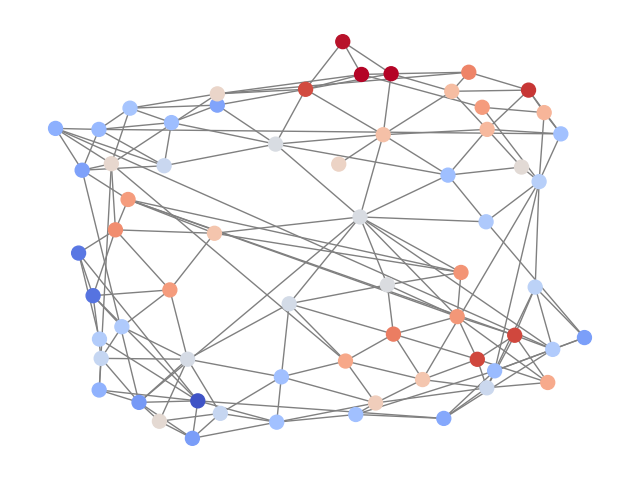}
\end{subfigure}
\begin{subfigure}[t]{0.19\textwidth}
\centering
\includegraphics[width=\linewidth, trim=0 20 0 0, clip]{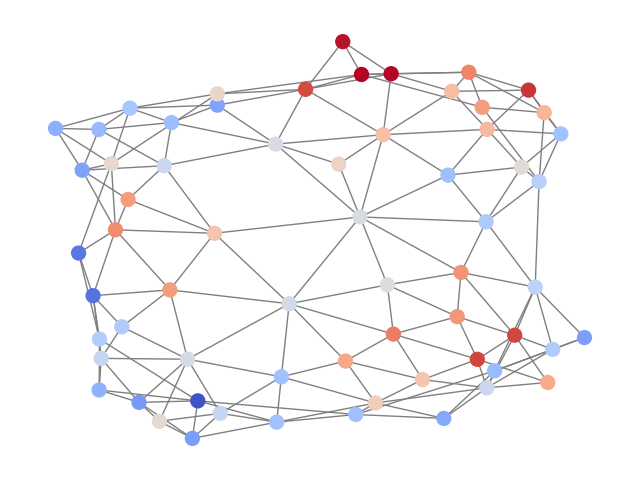}
\end{subfigure}

\vspace{0.5em} 

\begin{subfigure}[t]{0.19\textwidth}
\centering
\includegraphics[width=\linewidth, trim=0 20 0 0, clip]{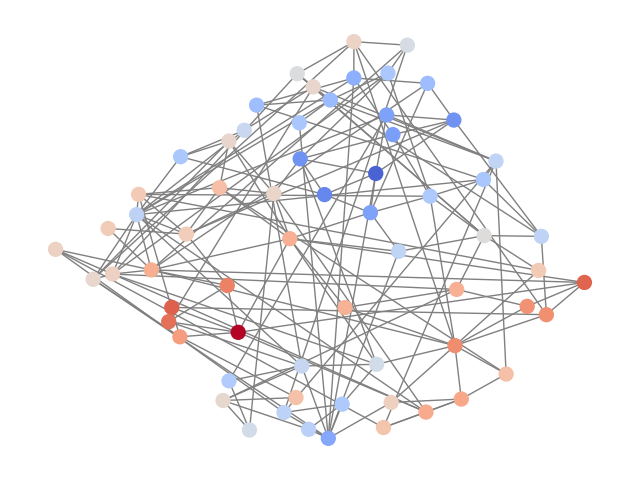}
\end{subfigure}
\begin{subfigure}[t]{0.19\textwidth}
\centering
\includegraphics[width=\linewidth, trim=0 20 0 0, clip]{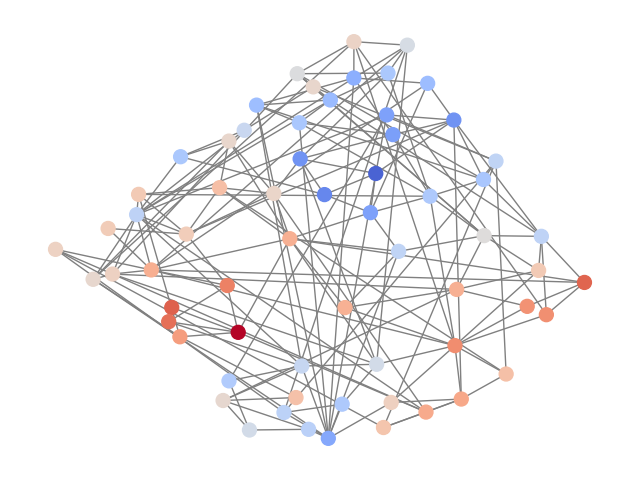}
\end{subfigure}
\begin{subfigure}[t]{0.19\textwidth}
\centering
\includegraphics[width=\linewidth, trim=0 20 0 0, clip]{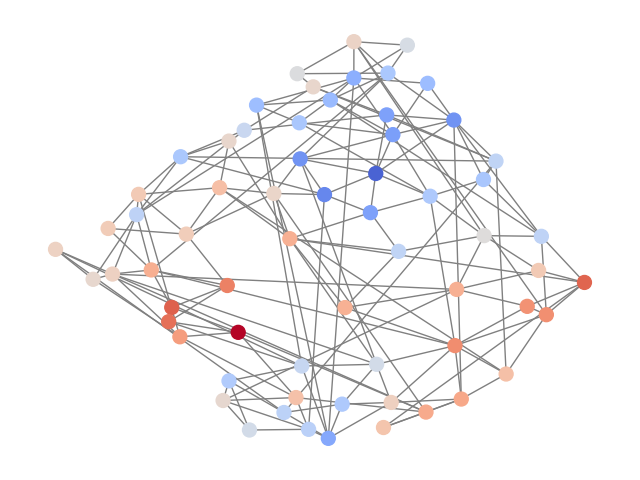}
\end{subfigure}
\begin{subfigure}[t]{0.19\textwidth}
\centering
\includegraphics[width=\linewidth, trim=0 20 0 0, clip]{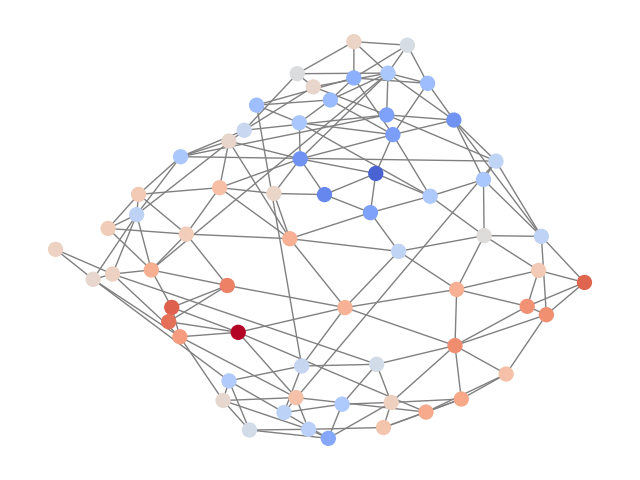}
\end{subfigure}
\begin{subfigure}[t]{0.19\textwidth}
\centering
\includegraphics[width=\linewidth, trim=0 20 0 0, clip]{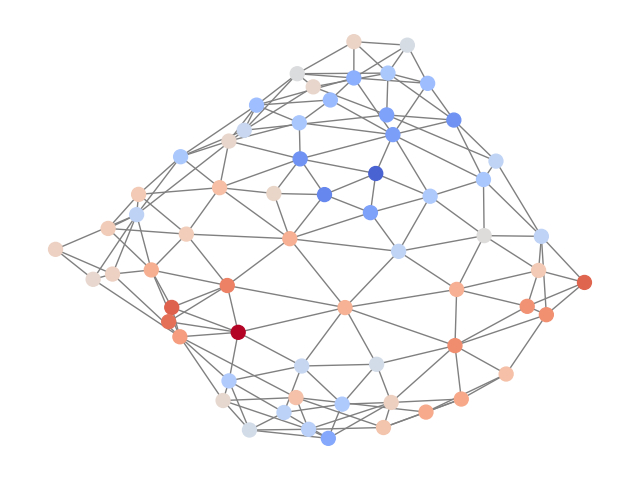}
\end{subfigure}

\vspace{1em}
\makebox[0.19\textwidth]{t=0}
\hfill\makebox[0.19\textwidth]{t=1/4}
\hfill\makebox[0.19\textwidth]{t=1/2}
\hfill\makebox[0.19\textwidth]{t=3/4}
\hfill\makebox[0.19\textwidth]{t=1}

\caption{
\textcolor{\revision}{
\textbf{Trajectory visualization of planar graph generated by DDSBM.} Note that \textbf{t=0} and \textbf{t=1} are the prior and generated graphs, respectively.
}
}
\label{figure:figure_planar_chain}
\end{figure}

\begin{figure}[h!]
\centering
\begin{subfigure}[t]{0.19\textwidth}
\centering
\includegraphics[width=\linewidth, trim=
0 20 0 0, clip]{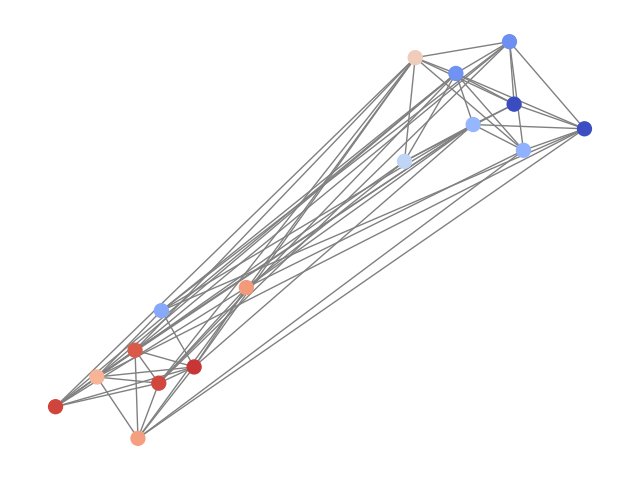}
\end{subfigure}
\begin{subfigure}[t]{0.19\textwidth}
\centering
\includegraphics[width=\linewidth, trim=0 20 0 0, clip]{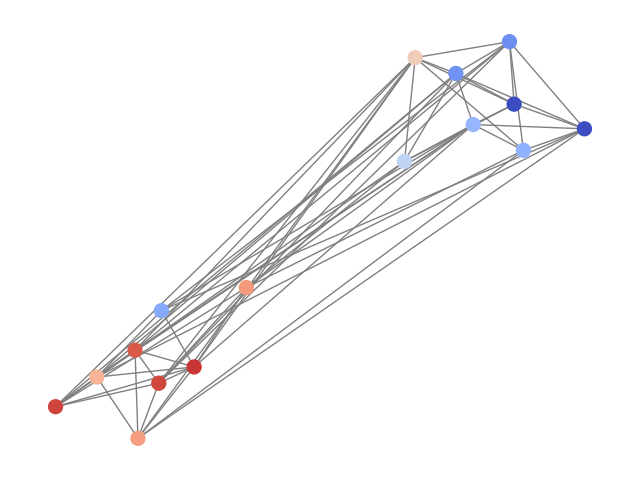}
\end{subfigure}
\begin{subfigure}[t]{0.19\textwidth}
\centering
\includegraphics[width=\linewidth, trim=0 20 0 0, clip]{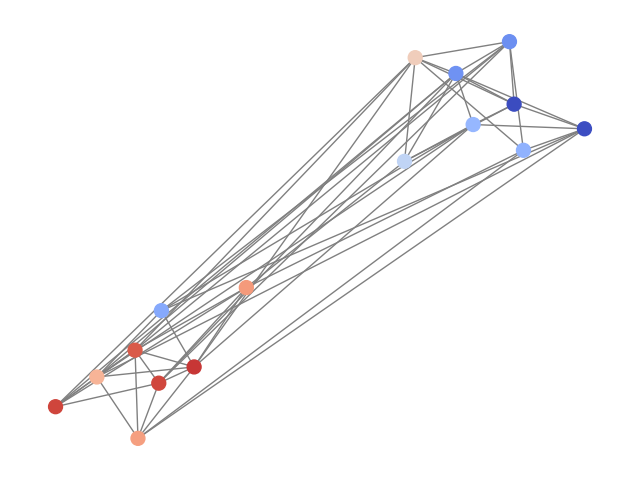}
\end{subfigure}
\begin{subfigure}[t]{0.19\textwidth}
\centering
\includegraphics[width=\linewidth, trim=0 20 0 0, clip]{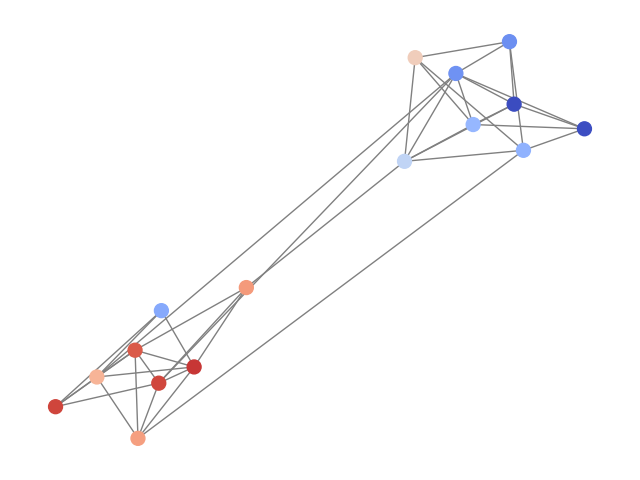}
\end{subfigure}
\begin{subfigure}[t]{0.19\textwidth}
\centering
\includegraphics[width=\linewidth, trim=0 20 0 0, clip]{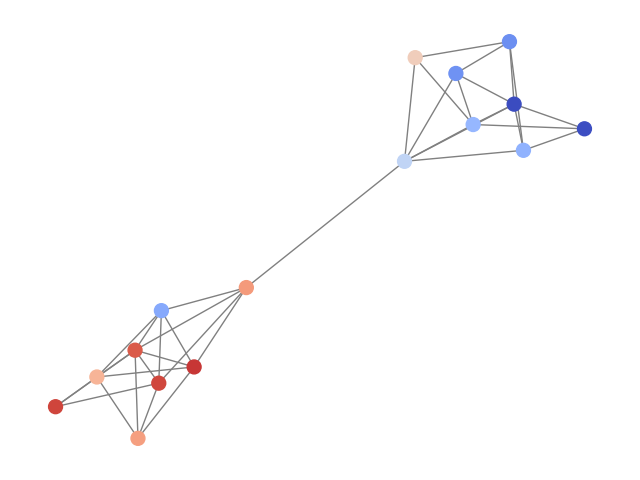}
\end{subfigure}

\vspace{0.5em} 

\begin{subfigure}[t]{0.19\textwidth}
\centering
\includegraphics[width=\linewidth, trim=0 20 0 0, clip]{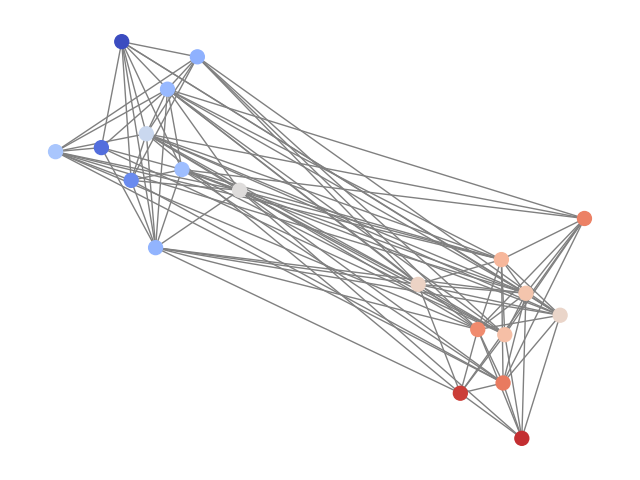}
\end{subfigure}
\begin{subfigure}[t]{0.19\textwidth}
\centering
\includegraphics[width=\linewidth, trim=0 20 0 0, clip]{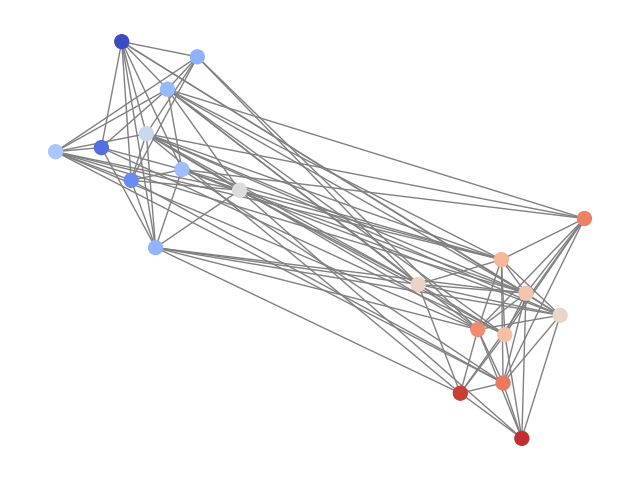}
\end{subfigure}
\begin{subfigure}[t]{0.19\textwidth}
\centering
\includegraphics[width=\linewidth, trim=0 20 0 0, clip]{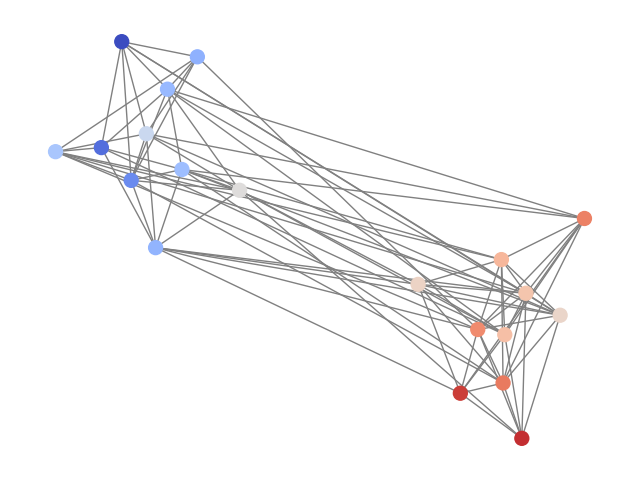}
\end{subfigure}
\begin{subfigure}[t]{0.19\textwidth}
\centering
\includegraphics[width=\linewidth, trim=0 20 0 0, clip]{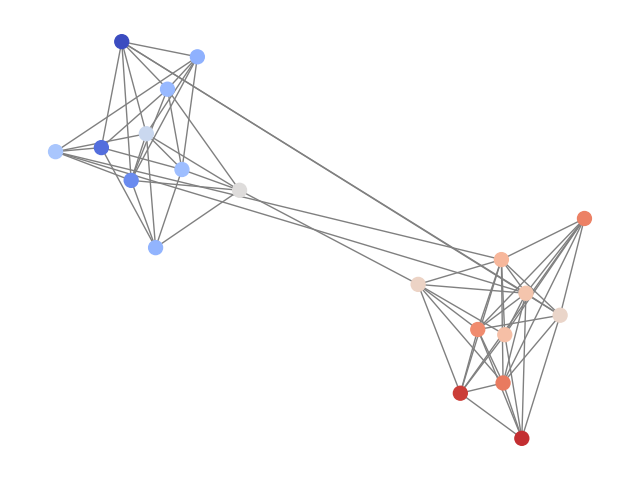}
\end{subfigure}
\begin{subfigure}[t]{0.19\textwidth}
\centering
\includegraphics[width=\linewidth, trim=0 20 0 0, clip]{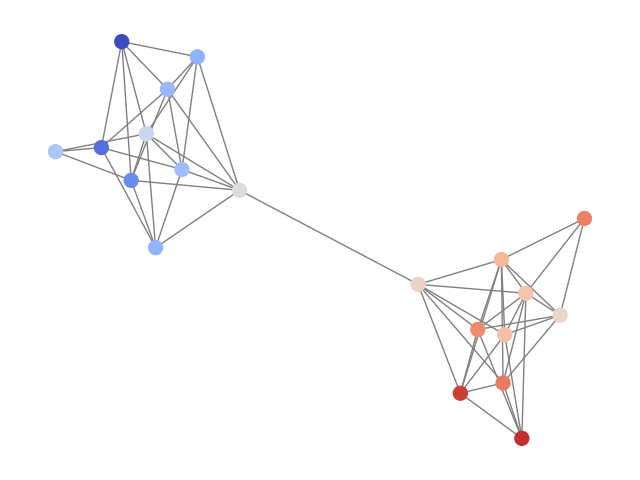}
\end{subfigure}

\vspace{0.5em} 

\begin{subfigure}[t]{0.19\textwidth}
\centering
\includegraphics[width=\linewidth, trim=0 20 0 0, clip]{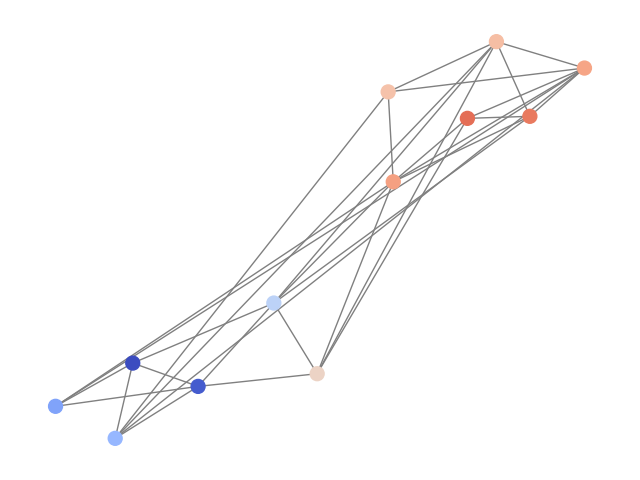}
\end{subfigure}
\begin{subfigure}[t]{0.19\textwidth}
\centering
\includegraphics[width=\linewidth, trim=0 20 0 0, clip]{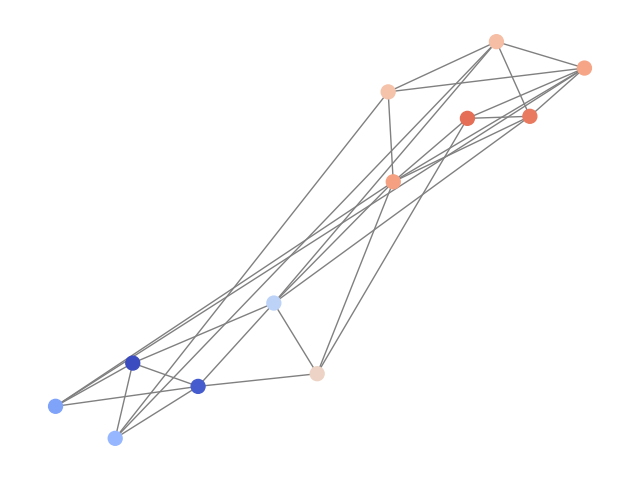}
\end{subfigure}
\begin{subfigure}[t]{0.19\textwidth}
\centering
\includegraphics[width=\linewidth, trim=0 20 0 0, clip]{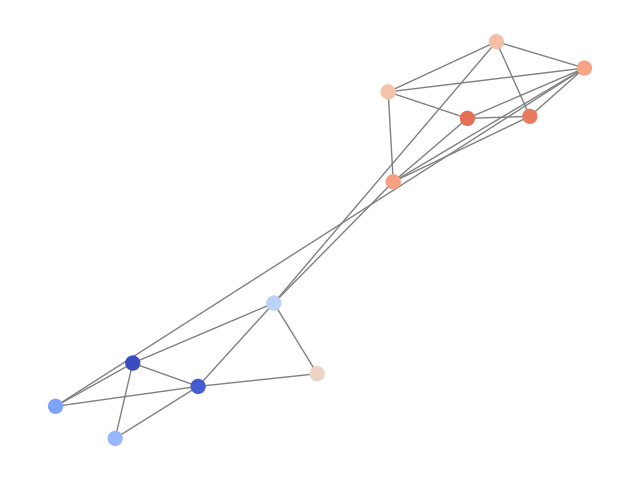}
\end{subfigure}
\begin{subfigure}[t]{0.19\textwidth}
\centering
\includegraphics[width=\linewidth, trim=0 20 0 0, clip]{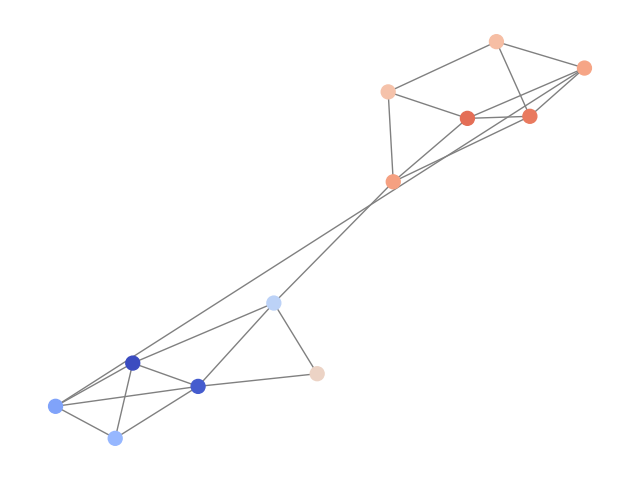}
\end{subfigure}
\begin{subfigure}[t]{0.19\textwidth}
\centering
\includegraphics[width=\linewidth, trim=0 20 0 0, clip]{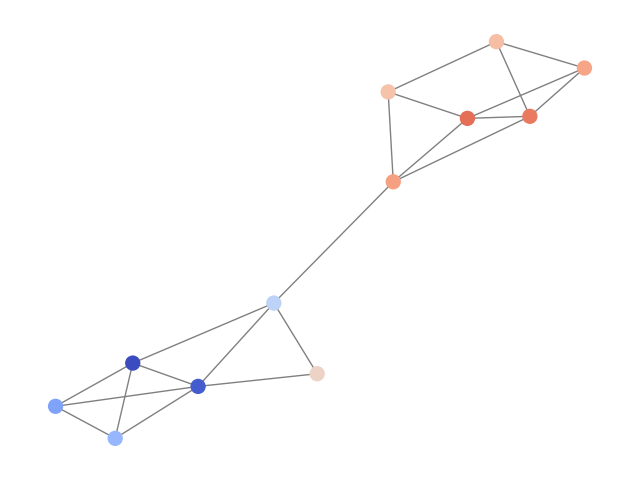}
\end{subfigure}

\vspace{0.5em} 

\begin{subfigure}[t]{0.19\textwidth}
\centering
\includegraphics[width=\linewidth, trim=0 20 0 0, clip]{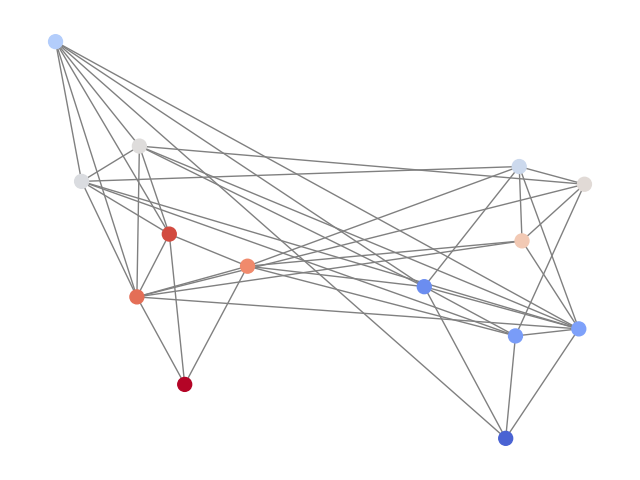}
\end{subfigure}
\begin{subfigure}[t]{0.19\textwidth}
\centering
\includegraphics[width=\linewidth, trim=0 20 0 0, clip]{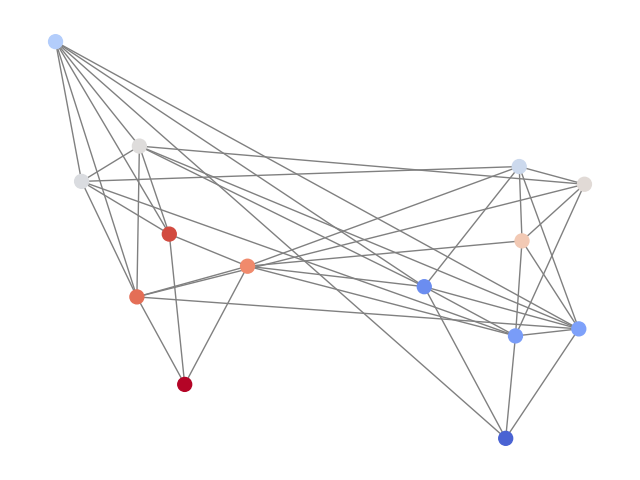}
\end{subfigure}
\begin{subfigure}[t]{0.19\textwidth}
\centering
\includegraphics[width=\linewidth, trim=0 20 0 0, clip]{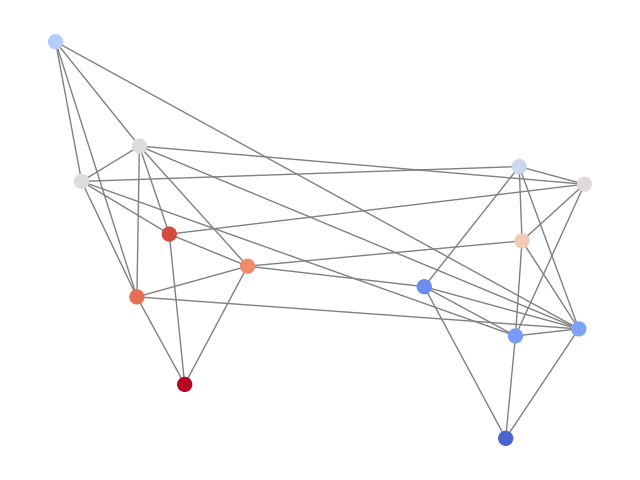}
\end{subfigure}
\begin{subfigure}[t]{0.19\textwidth}
\centering
\includegraphics[width=\linewidth, trim=0 20 0 0, clip]{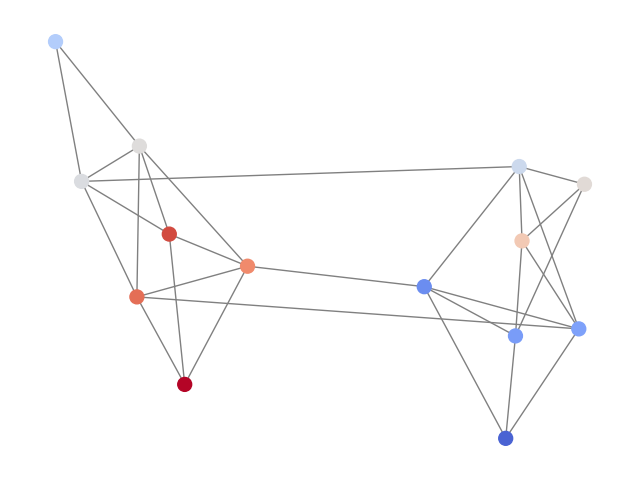}
\end{subfigure}
\begin{subfigure}[t]{0.19\textwidth}
\centering
\includegraphics[width=\linewidth, trim=0 20 0 0, clip]{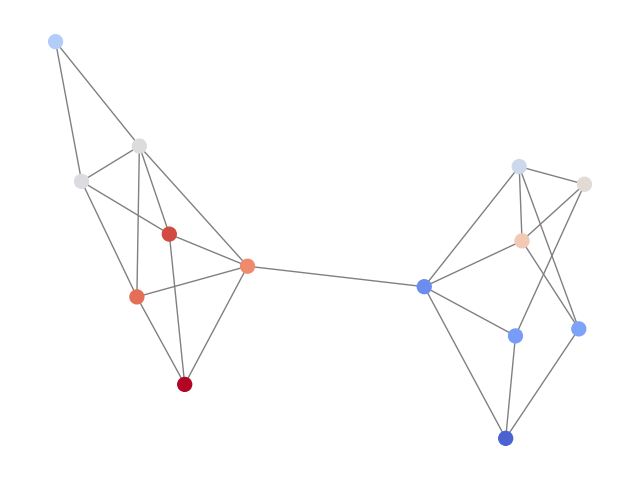}
\end{subfigure}

\vspace{1em}
\makebox[0.19\textwidth]{t=0}
\hfill\makebox[0.19\textwidth]{t=1/4}
\hfill\makebox[0.19\textwidth]{t=1/2}
\hfill\makebox[0.19\textwidth]{t=3/4}
\hfill\makebox[0.19\textwidth]{t=1}

\caption{
\textcolor{\revision}{
\textbf{Trajectory visualization of community-20 graph generated by DDSBM.}
Note that \textbf{t=0} and \textbf{t=1} are the prior and generated graphs, respectively.
}
}
\label{figure:figure_comm20_chain}
\end{figure}

\begin{figure}[h!]
\centering
\begin{subfigure}[t]{0.19\textwidth}
\centering
\includegraphics[width=\linewidth, trim=0 50 0 0, clip]{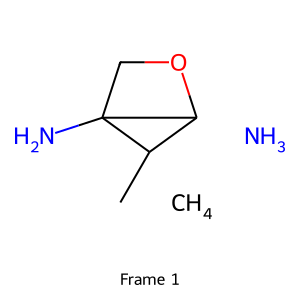}
\end{subfigure}
\begin{subfigure}[t]{0.19\textwidth}
\centering
\includegraphics[width=\linewidth, trim=0 50 0 0, clip]{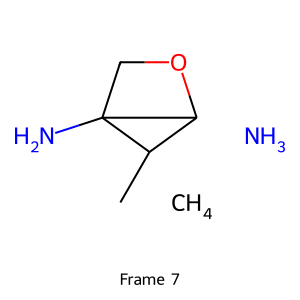}
\end{subfigure}
\begin{subfigure}[t]{0.19\textwidth}
\centering
\includegraphics[width=\linewidth, trim=0 50 0 0, clip]{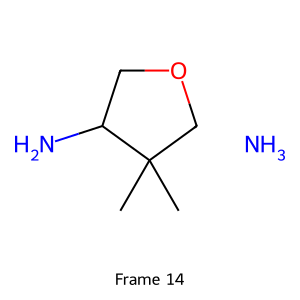}
\end{subfigure}
\begin{subfigure}[t]{0.19\textwidth}
\centering
\includegraphics[width=\linewidth, trim=0 50 0 0, clip]{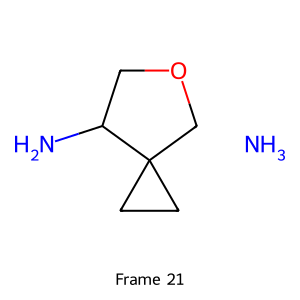}
\end{subfigure}
\begin{subfigure}[t]{0.19\textwidth}
\centering
\includegraphics[width=\linewidth, trim=0 50 0 0, clip]{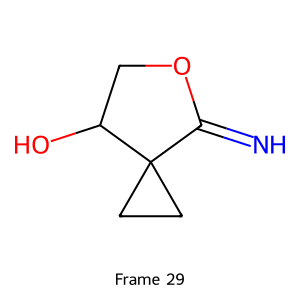}
\end{subfigure}

\vspace{0.5em} 

\begin{subfigure}[t]{0.19\textwidth}
\centering
\includegraphics[width=\linewidth, trim=0 50 0 0, clip]{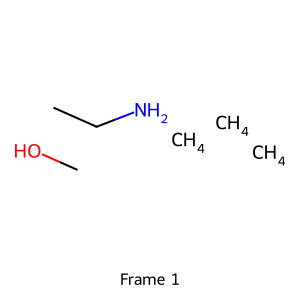}
\end{subfigure}
\begin{subfigure}[t]{0.19\textwidth}
\centering
\includegraphics[width=\linewidth, trim=0 50 0 0, clip]{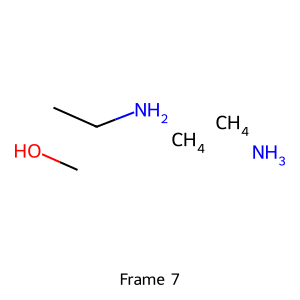}
\end{subfigure}
\begin{subfigure}[t]{0.19\textwidth}
\centering
\includegraphics[width=\linewidth, trim=0 50 0 0, clip]{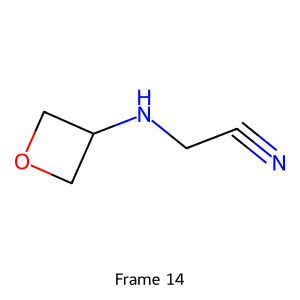}
\end{subfigure}
\begin{subfigure}[t]{0.19\textwidth}
\centering
\includegraphics[width=\linewidth, trim=0 50 0 0, clip]{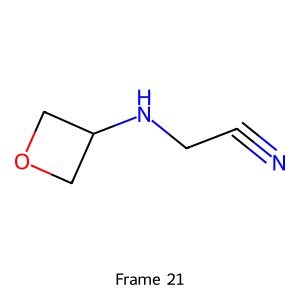}
\end{subfigure}
\begin{subfigure}[t]{0.19\textwidth}
\centering
\includegraphics[width=\linewidth, trim=0 50 0 0, clip]{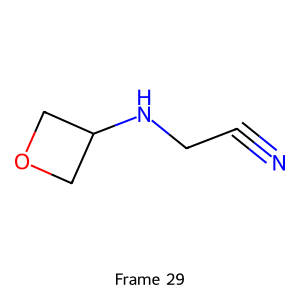}
\end{subfigure}

\vspace{0.5em} 

\begin{subfigure}[t]{0.19\textwidth}
\centering
\includegraphics[width=\linewidth, trim=0 50 0 0, clip]{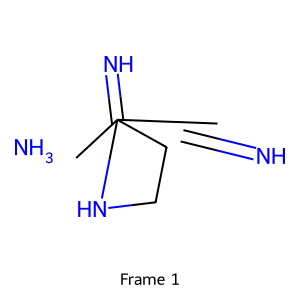}
\end{subfigure}
\begin{subfigure}[t]{0.19\textwidth}
\centering
\includegraphics[width=\linewidth, trim=0 50 0 0, clip]{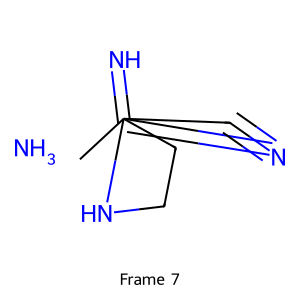}
\end{subfigure}
\begin{subfigure}[t]{0.19\textwidth}
\centering
\includegraphics[width=\linewidth, trim=0 50 0 0, clip]{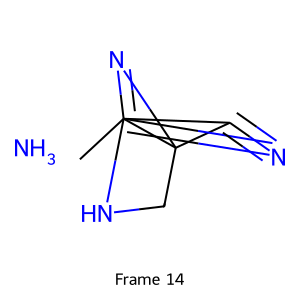}
\end{subfigure}
\begin{subfigure}[t]{0.19\textwidth}
\centering
\includegraphics[width=\linewidth, trim=0 50 0 0, clip]{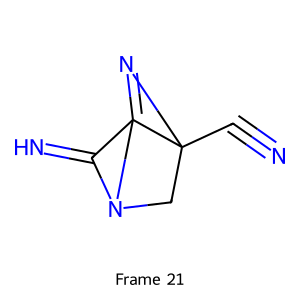}
\end{subfigure}
\begin{subfigure}[t]{0.19\textwidth}
\centering
\includegraphics[width=\linewidth, trim=0 50 0 0, clip]{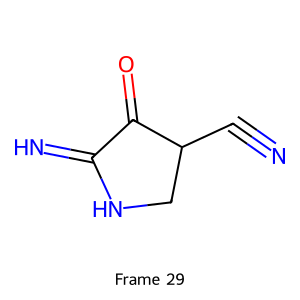}
\end{subfigure}

\vspace{0.5em} 

\begin{subfigure}[t]{0.19\textwidth}
\centering
\includegraphics[width=\linewidth, trim=0 50 0 0, clip]{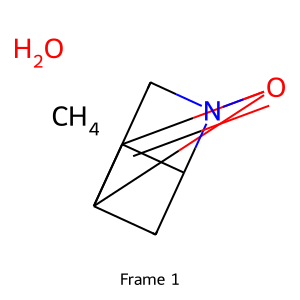}
\end{subfigure}
\begin{subfigure}[t]{0.19\textwidth}
\centering
\includegraphics[width=\linewidth, trim=0 50 0 0, clip]{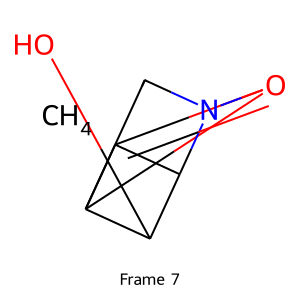}
\end{subfigure}
\begin{subfigure}[t]{0.19\textwidth}
\centering
\includegraphics[width=\linewidth, trim=0 50 0 0, clip]{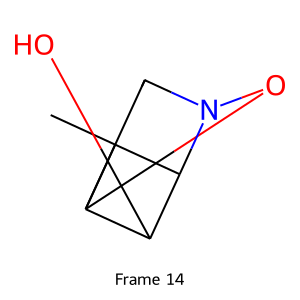}
\end{subfigure}
\begin{subfigure}[t]{0.19\textwidth}
\centering
\includegraphics[width=\linewidth, trim=0 50 0 0, clip]{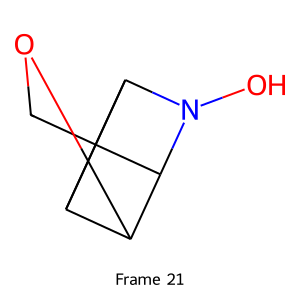}
\end{subfigure}
\begin{subfigure}[t]{0.19\textwidth}
\centering
\includegraphics[width=\linewidth, trim=0 50 0 0, clip]{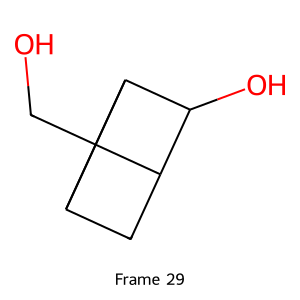}
\end{subfigure}

\vspace{1em}
\makebox[0.19\textwidth]{t=0}
\hfill\makebox[0.19\textwidth]{t=1/4}
\hfill\makebox[0.19\textwidth]{t=1/2}
\hfill\makebox[0.19\textwidth]{t=3/4}
\hfill\makebox[0.19\textwidth]{t=1}

\caption{
\textcolor{\revision}{
\textbf{Trajectory visualization of QM9 graph generated by DDSBM.}
Note that \textbf{t=0} and \textbf{t=1} are the prior and generated graphs, respectively.
}
}
\label{figure:figure_qm9_chain}
\end{figure}
\begin{figure*}[ht!]
 \centering
 \includegraphics[width=1.00\textwidth]{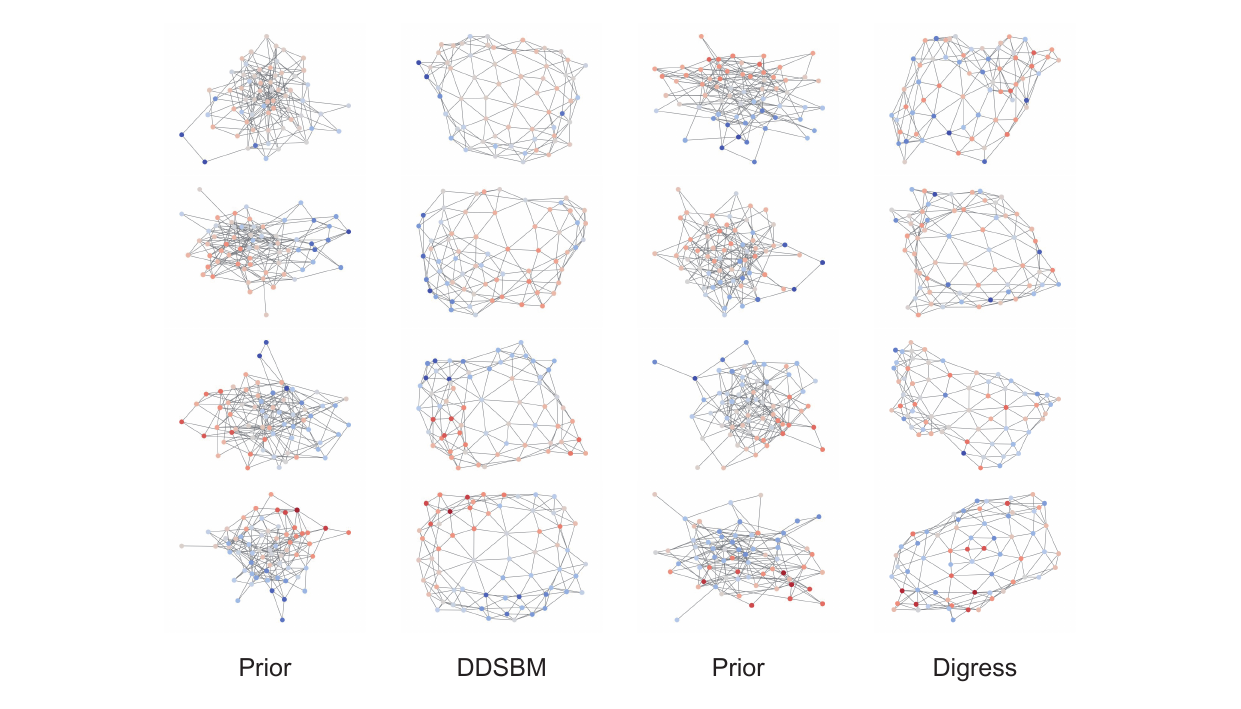}
 \caption{
 \textcolor{\revision}{
 \textbf{Visualization of planar graph generated by DDSBM and Digress with the source prior graph.} The color of each node is assigned based on the spectral features of the prior graph. Graphs generated by DDSBM tend to better preserve the graph structures of the source prior graph compared to those generated by Digress.
 }
 }
 \label{figure:figure_planar}
\end{figure*}

\subsection{\textcolor{\revision}{Comparison of Generation Performance Across Varying NFEs}}
\textcolor{\revision}{
We compared the generation performance, specifically validity and FCD, of DDSBM and DBM on the QM9 task across different the number of function evaluations (NFEs), set at 5, 10, 20, 25, 50, and 100.
The results show that DDSBM maintains performance reasonably well even at low NFEs, whereas DBM shows a more pronounced performance degradation under the same conditions (\cref{fig:nfe_experiment}). 
We attribute this to DDSBM finding optimal generation paths that can be predicted with fewer NFEs.
}
\begin{figure}[h!]
\centering
\begin{subfigure}[t]{0.48\textwidth}
\centering
\includegraphics[width=\linewidth]{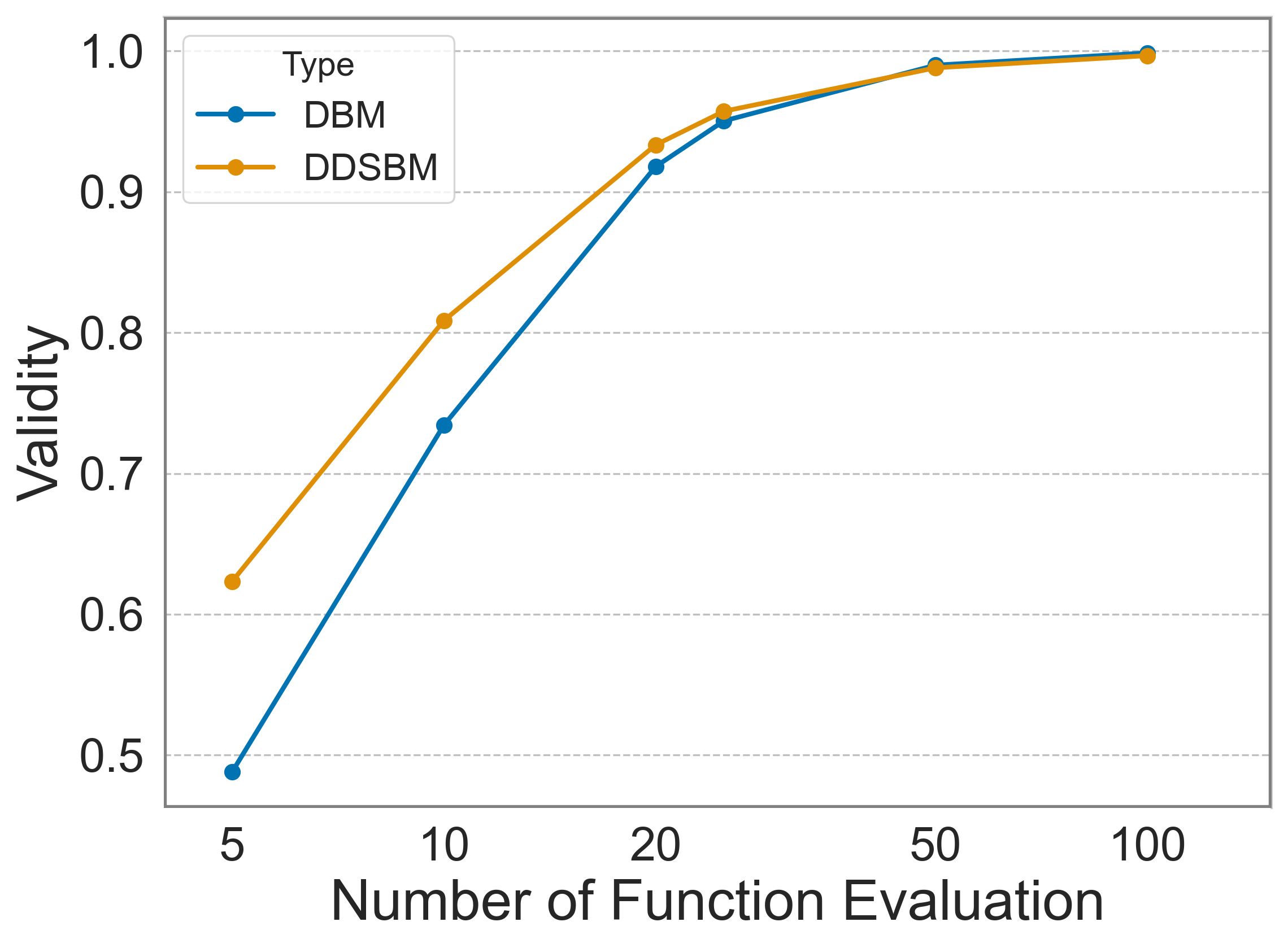}
\caption{\textcolor{\revision}{Validity vs NFE.}}
\label{fig:validity_nfe_experiment}
\end{subfigure}
\hfill
\begin{subfigure}[t]{0.48\textwidth}
\centering
\includegraphics[width=\linewidth]{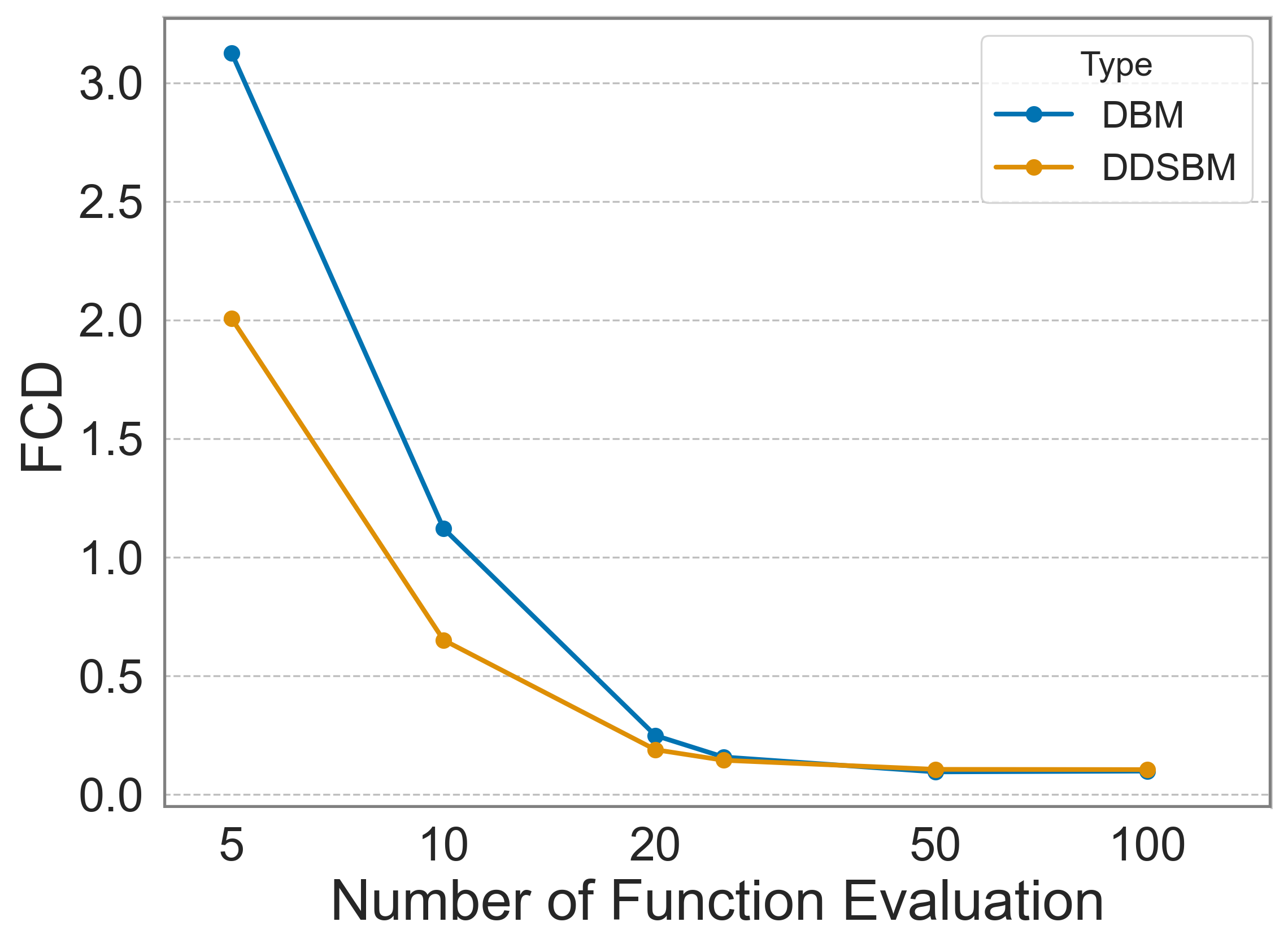}
\caption{\textcolor{\revision}{FCD vs NFE.}}
\label{fig:fcd_nfe_experiment}
\end{subfigure}
\caption{
\textcolor{\revision}{
\textbf{Comparison of DBM and DDSBM in terms of generation performance with respect to number of diffusion steps (NFE) in QM9 experiment.} Here, we used NFE of 5, 10, 15, 20, 25, and 100 timesteps.
}
}
\label{fig:nfe_experiment}
\end{figure}

\end{document}